\newtheorem{prop}{Proposition}
\newtheorem{theorem}{Theorem}
\newtheorem{lemma}{Lemma}
\newtheorem{coro}{Corollary}
\newtheorem{remark}{Remark}
\DeclareMathOperator{\fat}{fat}
\DeclareMathOperator{\Pdim}{Pdim}
\DeclareMathOperator{\VCdim}{VCdim}
\DeclareMathOperator*{\esssup}{ess\,sup}
\newcommand{\dd}{\mathrm{d}}
\newcommand {\RR} {\mathbb R}
\newcommand {\NN} {\mathbb N}
\newcommand {\nbf} {\textbf{n}}
\newcommand {\mbf} {\textbf{m}}
\newcommand{\eps}{\varepsilon}
\newcommand{\cA}{\mathcal{A}}
\newcommand{\cB}{\mathcal{B}}
\newcommand{\cX}{\mathcal{X}}
\newcommand{\R}{\mathbb{R}}
\newcommand{\N}{\mathbb{N}}
\newcommand{\bw}{\textbf{w}}
\newcommand{\normdot}{\| \cdot \|}
\def\w{\boldsymbol{\mathrm{w}}}
\DeclareMathOperator{\card}{Card}
\DeclareMathOperator{\sgn}{sgn}
\title{A general approximation lower bound in $L^p$ norm, with applications to feed-forward neural networks}
\author[1]{\bf El Mehdi Achour}
\author[1]{\bf Armand Foucault}
\author[2,1]{\bf Sébastien Gerchinovitz}
\author[1]{\bf François Malgouyres}
\affil[1]{Institut de Mathématiques de Toulouse ;  UMR5219 \\ Université de Toulouse ; CNRS  \\ UPS IMT F-31062 Toulouse Cedex 9, France}
\affil[2]{IRT Saint Exupéry, 3 rue Tarfaya, 31400 Toulouse, France}
\affil[ ]{}
\affil[ ]{\texttt{\{El\_mehdi.achour,armand.foucault,francois.malgouyres\} AT math.univ-toulouse.fr}}
\affil[ ]{\texttt{sebastien.gerchinovitz AT irt-saintexupery.com}}
\begin{document}

\maketitle

\begin{abstract}
We study the fundamental limits to the expressive power of neural networks. Given two sets $F$, $G$ of real-valued functions, we first prove a general lower bound on how well functions in $F$ can be approximated in $L^p(\mu)$ norm by functions in $G$, for any $p \geq 1$ and any probability measure $\mu$. The lower bound depends on the packing number of $F$, the range of $F$, and the fat-shattering dimension of $G$. We then instantiate this bound to the case where $G$ corresponds to a piecewise-polynomial feed-forward neural network, and describe in details the application to two sets $F$: Hölder balls and multivariate monotonic functions. Beside matching (known or new) upper bounds up to log factors, our lower bounds shed some light on the similarities or differences between approximation in $L^p$ norm or in sup norm, solving an open question by DeVore et al.~\cite{devore_hanin_petrova_2021}. Our proof strategy differs from the sup norm case and uses a key probability result of Mendelson \cite{971753}.
\end{abstract}

\section{Introduction}
\label{s:intro}

Neural networks are known for their great expressive power: in classification, they can interpolate arbitrary labels \cite{zhang2021understanding}, while in regression they have universal approximation properties \cite{Cyb89-UniversalApproximation,Hor91-ApproximationCapabilitiesMultilayer,LLPS93-NonpolynomialActivationFunction,kidger20-UniversalApproximationDeepNN}, with approximation rates that can outperform those of linear approximation methods \cite{yarotsky2018optimal,devore_hanin_petrova_2021}. Though the approximation problem is often only one part of the underlying learning problem (where generalization and optimization properties are also at stake), understanding the fundamental limits to the approximation properties of neural networks is key, both conceptually and for practical issues such as designing the right network architecture for the right problem.

\paragraph{Setting and related works.}
One way to quantify the expressive power of neural networks is through the following problem (some informal statements will be made more precise in the next sections). Let $G$ be the set of all functions $g_\bw: \cX \subset \R^d \to \R$ that can be represented by tuning the weights $\bw \in \R^W$ of a feed-forward neural network with a fixed architecture, and let $F$ be any set of real-valued functions on $\cX$. A natural question is: how well can functions $f \in F$ be approximated by functions $g_{\bw} \in G$? More precisely, given a norm $\normdot$ on functions, what is the order of magnitude of the (worst-case) \emph{approximation error of $F$ by $G$} defined by
\begin{equation}
    \label{eq:defapproxerror}
    \sup_{f \in F} \inf_{g_{\bw} \in G} \|f - g_{\bw}\| \;,
\end{equation}
and how small can it be given the numbers $W$, $L$ of weights and layers, and some properties of $F$?

Lower bounds on the approximation error \eqref{eq:defapproxerror} can be useful in several ways. They provide a limit to the best approximation accuracy that one can hope to achieve if the number of weights or layers of the network is constrained, and help design optimal architectures under these constraints. They also imply a lower bound on the minimal number of weights or layers to include in a network in order to approximate any function in $F$ with a given accuracy $\varepsilon$.

The case when $\normdot$ is the sup norm (defined by $\|f\|_{\infty} = \sup_{x \in \cX} |f(x)|$) is rather well understood at least in some special cases. For example, when $F$ is a Hölder ball of smoothness $s > 0$ (a.k.a. Hölder exponent) and the network uses the ReLU activation function, Yarotsky \cite{yarotsky2017error} derived a lower bound on \eqref{eq:defapproxerror} of the order of $W^{-2s/d}$, later refined to $(LW)^{-s/d}$ (up to log factors) by \cite{yarotsky2018optimal,yarotsky2020phase} when the depth of the network varies from $L=1$ to $L \approx W$. Using the bit extraction technique, these authors showed that these lower bounds are achievable (up to log factors) with a carefully designed ReLU network architecture. Refined results in terms of width and depth were obtained by \cite{shen21-optimalApproximationReLU} when $s \leq 1$, while some other activation functions were also studied in \cite{yarotsky2020phase}.

In this paper, we study \eqref{eq:defapproxerror} with the $L^p(\mu)$ norm, defined by $\|f\|_{L^p(\mu)} = \bigl(\int_X |f(x)|^p d \mu(x)\bigr)^{1/p}$, for $1 \leq p < +\infty$ and some probability measure $\mu$ on $\cX$. 
There is a qualitative difference between measuring the error in sup norm or in $L^p(\mu)$ norm, $p < +\infty$. In the former case, the error is small only if the approximation is good over the whole domain. In the latter case, the error can be small even if the approximation is inaccurate over a small portion of the domain.
Since the $L^p(\mu)$ approximation problem corresponds to approximating functions in $F$ in a more \say{average} sense than in sup norm, a natural question is whether the same accuracy can be achieved with a smaller network or not. Unfortunately, however, the proof strategies behind the lower bounds of \cite{yarotsky2017error,yarotsky2018optimal,yarotsky2020phase,shen21-optimalApproximationReLU} are specific to the sup norm (see Remark~\ref{rmk:proofsupnorm} in Section~\ref{section Hölder} for details).
DeVore et al. \cite{devore_hanin_petrova_2021} indeed commented: \say{When we move to the case $p < \infty$, the situation is even less clear [...] we cannot use the VC dimension theory for $L^p(\Omega)$ approximation. [...] What is missing vis-à-vis Problem $8.13$ is what the best bounds are and how we prove lower bounds for approximation rates in $L^p(\Omega)$, $p \neq \infty$.}

\textbf{Existing lower bounds in $L^p(\mu)$ norm.}
Several papers provided lower bounds in some special cases, under some restrictions on the set to approximate $F$, the neural network, the approximation metric, or the encoding map $f \in F \mapsto \bw(f) \in \R^W$.

When $F$ is a space of smoothness $s$,
a first result which is based on  \cite{devore1989optimal} states that when imposing the weights to depend continuously on the function to be approximated, one can not achieve a better approximation rate than $W^{-\frac{s}{d}}$.

For the same $F$, another result for $p=2$ and for activation functions which are continuous (\cite{MAIOROV199968,Maiorov1999OnTA}) proves a lower bound on the approximation of functions of smoothness $s$ on a compact of $\RR^d$, by one hidden-layer neural networks, of order $W^{-\frac{s}{d-1}}$.
A matching upper bound is proven for a particular activation function, which is sigmoidal but pathological (\cite{MAIOROV199981}).
For this same activation function, they prove that contrary to the one-hidden-layer case, there is no lower bound in the case of two-hidden-layer networks. The result is based on the Kolmogorov-Arnold superposition theorem.

In \cite{Siegel2021SharpBO}, the authors study approximation by shallow neural networks with bounded weights and activations of the form ReLU$^k$ for an integer $k$. They approximate the closure of the convex hull of shallow ReLU$^k$-neural networks with constrained weights. They obtain optimal lower bounds of order $W^{-\frac{1}{2} - \frac{2k + 1}{2d}}$ in any norm $\|\cdot\|_X$, where $X$ is a Banach space to which the approximation functions belong and such that these functions are uniformly bounded w.r.t. $\|\cdot\|_X$. Although we only consider approximation in $L^p(\mu)$ norm, our results complement the latter by addressing neural networks with unbounded weights and arbitrary depth, and general sets $F$.

Approximation lower bounds in $L^p(\mu)$ norm, $p \geq 1$, have also been studied in the quantized neural networks setting (networks with weights encoded with a fixed number of bits). In \cite{PETERSEN2018296}, under weak assumptions on the activation function, the authors prove a lower bound on the minimal number of nonzero weights $W$ that are required for a network to approximate a class of binary classifiers with $L^p$ error at most $\varepsilon$.
They show that $W$ is at least of the order $\varepsilon^{-\frac{p(d-1)}{\beta}} \log^{-1}_{2}\left(1/\varepsilon\right)$, where $\beta$ is a smoothness parameter. Later works including \cite{https://doi.org/10.48550/arxiv.1904.04789, guhring2021approximation} derive lower bounds for approximation by quantized networks in various norms.

\paragraph{Main contributions and outline of the paper.}

We prove lower bounds on the approximation error \eqref{eq:defapproxerror} in any $L^p(\mu)$ norm, for non-quantized networks of arbitrary depth, and general sets $F$. Our main contributions are the following.

In Section~\ref{sec: general approximation lower bound}, we first prove a general lower bound for any two sets $F$, $G$ of real-valued functions on a set~$\cX$ (Theorem~\ref{general_thm}). The lower bound depends on the packing number of $F$, the range of $F$, and the fat-shattering dimension of $G$.  We then derive a versatile corollary when $G$ corresponds to a piecewise-polynomial feed-forward neural network (Corollary~\ref{main_result}), solving the question by DeVore et al.~\cite{devore_hanin_petrova_2021}. Importantly, our proof strategy still relies on VC dimension theory, but differs from the sup norm case in using a key probability result of Mendelson \cite{971753}, to relate approximation in $L^p(\mu)$ norm with the fat-shattering dimension of $G$.

In Sections~\ref{section Hölder}--\ref{section monotonic} we apply this corollary to the approximation of two sets: Hölder balls and multivariate monotonic functions. Beside matching (known or new) upper bounds up to log factors, our lower bounds shed some light on the similarities or differences between approximation in $L^p$ norm or in sup norm. In particular, with ReLU networks, Hölder balls are not easier to approximate in $L^p$ norm than in sup norm. On the contrary, the approximation rate for multivariate monotonic functions depends on $p$.
In Section~\ref{sec:Barron}, we outline several other examples of function sets $F$ and $G$ for which the general lower bound (Theorem~\ref{general_thm}) can also be easily applied. Finally, some proofs are postponed to the supplement, while some details on other existing lower bound proof strategies are provided in the supplement, in Appendix~\ref{sec:relatedworks}.

\paragraph{Additional bibliographical remarks} There are many other related results that we did not mention to keep the focus on our specific approximation problem. For instance, depth separation results show that deep neural networks can approximate functions that cannot be as easily approximated by shallower networks (e.g., \cite{telgarsky2016benefits,vardi2021size}). Let us also mention the general results of \cite{YaBa-99-MinimaxRates}, which characterize minimax rates of estimation based on metric entropy conditions. Understanding the precise connections between these statistical results and our general approximation lower bound is an interesting question for the future.

\paragraph{Definitions and notation.}
We provide below some definitions and notation that will be used throughout the paper. We denote the set of positive integers $\{1,2,\ldots\}$ by $\N^*$ and let $\N := \N^* \cup\{0\}$. All sets considered in this paper will be assumed to be nonempty. 
We will not explicitly mention $\sigma$-algebras; for instance, by ``Let $\cX$ be a measurable space'' we mean that $\cX$ is a set implicitly endowed with a $\sigma$-algebra.

Let $p \in [1,+\infty]$ and $\cX$ be any measurable space endowed with a probability measure~$\mu$. For any measurable function $f:\cX \to \R$, the $L^p(\mu)$ norm of $f$ is defined by $\|f\|_{L^p(\mu)} = \bigl(\int_\cX |f(x)|^p d \mu(x)\bigr)^{1/p}$ (possibly infinite) if $p < +\infty$, and $\|f\|_{L^\infty(\mu)} = \esssup_{x \in \mathcal{X}}|f(x)|$. We will write $\lambda$ for the Lebesgue measure on $[0,1]^d$.

For any $\varepsilon > 0$, two functions $f_1, f_2$ are said to be \emph{$\varepsilon$-distant} in $\|\cdot\|$ if $\|f_1 - f_2\| > \varepsilon$.
Let $F$ be a set of functions from $\cX$ to $\RR$. A set $\{f_1, \ldots, f_N\} \subset F$ is said to be an $\varepsilon$-packing of $F$ in $\|\cdot\|$ (or just an $\varepsilon$-packing for short) if for any $i \neq j \in \{1, \ldots, N\}$, $f_i$ and $f_j$ are $\varepsilon$-distant in $\|\cdot\|$. The $\varepsilon$-packing number $M(\varepsilon,F,\|\cdot\|)$ is the largest cardinality of $\varepsilon$-packings (possibly infinite).

For $\gamma > 0$, we say that a set $S = \{x_1\, \ldots, x_N\} \subset \mathcal{X}$ is $\gamma$-\textit{shattered} by $F$ if there exists $r : S \rightarrow \mathbb{R}$ such that for any $E \subset S$, there exists $f \in F$ satisfying for all $i = 1, \ldots, N$, $f(x_i) \geq r(x_i) + \gamma$ if $x_i \in E$, and $f(x_i) \leq r(x_i) - \gamma$ if $x_i \notin E$. The $\gamma$-\textit{fat-shattering dimension of $F$}, denoted by $\fat_\gamma(F)$, is the largest number $N \geq 1$ for which there exists $S \subset \mathcal{X}$ of cardinality $N$ that is $\gamma$-shattered by $F$ (by convention, $\fat_\gamma(F)=0$ if no such set $S$ exists, while $\fat_\gamma(F)=+\infty$ if there exist sets $S$ of unbounded cardinality $N$). Similarly, we say that $S$ is \textit{pseudo-shattered} by $F$ if there exists $r : S \rightarrow \mathbb{R}$ such that for any $E \subset S$, there exists $f \in F$ satisfying for all $i=1, \ldots, N$, $f(x_i) \geq r(x_i)$ if $x_i \in E$, and $f(x_i) < r_i$ if $x_i \notin E$. The \textit{pseudo-dimension} $\Pdim(F)$ is the largest number $N \geq 1$ for which there exists $S \subset \mathcal{X}$ of cardinality $N$ that is pseudo-shattered by $F$ (same conventions).\footnote{By definition, note that $\gamma \mapsto \fat_\gamma(F)$ is non-increasing and that $\fat_\gamma(F) \leq \Pdim(F)$ for all $\gamma>0$.}

A formal definition of feed-forward neural networks is recalled in Appendix~\ref{sec:deffeed-forward}. In short, in this paper, a \emph{feed-forward neural network architecture} $\mathcal{A}$ of depth $L \geq 1$ is a directed acyclic graph with $d \geq 1$ input neurons, $L-1$ hidden layers (if $L \geq 2$), and an output layer with only one neuron. Skip connections are allowed, i.e., there can be connections between non-consecutive layers. Given an activation function $\sigma:\R \to \R$, a feed-forward neural network architecture $\mathcal{A}$, and a vector $\bw \in \R^W$ of weights assigned to all edges and non-input neurons (linear coefficients and biases), the network computes a function $g_{\bw}: \R^d \to \R$ defined by recursively computing affine transformations for each hidden or output neuron, and then applying the activation function $\sigma$ for hidden neurons only (see Appendix~\ref{sec:deffeed-forward} for more details). Finally, we define $H_{\mathcal{A}} := \{g_{\bw}: \bw \in \R^W\}$ to be the set of all functions that can be represented by tuning all the weights assigned to the network.

A function $\sigma:\R \to \R$ is \emph{piecewise-polynomial} on $K \geq 2$ pieces, with maximal degree $\nu \in \N$, if there exists a partition $I_1, \ldots,I_{K}$ of $\R$ into $K$ nonempty intervals, such that $\sigma$ restricted on each $I_j$ is polynomial with degree at most $\nu$ (in particular, $\sigma$ can be discontinuous).

\section{A general approximation lower bound in $L^p(\mu)$ norm}\label{sec: general approximation lower bound}

In this section, we provide our two main results: a general lower bound on the $L^p(\mu)$ approximation error of $F$ by $G$, i.e., $\sup_{f \in F}\inf_{g \in G}\|f - g\|_{L^p(\mu)}$, and a corollary when $G$ corresponds to a feed-forward neural network with a piecewise-polynomial activation function. The weak assumptions on $F$ make the last result applicable to a wide range of cases of interest, as shown in Sections~\ref{section Hölder}--\ref{sec:Barron}.

\subsection{Main results}

Our generic lower bound reads as follows, and is proved in Section \ref{main_result_proof}. We follow the conventions $0 \times \log^2(0) = 0$ and $P^{-\frac{1}{\alpha}}\log^{-\frac{2}{\alpha}}(P) = +\infty$ when $P=1$. \\[0.1cm]

\begin{theorem}
\label{general_thm}
Let $1 \leq p < +\infty$ and $\cX$ be a measurable space endowed with a probability measure~$\mu$. Let $F$, $G$ be two sets of measurable functions from $\cX$ to $\R$, such that all functions in $F$ have the same range $[a,b]$ for some $a<b$, and such that $\fat_\gamma(G) < +\infty$ for all $\gamma>0$. Then, there exists a constant $c>0$ depending only on $p$ such that
\begin{align}
    \sup_{f \in F} \inf_{g \in G} \|f - g\|_{L^p(\mu)}
    \geq  \inf\left\{\varepsilon>0: \log M\!\left(3\varepsilon, F, \|\cdot\|_{L^p(\mu)}\right) \leq c \fat_{{\frac{\varepsilon}{32}}}(G) \log^2\!\left(\frac{2 \fat_{{\frac{\varepsilon}{32}}}(G)}{\varepsilon/(b-a)}\right)\right\}. \label{lower bound thm general}
\end{align}
In particular, if $\log M\bigl(\eps, F, \|\cdot\|_{L^p(\mu)}\bigr) \geq  c_0 \eps^{-\alpha}$ for some $c_0,\eps_0,\alpha>0$ and all $\eps \leq \eps_0$, and if $\Pdim(G) < +\infty$, then there exist constants $c_1,\eps_1>0$ depending only on $b-a$, $p$, $c_0$, $\varepsilon_0$ and $\alpha$ such that
\begin{align}
    \label{main_result_equation}
    \sup_{f \in F} \inf_{g \in G} \|f - g\|_{L^p(\mu)} \geq \min\left\{\varepsilon_1, \ c_1 \Pdim(G)^{-\frac{1}{\alpha}}\log^{-\frac{2}{\alpha}}\bigl(\Pdim(G)\bigr)\right\} \;.
\end{align}
\end{theorem}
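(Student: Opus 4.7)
The plan is to proceed by contraposition: assume $\sup_{f\in F}\inf_{g\in G}\|f-g\|_{L^p(\mu)} < \varepsilon$ and show that this forces $\log M(3\varepsilon,F,\|\cdot\|_{L^p(\mu)})$ to satisfy the upper bound on the right-hand side of \eqref{lower bound thm general}. First I would fix a $3\varepsilon$-packing $\{f_1,\dots,f_N\}$ of $F$ in $L^p(\mu)$ of maximal cardinality $N$, and for each $f_i$ pick an approximant $g_i\in G$ with $\|f_i-g_i\|_{L^p(\mu)}< \varepsilon$. Since $\|f_i-f_j\|_{L^p(\mu)}>3\varepsilon$, the triangle inequality yields $\|g_i-g_j\|_{L^p(\mu)}>\varepsilon$ for $i\neq j$, so the $g_i$'s form an $\varepsilon$-packing of $G$ in $L^p(\mu)$, whence $M(\varepsilon,G,\|\cdot\|_{L^p(\mu)})\geq N$.

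To control this packing number, I would first reduce to a uniformly bounded class. Since each $f_i$ is $[a,b]$-valued, replacing $g_i$ by its truncation $\widetilde g_i := \max(a,\min(b,g_i))$ can only decrease $\|f_i-g_i\|_{L^p(\mu)}$, so the $\widetilde g_i$'s remain $\varepsilon$-separated. Moreover, truncation is $1$-Lipschitz, so it cannot increase the fat-shattering dimension of $G$ at any scale. I would then apply Mendelson's packing inequality \cite{971753} to the bounded class $\widetilde G:=\{\widetilde g:g\in G\}$: there exists $c>0$ depending only on $p$ such that
\begin{equation*}
\log M(\varepsilon,\widetilde G,\|\cdot\|_{L^p(\mu)}) \;\leq\; c\,\fat_{\varepsilon/32}(\widetilde G)\,\log^2\!\!\left(\frac{2\,\fat_{\varepsilon/32}(\widetilde G)}{\varepsilon/(b-a)}\right) \;\leq\; c\,\fat_{\varepsilon/32}(G)\,\log^2\!\!\left(\frac{2\,\fat_{\varepsilon/32}(G)}{\varepsilon/(b-a)}\right).
\end{equation*}
Chaining this with $\log N \leq \log M(\varepsilon,\widetilde G,\|\cdot\|_{L^p(\mu)})$ proves the contrapositive of \eqref{lower bound thm general}.

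For the second assertion, I would instantiate \eqref{lower bound thm general} using the polynomial packing lower bound $\log M(\varepsilon,F,\|\cdot\|_{L^p(\mu)}) \geq c_0\varepsilon^{-\alpha}$ for $\varepsilon\leq\varepsilon_0$ together with the crude bound $\fat_{\varepsilon/32}(G)\leq \Pdim(G)$. The threshold implicitly defined by \eqref{lower bound thm general} then dominates any $\varepsilon$ (with $3\varepsilon\leq \varepsilon_0$) for which $c_0(3\varepsilon)^{-\alpha} > c\,\Pdim(G)\,\log^2\bigl(2\Pdim(G)(b-a)/\varepsilon\bigr)$. Solving this inequality by a standard sub-logarithmic inversion yields a lower bound of the form $\varepsilon \geq c_1 \Pdim(G)^{-1/\alpha}\log^{-2/\alpha}(\Pdim(G))$ with $c_1$ depending only on $b-a$, $p$, $c_0$, $\varepsilon_0$, $\alpha$; taking the minimum with an absolute constant $\varepsilon_1$ handles the regime where the condition $3\varepsilon\leq\varepsilon_0$ would otherwise be violated.

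The main technical obstacle is twofold. First, one must transfer Mendelson's packing inequality (typically stated for uniformly bounded classes) to the possibly unbounded class $G$ via truncation, while tracking the exact scale at which the fat-shattering dimension enters---this is the source of the constant $32$ and of the ratio $\varepsilon/(b-a)$ inside the logarithm. Second, the sub-logarithmic inversion for the second claim must be performed carefully so that the resulting constant $c_1$ depends only on the listed parameters and not on $\Pdim(G)$ itself.
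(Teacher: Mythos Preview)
Your proposal is correct and follows essentially the same route as the paper: the packing transfer from $F$ to (a truncated version of) $G$ via the triangle inequality, Mendelson's bound on the truncated class, the comparison $\fat_\gamma(\widetilde G)\leq\fat_\gamma(G)$, and for Part~2 the substitution $\fat_{\varepsilon/32}(G)\leq\Pdim(G)$ followed by a sub-logarithmic inversion. The only cosmetic differences are the order of operations---you pack first and then truncate the individual approximants, whereas the paper truncates $G$ at the level of the sup-inf (Lemma~\ref{lemma_supinf_G}) before invoking the packing comparison (Lemma~\ref{lemma_entropy_G_vs_F})---and the paper spells out explicitly the monotonicity of $t\mapsto t\log^2\bigl(2(b-a)t/\varepsilon\bigr)$ on $[\varepsilon/(2(b-a)),\infty)$ that is needed to pass from $\fat_{\varepsilon/32}(\widetilde G)$ to $\fat_{\varepsilon/32}(G)$.
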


The first lower bound \eqref{lower bound thm general} is generic but requires solving an inequation.\footnote{\label{ft:solutioninequation}Note that any $\eps \geq (b-a)/3$ is a solution to this inequation, since $\log M\bigl(3\eps, F, \|\cdot\|_{L^p(\mu)}\bigr) = \log(1)=0$ (because all functions in $F$ are $[a,b]$-valued) and $c \fat_{{\frac{\varepsilon}{32}}}(G) \geq 0$. Therefore, the right-hand side of \eqref{lower bound thm general} is at most $(b-a)/3$.} In \eqref{main_result_equation} we solve this inequation when $\log M\bigl(\eps, F, \|\cdot\|_{L^p(\mu)}\bigr)$ grows at least polynomially in $1/\eps$ (which is typical of nonparametric sets) and when $G$ has finite pseudo-dimension $\Pdim(G)$. Though we will restrict our attention to such cases in all subsequent sections, we stress that the first bound should have broader applications. A first example is when $\Pdim(G)=+\infty$ but $\fat_{\gamma}(G)<+\infty$ for all $\gamma>0$ (e.g., for RKHS \cite{belkin18a-RKHS}). The first bound should also be useful to prove (slightly) tighter lower bounds when $\log M\bigl(\eps, F, \|\cdot\|_{L^p(\mu)}\bigr)$ has a (slightly) different dependency on $1/\eps$ (e.g., of the order of $\varepsilon^{-\alpha}\log^\beta\left(1/\varepsilon\right)$ as when $F$ is the set of all multivariate cumulative distribution functions \cite{blei07-metricEntropyHighDim}).

In the rest of the paper, we focus on the important special case when the approximation set $G$ is the set $H_{\mathcal{A}}$ of all real-valued functions that can be represented by tuning the weights of a feed-forward neural network with fixed architecture $\mathcal{A}$ and a piecewise-polynomial activation function.
By combining Theorem~\ref{general_thm} with known bounds on the pseudo-dimension \cite{JMLR:v20:17-612}, we obtain the following corollary, which bounds the approximation error in terms of the number $W$ of weights and the depth~$L$ (i.e., the number of hidden and output layers). The proof is postponed to Appendix~\ref{sec:proofcorollary-main}.\\[0.2cm]

\begin{coro}
\label{main_result}
Let $1 \leq p < +\infty$, $d \geq 1$ and $\cX$ be a measurable subset of $\R^d$ endowed with a probability measure~$\mu$. Let $F$ be a set of measurable functions from $\mathcal{X}$ to $[a,b]$ (for some real numbers $a<b$), such that ${\log M\bigl(\eps, F, \|\cdot\|_{L^p(\mu)}\bigr) \geq  c_0 \eps^{-\alpha}}$ for some $c_0,\eps_0,\alpha>0$ and all $\eps \leq \eps_0$.

Let $\sigma:\R \to \R$ be any piecewise-polynomial activation function of maximal degree $\nu \in \mathbb{N}$ on $K \geq 2$ pieces. Then, there exist $W_{\min} \in \mathbb{N}^*$ and $c_1,c_2,c_3>0$ such that, for any $W \geq W_{\min}$, any $L \geq 1$, and any fixed feed-forward neural network architecture $\mathcal{A}$ of depth $L$ with $W$ weights, the set $H_{\mathcal{A}}$ of all real-valued functions on $\mathcal{X}$ that can be represented by the network (cf. Section~\ref{s:intro}) satisfies
\begin{align}
\label{eq:lowerbound-feed-forward}
    \sup_{f \in F} \inf_{g \in H_{\mathcal{A}}} \|f - g\|_{L^p(\mu)} \geq
    \left\{
    \begin{array}{l l}
         c_1  W^{-\frac{2}{\alpha}} \log^{-\frac{2}{\alpha}}(W) & \text{ if } \nu \geq 2 \,, \\
         c_2  (LW)^{-\frac{1}{\alpha}} \log^{-\frac{3}{\alpha}}(W)& \text{ if } \nu=1 \,, \\
         c_3  W^{-\frac{1}{\alpha}} \log^{-\frac{3}{\alpha}}(W)& \text{ if } \nu=0 \,.
    \end{array}
    \right.
\end{align}
\end{coro}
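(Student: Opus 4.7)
The plan is to apply the second assertion of Theorem \ref{general_thm} with $G = H_{\mathcal{A}}$. The packing hypothesis on $F$ is exactly what the corollary assumes, so the entire task reduces to controlling $\Pdim(H_{\mathcal{A}})$ from above as a function of $W$, $L$, $\nu$, $K$, and then simplifying $\Pdim(G)^{-1/\alpha}\log^{-2/\alpha}(\Pdim(G))$ in each of the three regimes of $\nu$. All constants in $a,b,p,c_0,\eps_0,\alpha,\nu,K$ will be absorbed into $c_1,c_2,c_3$, and $W_{\min}$ will be chosen so large that the second argument of the $\min$ in \eqref{main_result_equation} falls below $\eps_1$, ensuring that the $\min$ is attained at the nontrivial term.

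For the pseudo-dimension upper bounds I would invoke \cite{JMLR:v20:17-612} (together with the classical Goldberg--Jerrum and Baum--Haussler style bounds from which the piecewise-polynomial and piecewise-constant cases are derived). For a feed-forward architecture $\mathcal{A}$ of depth $L$ with $W$ weights and a piecewise-polynomial activation of maximal degree $\nu$ on $K \geq 2$ pieces, there exists a constant $C$ depending only on $\nu$ and $K$ such that
\begin{equation*}
\Pdim(H_{\mathcal{A}}) \leq C W^2 \text{ if } \nu \geq 2, \qquad \Pdim(H_{\mathcal{A}}) \leq C W L \log(W) \text{ if } \nu = 1, \qquad \Pdim(H_{\mathcal{A}}) \leq C W \log(W) \text{ if } \nu = 0.
\end{equation*}

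Plugging each of these three bounds into \eqref{main_result_equation} and simplifying then yields the three claimed rates. In the $\nu \geq 2$ case, $\Pdim(H_{\mathcal{A}})^{-1/\alpha}\log^{-2/\alpha}(\Pdim(H_{\mathcal{A}})) \gtrsim W^{-2/\alpha}\log^{-2/\alpha}(CW^2) \gtrsim W^{-2/\alpha}\log^{-2/\alpha}(W)$, which is the first line of \eqref{eq:lowerbound-feed-forward}. The $\nu=1$ and $\nu=0$ cases proceed identically: the extra logarithm already present inside $\Pdim(H_{\mathcal{A}})$ combines with $\log^{-2/\alpha}$ from Theorem~\ref{general_thm} (using $\log(WL\log W)=O(\log W)$, which holds up to choosing $W_{\min}$ appropriately) to produce the $\log^{-3/\alpha}(W)$ exponent. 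The resulting lower bound is $c_i$ times $W^{-2/\alpha}\log^{-2/\alpha}(W)$, $(LW)^{-1/\alpha}\log^{-3/\alpha}(W)$, or $W^{-1/\alpha}\log^{-3/\alpha}(W)$ respectively, as stated.

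The main obstacle is purely bookkeeping: one must select the sharpest available pseudo-dimension estimate in each regime (a generic bound with an extra factor of $L$ for $\nu \geq 2$, for instance, would pollute the first line of \eqref{eq:lowerbound-feed-forward}), and verify that every constant introduced by the bounds on $\Pdim(H_{\mathcal{A}})$ and by Theorem~\ref{general_thm} depends only on the allowed parameters $(a,b,p,c_0,\eps_0,\alpha,\nu,K)$ so that they can legitimately be hidden inside $c_1,c_2,c_3,W_{\min}$.
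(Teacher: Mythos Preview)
Your proposal is correct and follows essentially the same route as the paper: apply Theorem~\ref{general_thm}(2) with $G=H_{\mathcal{A}}$, insert the pseudo-dimension bounds of \cite{JMLR:v20:17-612} and \cite{Goldberg2004BoundingTV} in the three regimes of $\nu$, and simplify using $L\leq W$ and a large enough $W_{\min}$. The only step the paper makes explicit that you leave implicit is the passage from VC-dimension (which is what \cite{JMLR:v20:17-612} actually bounds) to pseudo-dimension via the augmented-network trick $\Pdim(H_{\mathcal{A}})=\VCdim(\sgn(H_{\mathcal{A}'}))$ with one extra input and one extra weight; this costs nothing but should be mentioned.
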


There are equivalent ways to write the above corollary. For example, given a target accuracy $\eps>0$ and a depth $L\geq 1$, \eqref{eq:lowerbound-feed-forward} yields a lower bound on the minimum number $W$ of weights that are needed to get $\sup_{f \in F} \inf_{g \in H_{\mathcal{A}}} \|f - g\|_{L^p(\mu)} \leq \eps$. Some earlier approximation results were written this way (e.g., \cite{yarotsky2017error,PETERSEN2018296}).

\subsection{Proof of Theorem \ref{general_thm}}
\label{main_result_proof}

In order to prove Theorem \ref{general_thm}, we need two inequalities.
The first one is straightforward (and appeared within proofs, e.g., in \cite{yarotsky2020phase}), but formalizes the key idea that if $G$ approximates $F$ with error $\varepsilon$, then $G$ has to be at least as large as $F$.
We use the conventions $\log(+\infty) = +\infty$ and $+\infty \leq +\infty$.

\begin{lemma}
\label{lemma_entropy_G_vs_F}
Let $p \geq 1$ and $\cX$ be a measurable space endowed with a probability measure $\mu$. Let $F$, $G$ be two sets of measurable functions from $\cX$ to $\R$.
If $\sup_{f \in F} \inf_{g \in G} \|f - g\|_{L^p(\mu)} < \varepsilon$, then
$$
\log M\!\left(3\varepsilon, F, \|\cdot\|_{L^p(\mu)}\right) \leq \log M\!\left(\varepsilon, G, \|\cdot\|_{L^p(\mu)}\right)\;.
$$

\end{lemma}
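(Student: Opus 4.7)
The plan is to convert any large $3\varepsilon$-packing of $F$ into an $\varepsilon$-packing of $G$ of the same cardinality, using the assumed approximation property to ``lift'' points of $F$ to nearby points of $G$ and the triangle inequality to preserve separation.

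More concretely, I would start from any finite $3\varepsilon$-packing $\{f_1,\ldots,f_N\} \subset F$ in $\|\cdot\|_{L^p(\mu)}$. The hypothesis $\sup_{f \in F}\inf_{g \in G}\|f-g\|_{L^p(\mu)} < \varepsilon$ gives, for each index $i$, an element $g_i \in G$ such that $\|f_i - g_i\|_{L^p(\mu)} < \varepsilon$ (by the definition of infimum, since the infimum over $g$ is strictly less than $\varepsilon$). I would then apply the triangle inequality to obtain, for every $i \neq j$,
\begin{equation*}
    \|g_i - g_j\|_{L^p(\mu)} \;\geq\; \|f_i - f_j\|_{L^p(\mu)} - \|f_i - g_i\|_{L^p(\mu)} - \|f_j - g_j\|_{L^p(\mu)} \;>\; 3\varepsilon - \varepsilon - \varepsilon \;=\; \varepsilon.
\end{equation*}
In particular the $g_i$ are pairwise distinct, so $\{g_1,\ldots,g_N\}$ is an $\varepsilon$-packing of $G$ of cardinality $N$, whence $N \leq M(\varepsilon, G, \|\cdot\|_{L^p(\mu)})$.

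Finally I would take the supremum over all finite $3\varepsilon$-packings of $F$ to conclude that $M(3\varepsilon, F, \|\cdot\|_{L^p(\mu)}) \leq M(\varepsilon, G, \|\cdot\|_{L^p(\mu)})$, then pass to logarithms. The case $M(3\varepsilon, F, \|\cdot\|_{L^p(\mu)}) = +\infty$ is handled by the same argument applied to finite sub-packings of arbitrarily large size, which forces $M(\varepsilon, G, \|\cdot\|_{L^p(\mu)}) = +\infty$ as well, consistent with the convention $+\infty \leq +\infty$. There is no real obstacle here: the argument is a one-line triangle inequality, and the only point to be mindful of is using the strict inequality in the packing definition together with the strict inequality $\|f_i - g_i\|_{L^p(\mu)} < \varepsilon$ (which itself follows from the strict inequality in the assumption).
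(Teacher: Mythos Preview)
Your proof is correct and follows essentially the same approach as the paper: lift a $3\varepsilon$-packing of $F$ to an $\varepsilon$-packing of $G$ via the approximation hypothesis and the triangle inequality. The paper's argument is identical (it even writes $\|f_i-g_i\|_{L^p(\mu)}\leq\varepsilon$ rather than the strict inequality you use, but either works given the strict packing definition), and your explicit handling of the $+\infty$ case is a harmless clarification.
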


\begin{proof}
Let $P_F = \{f_1, \ldots, f_N\}$ be a $3\varepsilon$-packing of $F$, with $N \geq 1$.
Let $P_G = \{g_1, \ldots, g_N\}$ be a subset of $G$ such that $\|f_i - g_i\|_{L^p(\mu)} \leq \varepsilon$ for all $i$. Note that the existence of such a $P_G$ is guaranteed by the assumption $\sup_{f \in F} \inf_{g \in G} \|f - g\|_{L^p(\mu)} < \varepsilon$. Since the $f_i$'s are pairwise $3\varepsilon$-distant in $L^p(\mu)$, the triangle inequality entails that the $g_i$'s are also at least pairwise $\varepsilon$-distant in $L^p(\mu)$. Therefore, $P_G$ is an $\varepsilon$-packing of $G$, and the result follows.
\end{proof}

The next inequality is a fundamental probability result due to Mendelson \cite{971753}. It bounds from above the $\varepsilon$-packing number in $L^p(\mu)$ norm of any uniformly bounded function set in terms of its fat-shattering dimension. Crucially, the inequality holds for finite $p \geq 1$, as opposed to the lower bound strategy of Yarotsky \cite{yarotsky2017error,yarotsky2018optimal} (see also \cite{devore_hanin_petrova_2021}), that relates the VC-dimension with the approximation error in sup norm.
The next statement is a slight generalization of a result of \cite{971753} initially stated for $[a,b]=[0,1]$ and for Glivenko-Cantelli classes $G$ (see Appendix~\ref{sec:rescaling} for details).

\begin{prop}[\cite{971753}, Corollary~3.12]
\label{result_mendelson}
Let $G$ be a set of measurable functions from a measurable space $\mathcal{X}$ to $[a,b]$ (for some real numbers $a<b$), and such that $\fat_{\gamma}(G) < +\infty$ for all $\gamma > 0$. Then for any $1 \leq p < +\infty$, there exists $c > 0$ depending only on $p$ such that for every probability measure $\mu$ on $\cX$ and every $\varepsilon > 0$,
\begin{align}
\label{equation_mendelson}
\log M\!\left(\varepsilon, G, \|\cdot\|_{L^p(\mu)}\right) \leq c \fat_{\frac{\varepsilon}{32}}(G) \log^2 \!\left(\frac{2 (b-a) \fat_{\frac{\varepsilon}{32}}(G)}{\varepsilon}\right) \;.
\end{align}
\end{prop}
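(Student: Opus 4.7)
The plan is to deduce the statement from Mendelson's original Corollary 3.12 (stated for $[0,1]$-valued uniform Glivenko-Cantelli classes) via an affine rescaling, together with a one-line invocation of the classical characterization of uniform Glivenko-Cantelli classes in terms of the fat-shattering dimension. The hypothesis ``$\fat_\gamma(G) < +\infty$ for all $\gamma > 0$'' is precisely what is needed to match Mendelson's setting after rescaling.

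First, I would introduce the rescaled family $\widetilde G := \{\widetilde g : g \in G\}$ where $\widetilde g(x) := (g(x)-a)/(b-a)$, so that every function in $\widetilde G$ takes values in $[0,1]$. Two elementary identities then follow by direct computation. For the $L^p(\mu)$ norm, $\|g_1 - g_2\|_{L^p(\mu)} = (b-a)\,\|\widetilde g_1 - \widetilde g_2\|_{L^p(\mu)}$, so
$$M\bigl(\varepsilon, G, \|\cdot\|_{L^p(\mu)}\bigr) \;=\; M\bigl(\varepsilon/(b-a),\widetilde G,\|\cdot\|_{L^p(\mu)}\bigr).$$
For the fat-shattering dimension, a set $S \subseteq \mathcal X$ is $\gamma$-shattered by $G$ with witness $r$ if and only if it is $\gamma/(b-a)$-shattered by $\widetilde G$ with witness $\widetilde r := (r-a)/(b-a)$, because each separation $g(x_i) \geq r(x_i) + \gamma$ is equivalent to $\widetilde g(x_i) \geq \widetilde r(x_i) + \gamma/(b-a)$ (and symmetrically for the reversed inequality). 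Therefore $\fat_\gamma(G) = \fat_{\gamma/(b-a)}(\widetilde G)$ for every $\gamma > 0$, and in particular $\fat_{\gamma'}(\widetilde G) < +\infty$ for every $\gamma' > 0$.

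Second, I would check that $\widetilde G$ indeed lies in the scope of Mendelson's corollary. His statement is formulated for uniform Glivenko-Cantelli classes of $[0,1]$-valued functions, and by the classical Alon--Ben-David--Cesa-Bianchi--Haussler characterization, any uniformly bounded function class with finite fat-shattering dimension at every scale is a uniform Glivenko-Cantelli class. The family $\widetilde G$ meets both conditions by the previous step. Applying Mendelson's inequality to $\widetilde G$ at scale $\widetilde\varepsilon := \varepsilon/(b-a)$ yields
$$\log M\bigl(\widetilde\varepsilon, \widetilde G, \|\cdot\|_{L^p(\mu)}\bigr) \;\leq\; c\,\fat_{\widetilde\varepsilon/32}(\widetilde G)\,\log^2\!\left(\frac{2\,\fat_{\widetilde\varepsilon/32}(\widetilde G)}{\widetilde\varepsilon}\right),$$
with a constant $c>0$ depending only on $p$. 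Substituting the identities $\fat_{\widetilde\varepsilon/32}(\widetilde G) = \fat_{\varepsilon/32}(G)$ and $M(\widetilde\varepsilon, \widetilde G,\|\cdot\|_{L^p(\mu)}) = M(\varepsilon, G,\|\cdot\|_{L^p(\mu)})$, and using $1/\widetilde\varepsilon = (b-a)/\varepsilon$, produces exactly \eqref{equation_mendelson} with the same constant $c$.

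The only subtlety is making sure that Mendelson's $[0,1]$-valued corollary carries no side hypothesis on $\widetilde G$ beyond boundedness and the uniform Glivenko-Cantelli property; this is precisely what the Alon--Ben-David--Cesa-Bianchi--Haussler equivalence delivers, and is the one external ingredient I would cite without reproof. Everything else reduces to the bookkeeping of the affine rescaling from step one.
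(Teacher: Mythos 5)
Your rescaling bookkeeping is correct and is exactly what the paper does: the same map $\widetilde g = (g-a)/(b-a)$, the same two identities ($\fat_{\gamma/(b-a)}(\widetilde G) = \fat_\gamma(G)$ and $M(\varepsilon/(b-a), \widetilde G, \|\cdot\|_{L^p(\mu)}) = M(\varepsilon, G, \|\cdot\|_{L^p(\mu)})$), and the same substitution at scale $\widetilde\varepsilon = \varepsilon/(b-a)$, which correctly produces the factor $2(b-a)\fat_{\varepsilon/32}(G)/\varepsilon$ inside the logarithm. The gap is in the step you flag as ``the only subtlety'' and then dismiss. Mendelson's Corollary~3.12 as stated does carry a side hypothesis beyond boundedness and the uniform Glivenko--Cantelli property: a measurability condition on the \emph{class} (of ``image admissible Suslin'' type), and the Alon--Ben-David--Cesa-Bianchi--Haussler equivalence you invoke (their Theorem~2.5) is itself proved under such a condition. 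The implication ``$\fat_\gamma < +\infty$ at every scale $\Rightarrow$ uniform Glivenko--Cantelli'' is simply false for an arbitrary class of individually measurable functions, which is all the proposition assumes. A standard counterexample: take a well-ordering of $[0,1]$ whose initial segments are all countable, and let $G$ be the indicators of these initial segments. This is a chain, so $\fat_\gamma(G) = 1$ for every $\gamma \leq 1/2$, and each function is Lebesgue-measurable (each segment is countable, hence null); yet for any sample one can pick a segment containing all sample points, so the empirical deviation from the true measure is identically $1$ and uniform convergence fails. Hence your route, as written, proves the proposition only for classes satisfying the extra measurability condition, not in the stated generality.

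The paper addresses exactly this point and takes a different way out: rather than routing through the Glivenko--Cantelli property at all, it adapts Mendelson's proof. Instead of starting from an $\varepsilon$-packing in the empirical $L^p(\mu_n)$ norm and upgrading it to a covering in $L^p(\mu)$ (which is where the uniform law of large numbers, and hence the measurability condition, enters), one starts from an $\varepsilon$-packing in $L^p(\mu)$ and shows, by Hoeffding's inequality applied to the finitely many bounded functions $|g_i - g_j|^p$, that with positive probability it remains an $\varepsilon$-packing in $L^p(\mu_n)$ for some large $n$. No uniform convergence is needed, so conditions (i) and (ii) disappear and the proposition holds under the fat-shattering hypothesis alone. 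To repair your proof you would either need to add the Suslin-type hypothesis to the statement (sufficient for the paper's downstream application to neural networks, as the paper notes, but weaker than what is claimed), or replace the ABCH invocation with this Hoeffding-based modification of Mendelson's argument.
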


Refinements of this inequality were proved in specific cases such as the $L^2(\mu)$ norm \cite{mendelson03-entropyCombinatorialDimension} (see also \cite{guermeur17-LpnormSauer} for empirical $L^p(\mu_n)$ norms). However, using the result of \cite{mendelson03-entropyCombinatorialDimension} when $p=2$ would only yield a minor logarithmic improvement in the lower bound of Theorem~\ref{general_thm}.

\begin{proof}[Proof (of Theorem \ref{general_thm})]
\label{main_result_proof_bis}
\textbf{Part 1.} We start by proving \eqref{lower bound thm general}, using Proposition \ref{result_mendelson} as a key argument. Since functions in $G$ are not necessarily uniformly bounded, we will apply Proposition \ref{result_mendelson} to the \say{clipped version of $G$}. More precisely, for any function $g \in G$, we define its clipping (truncature) to $[a,b]$ as the function $\tilde{g}:\cX \to \R$ given by $\tilde{g}(x) = \min(\max(a,g(x)), b)$ for all $x \in \mathcal{X}$. We then set $G_{[a,b]} = \left\{ \tilde{g}: g \in G \right\}$, which by construction consists of functions that are all $[a,b]$-valued.

Noting that clipping can only help since elements of $F$ are $[a,b]$-valued (see Lemma~\ref{lemma_supinf_G} in the supplement, Appendix~\ref{sec:clipping}), we have
\begin{equation}
    \label{eq:clipping}
    \sup_{f \in F}\inf_{g \in G} \|f - g\|_{L^p(\mu)} \geq \sup_{f \in F} \inf_{\tilde{g} \in G_{[a,b]}} \|f - \tilde{g}\|_{L^p(\mu)} \;.
\end{equation}

Setting $\Delta := \sup_{f \in F} \inf_{\tilde{g} \in G_{[a,b]}} \|f - \tilde{g}\|_{L^p(\mu)}$, we now show that $\Delta$ is bounded from below by the right-hand side of \eqref{lower bound thm general}. To that end, it suffices to show that every $\eps > \Delta$ is a solution to the inequation
\begin{equation}
\label{eq:inequation}
    \log M\!\left(3\varepsilon, F, \|\cdot\|_{L^p(\mu)}\right) \leq c \fat_{\frac{\varepsilon}{32}}(G) \log^2 \left(\frac{2 (b-a) \fat_{\frac{\varepsilon}{32}}(G)}{\varepsilon}\right) \;.
\end{equation}
The last inequality is true whenever $\eps \geq (b-a)/3$ (see Footnote~\ref{ft:solutioninequation}). We only need to prove \eqref{eq:inequation} when $\Delta < \eps < (b-a)/3$. In this case, by definition of $\Delta$ and by Lemma~\ref{lemma_entropy_G_vs_F} applied to $G_{[a,b]}$, we have
\begin{align}
    \log M\!\left(3\varepsilon, F, \|\cdot\|_{L^p(\mu)}\right) & \leq \log M\!\left(\varepsilon, G_{[a,b]}, \|\cdot\|_{L^p(\mu)}\right) \nonumber\\
    & \leq c \fat_{\frac{\varepsilon}{32}}(G_{[a,b]}) \log^2 \left(\frac{2 (b-a) \fat_{\frac{\varepsilon}{32}}(G_{[a,b]})}{\varepsilon}\right) \nonumber\\
    & \leq c \fat_{\frac{\varepsilon}{32}}(G) \log^2 \left(\frac{2 (b-a) \fat_{\frac{\varepsilon}{32}}(G)}{\varepsilon}\right) \;,\label{eq:thirdineq}
\end{align}
where the second inequality follows from Proposition \ref{result_mendelson} (note from Lemma~\ref{lemma_fatG} in the supplement, Appendix~\ref{sec:clipping} that $\fat_{\gamma}(G_{[a,b]}) \leq \fat_{\gamma}(G)$ for all $\gamma>0$, which is finite by assumption), and where \eqref{eq:thirdineq} follows from the next remark. Either $\fat_{\frac{\eps}{32}}(G_{[a,b]}) = 0$, and \eqref{eq:thirdineq} is true by the convention $0 \times \log^2(0) = 0$ and $c\fat_{\frac{\eps}{32}}(G)\geq 0$. Either $\fat_{\frac{\eps}{32}}(G_{[a,b]}) \geq 1$, and \eqref{eq:thirdineq} follows from  $t \mapsto c t \log^2\bigl( \frac{2(b-a)t}{\varepsilon}\bigr)$ being non-decreasing on $[\varepsilon/(2(b-a)),+\infty)$ and $\varepsilon/(2(b-a)) \leq 1/6 \leq 1 \leq \fat_{\frac{\eps}{32}}(G_{[a,b]}) \leq \fat_{\frac{\eps}{32}}(G)$. To conclude, every $\eps > \Delta$ satisfies \eqref{eq:inequation}, which implies that $\Delta$ is bounded from below by the right-hand side of \eqref{lower bound thm general}. Combining with \eqref{eq:clipping} concludes the proof of \eqref{lower bound thm general}.

\textbf{Part 2.} Set $\eps_1'=\min\bigl\{\frac{\eps_0}{3}, 2(b-a)\bigr\}$. We now derive \eqref{main_result_equation} from \eqref{lower bound thm general}. To that end, setting $P=\Pdim(G)$, we show that every $\eps>0$ satisfying \eqref{eq:inequation} is such that $\eps \geq \min\bigl\{\eps_1, \ c_1  P^{-\frac{1}{\alpha}}\log^{-\frac{2}{\alpha}}(P)\bigr\}$, where $\eps_1 \in (0,\eps_1']$ and $c_1>0$ will be defined later. Since the claimed lower bound on $\eps$ is true when $\eps \geq \eps_1'$, in the sequel we consider any solution $\eps$ to \eqref{eq:inequation} such that $0 < \eps < \eps_1'$ (if such a solution exists).

By the assumption on $\log M\bigl(u, F, \|\cdot\|_{L^p(\mu)}\bigr)$ for $u=3\eps \leq \eps_0$, and then using \eqref{eq:inequation}, we have, setting $r = 2(b-a)$,
\begin{equation*}
% \label{eq:inequation-specialcase}
c_0 (3\varepsilon)^{-\alpha}  \leq \log M\!\left(3\varepsilon, F, \|\cdot\|_{L^p(\mu)}\right) \leq c \fat_{\frac{\varepsilon}{32}}(G) \log^2 \left(\frac{r \fat_{\frac{\varepsilon}{32}}(G)}{\varepsilon}\right) \leq c P \log^2 \left(\frac{r P}{\varepsilon}\right) \;,
\end{equation*}
where the last inequality is because $t \mapsto c t \log^2\bigl( \frac{rt}{\varepsilon}\bigr)$ is non-decreasing on $[\eps/r,+\infty)$, with $\eps/r \leq 1$, and $1 \leq \fat_{\frac{\varepsilon}{32}}(G) \leq \Pdim(G)=P$ (the lower bound of $1$ follows from $c_0 (3\varepsilon)^{-\alpha}>0$).

Solving the inequation $c_0 (3\varepsilon)^{-\alpha} \leq c P \log^2 (r P/\eps)$ for $\eps$ (see Appendix \ref{function_study} for details), we get
\begin{equation}
    \label{main_result_proof_conclusion}
    \varepsilon \geq \min\bigl\{\varepsilon_1'', \ c_1 P^{-\frac{1}{\alpha}} \log^{-\frac{2}{\alpha}}P\bigr\} \;,
\end{equation}
for some constants $\eps_1'', c_1 > 0$ depending only on $p$, $c_0$, $b-a$ and $\alpha$. Setting $\varepsilon_1 = \min\{\eps_1'', \eps_1'\}$ and noting that $\varepsilon'_1$ only depends on  $\varepsilon_0$ and $b-a$, we conclude the proof. 
\end{proof}

%%%%%%%%%%%%%%%%%%%%
%%%%%%%% NEW SECTION
%%%%%%%%%%%%%%%%%%%%

\section{Approximation of Hölder balls by feed-forward neural networks}
\label{section Hölder}

In this section, we apply Corollary \ref{main_result} to establish nearly-tight lower bounds for the approximation of unit Hölder balls by feed-forward neural networks. Our main result is Proposition \ref{borne inf Hölder}, which solves an open question by \cite{devore_hanin_petrova_2021}.

Throughout the section, for any $s > 0$, we denote by $n$ and $\alpha$ the unique members of the decomposition $s = n + \alpha$ such that $n \in \mathbb{N}$ and $0 < \alpha \leq 1$. 
 
For a set $\mathcal{X} \subset \mathbb{R}^d$, we follow \cite{yarotsky2020phase} and define the Hölder space $\mathcal{C}^{n,\alpha}(\mathcal{X})$ as the space of $n$ times continuously differentiable functions with finite norm
$$
\|f\|_{\mathcal{C}^{n,\alpha}} = \max \left\{\max_{\textbf{n}:|\textbf{n}| \leq n}\|D^{\textbf{n}}f\|_{\infty}, \max_{\textbf{n}: |  \textbf{n}|=n} \sup_{x \neq y} \frac{\left|D^{\textbf{n}}f(x) - D^{\textbf{n}}f(y)\right|}{\|x - y\|^{\alpha}_2}
\right\},
$$
where, for $\nbf = (n_1,\cdots, n_d)\in\NN^d$, $D^{\textbf{n}}f = \left(\frac{\partial }{\partial x_1}\right)^{n_1}\cdots \left(\frac{\partial }{\partial x_d}\right)^{n_d} f$ denotes the $|\textbf{n}|$-order partial derivative of $f$. We denote
\[F_{s,d} = \{f \in \mathcal{C}^{n,\alpha}([0,1]^d) : \|f\|_{\mathcal{C}^{n,\alpha}} \leq 1\}.
\]

Let $\lambda$ denote the Lebesgue measure over $[0,1]^d$. In this section, we provide nearly matching upper and lower bounds for the $L^p(\lambda)$ approximation error of elements of $F_{s,d}$ by feed-forward ReLU neural networks. The bounds are expressed in terms of the number of weights of the network.

\subsection{Known bounds on the sup norm approximation error}

\cite{yarotsky2020phase} gives matching (up to a certain constant) lower and upper bounds of the sup norm approximation error of the elements of $F_{s,d}$ by feed-forward ReLU neural networks.

\begin{prop}[\cite{yarotsky2020phase}]
Let $d \in \NN^*$,  $s>0$, $\gamma \in \left(\frac{s}{d},\frac{2s}{d}\right]$. Consider $n\in\NN$ and $\alpha\in (0,1]$ such that $s = n + \alpha$.

There exist positive constants $W_{\emph{min}}$ and $c_1$, depending only on $d$ and $n$, such that for any integer $W \geq W_{\emph{min}}$, there exists a feed-forward ReLU neural network architecture $\mathcal{A}$ with $L=O(W^{\gamma \frac{d}{s}-1})$ layers and $W$ weights such that
\begin{equation}
    \label{borne_sup_yarotsky_2020}
    \sup_{f \in F_{s,d}}\inf_{g \in H_{\mathcal{A}}} \|f - g\|_{\infty} \leq 
    c_1 W^{-\gamma}.
\end{equation}

In the meantime, there exists a constant $c_2>0$ depending only on $d$ and $n$ such that, for any feed-forward neural network  architecture $\mathcal{A}$ with $W$ weights and $L = o(W^{\gamma\frac{d}{s} - 1}/\log W)$ layers and for the ReLU activation function,
\begin{equation}
    \label{borne_inf_yarotsky_2020}
    \sup_{f \in F_{s,d}} \inf_{g \in H_{\mathcal{A}}} \|f - g\|_{\infty} \geq c_2 W^{-\gamma}.
\end{equation}
\end{prop}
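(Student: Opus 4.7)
The plan is to follow Yarotsky's two-step strategy: first approximate $f \in F_{s,d}$ in sup norm by a piecewise polynomial, then realize that piecewise polynomial by a ReLU network. I would partition $[0,1]^d$ into $N^d$ cubes of side $1/N$ and, on each cube, approximate $f$ by its Taylor expansion of order $n$ centered at the cube. Standard Hölder estimates give sup-norm error at most $C N^{-s}$ with $C$ depending only on $d$ and $n$, and global continuity is restored with an overlapping partition of unity built from ReLU ``hat'' functions at the cost of only a constant factor in the weight count. To implement each local polynomial, I would use Yarotsky's approximate squaring gate, which realizes $x \mapsto x^2$ up to precision $\delta$ with $O(\log(1/\delta))$ weights and layers, and iterate it to realize any monomial of degree $n$. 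The crucial ingredient is the \emph{bit-extraction} construction: by devoting $L$ hidden layers to a sawtooth-based decoder, one can store and retrieve roughly $L$ bits of information per parameter, so that the $O(N^d)$ local Taylor coefficients can be encoded using only $O(N^d/L)$ effective weights. Balancing $N^{-s} \asymp W^{-\gamma}$ (i.e. $N \asymp W^{\gamma/s}$) with $W \asymp N^d/L$ forces $L \asymp W^{\gamma d/s - 1}$, which yields the stated architecture.

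\textbf{Lower bound.} My strategy is to embed a rich shattered set inside $F_{s,d}$ and then invoke a VC-dimension bound for the ReLU class $H_{\mathcal{A}}$. Fix a smooth compactly supported bump $\varphi:\RR^d \to \RR$ with $\|\varphi\|_{\mathcal{C}^{n,\alpha}} \leq 1$ and $\varphi(0)=1$. For scale $h = 1/N$, the rescaled bump $\varphi_h(x) = h^s \varphi(x/h)$ also has Hölder norm bounded by a constant depending only on $d,n$. Place $M = N^d$ disjoint such bumps $\varphi_{h,i}$ on a regular grid with centers $x_1, \ldots, x_M$. For each $\sigma \in \{-1,+1\}^M$, the superposition $f_\sigma = \sum_i \sigma_i\, \varphi_{h,i}$ lies in $c \cdot F_{s,d}$ and satisfies $f_\sigma(x_i) = \sigma_i h^s$. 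If $H_{\mathcal{A}}$ approximates $F_{s,d}$ in sup norm within $\varepsilon < h^s/2$, then for each $\sigma$ there exists $g_\sigma \in H_{\mathcal{A}}$ with $\sgn(g_\sigma(x_i)) = \sigma_i$ for all $i$; hence $H_{\mathcal{A}}$ shatters $\{x_1,\ldots,x_M\}$, giving $\VCdim(H_{\mathcal{A}}) \geq M = N^d$. Combined with the Bartlett--Harvey--Liaw--Mehrabian bound $\VCdim(H_{\mathcal{A}}) \leq C' W L \log W$, this forces $\varepsilon \gtrsim (WL\log W)^{-s/d}$. Under the hypothesis $L = o(W^{\gamma d/s - 1}/\log W)$ one has $WL\log W = o(W^{\gamma d/s})$, so the right-hand side dominates $c_2 W^{-\gamma}$ for $W$ large enough.

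\textbf{Main obstacle.} The genuinely hard part is the upper bound, specifically the bit-extraction step. The polynomial-on-cubes strategy is classical and implementing each local polynomial is routine once the squaring gate is available; what is not routine is encoding $\sim N^d$ distinct tuples of Taylor coefficients into a ReLU circuit with only $\sim N^d/L$ free parameters while preserving the target $O(N^{-s})$ precision. The sawtooth maps underlying bit extraction are highly oscillatory, and one must carefully propagate numerical accuracy through the $L$ decoding layers. The lower bound is lighter technically, the only subtlety being to pick $h \asymp W^{-\gamma/s}$ so that the bump amplitude $h^s$ and the shattering budget $WL\log W$ match up cleanly; the hypothesis $L = o(W^{\gamma d/s - 1}/\log W)$ is precisely what absorbs the logarithmic factor produced by the VC-dimension bound.
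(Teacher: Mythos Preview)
The paper does not prove this proposition at all: it is quoted verbatim from \cite{yarotsky2020phase} as a known result, with no proof given. So there is nothing in the paper to compare your sketch against.

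That said, your outline matches the strategy of Yarotsky's original proofs faithfully. For the upper bound you correctly identify the three layers of the construction (local Taylor polynomials plus a ReLU partition of unity, the iterated squaring gate, and the bit-extraction encoder that trades depth for parameter count), and your balancing computation $W \asymp N^d/L$, $N \asymp W^{\gamma/s}$, hence $L \asymp W^{\gamma d/s - 1}$, is the right bookkeeping. For the lower bound your bump-shattering argument combined with the Bartlett--Harvey--Liaw--Mehrabian bound $\VCdim(H_{\mathcal{A}}) = O(WL\log W)$ is exactly the route taken in \cite{yarotsky2018optimal,yarotsky2020phase}, and you correctly isolate the role of the hypothesis $L = o(W^{\gamma d/s - 1}/\log W)$ in swallowing the logarithm. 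Your assessment of where the real difficulty lies (precision control through the bit-extraction layers) is also accurate.
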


It is worth stressing that, for any probability measure $\mu$ on $[0,1]^d$, the upper bound (\ref{borne_sup_yarotsky_2020}) is automatically generalized to any smaller $L^p(\mu)$ norm, when $1 \leq p < +\infty$. However, the lower bound (\ref{borne_inf_yarotsky_2020}) does not immediately apply when $\|\cdot\|_{\infty}$ is replaced with $\|\cdot\|_{L^p(\mu)}$, $1 \leq p < +\infty$. The lower bound of the next subsection shows that, in this setting, approximation in $L^p(\lambda)$ norm is not easier than in sup norm, solving an open question of DeVore et al.~\cite{devore_hanin_petrova_2021}.

\subsection{Nearly-matching lower bounds of the $L^p(\lambda)$ approximation error}

We first state a lower bound on the packing number of $F_{s,d}$, which is rather classical though hard to find in this specific form (see \cite{Birman_1967} for the $L^\infty$ norm, or \cite{edmunds_triebel_1996} for other Sobolev-type norms).  For the sake of completeness, we give a proof of Lemma \ref{lemme lower bound metric entropy Hölder} in the supplement, Appendix \ref{proof of lemme lower bound metric entropy Hölder}. 

\begin{lemma}\label{lemme lower bound metric entropy Hölder} Let $s>0$, $d\in\NN^*$ and $1 \leq p < +\infty$. There exist constants $\varepsilon_0, c_0>0$ such that for any $0 < \varepsilon \leq \varepsilon_0$,
\begin{equation}\label{lemme2_eq}
    \log M \left(\varepsilon,F_{s,d}, \| \cdot \|_{L^p(\lambda)}\right) \geq c_0 \varepsilon^{-\frac{d}{s}}.
\end{equation}

\end{lemma}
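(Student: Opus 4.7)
The plan is to establish the packing lower bound by a classical bump-and-encode construction, instantiated at the resolution forced by the target accuracy $\varepsilon$.

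First I would fix a nonzero $C^\infty$ bump function $\phi:\mathbb{R}^d\to\mathbb{R}$ with compact support strictly inside the open unit cube, say $\mathrm{supp}(\phi)\subset[1/4,3/4]^d$. For each integer $N\geq 1$ and each subset $S\subset K_N:=\{0,\ldots,N-1\}^d$, I would define
\begin{equation*}
    f_S(x) \;=\; \gamma\, N^{-s}\sum_{k\in S}\phi\!\bigl(N x - k\bigr), \qquad x\in[0,1]^d,
\end{equation*}
where $\gamma>0$ is a small absolute constant (depending only on $\phi$, $d$, $s$) that I will tune so that $f_S\in F_{s,d}$ for every $N$ and every $S$. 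The key point is that the translated bumps $\phi(N\cdot-k)$ have pairwise disjoint supports, each contained in the interior of the sub-cube $N^{-1}k+N^{-1}[1/4,3/4]^d$.

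Next I would verify the Hölder norm bound $\|f_S\|_{\mathcal{C}^{n,\alpha}}\leq 1$. For a multi-index $\mathbf{n}$ with $|\mathbf{n}|\leq n$, chain-rule gives $D^{\mathbf{n}}f_S(x)=\gamma N^{|\mathbf{n}|-s}\sum_{k\in S}(D^{\mathbf{n}}\phi)(Nx-k)$, so (using disjoint supports) $\|D^{\mathbf{n}}f_S\|_\infty \leq \gamma\,N^{|\mathbf{n}|-s}\|D^{\mathbf{n}}\phi\|_\infty \leq \gamma\|D^{\mathbf{n}}\phi\|_\infty$ since $|\mathbf{n}|\leq n<s$ (or $|\mathbf{n}|=n$, $\alpha=1$; the borderline case $|\mathbf{n}|\leq n$ gives $N^{|\mathbf{n}|-s}\leq 1$ for $N\geq 1$). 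For the $\alpha$-Hölder seminorm of $D^{\mathbf{n}}f_S$ with $|\mathbf{n}|=n$: when $x,y$ lie in the support of the same bump, a change of variables gives the uniform bound $\gamma[D^{\mathbf{n}}\phi]_\alpha$; when $x,y$ lie in the supports of two different bumps (or one lies outside every support), the numerator is at most $2\gamma N^{-\alpha}\|D^{\mathbf{n}}\phi\|_\infty$ while $\|x-y\|\geq c/N$ by the interior-support condition, so the ratio is bounded by a constant multiple of $\gamma\|D^{\mathbf{n}}\phi\|_\infty$. Choosing $\gamma$ small enough makes $\|f_S\|_{\mathcal{C}^{n,\alpha}}\leq 1$ uniformly in $N,S$.

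For the $L^p$ separation, disjoint supports yield
\begin{equation*}
    \|f_{S_1}-f_{S_2}\|_{L^p(\lambda)}^p \;=\; \gamma^p N^{-sp}|S_1\triangle S_2|\int_{[0,1]^d}|\phi(Ny)|^p\,\mathrm{d}y \;=\; \gamma^p\|\phi\|_{L^p}^p\,N^{-sp-d}|S_1\triangle S_2|.
\end{equation*}
Finally I would invoke a Gilbert--Varshamov type argument on the hypercube $\{0,1\}^{K_N}$ of cardinality $M=N^d$: there exists a family $\mathcal{F}\subset\{0,1\}^{M}$ with $|\mathcal{F}|\geq 2^{M/8}$ and pairwise Hamming distance at least $M/4$. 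Translating back, $\{f_S:S\in\mathcal{F}\}$ is a packing of $F_{s,d}$ of cardinality at least $2^{N^d/8}$ at $L^p(\lambda)$-distance at least $\gamma\|\phi\|_{L^p}4^{-1/p}N^{-s}=:c_1 N^{-s}$. Given $\varepsilon$ small, I would pick $N=\lceil(c_1/\varepsilon)^{1/s}\rceil$, which yields $\log M(\varepsilon,F_{s,d},\|\cdot\|_{L^p(\lambda)})\geq (N^d/8)\log 2 \geq c_0\varepsilon^{-d/s}$ for all $0<\varepsilon\leq\varepsilon_0$, as claimed.

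The main obstacle is the uniform Hölder-norm control of $f_S$ independent of $N$ and $S$; it is crucial that $\phi$ be compactly supported \emph{in the interior} of the unit cube, otherwise cross-cube bumps would obstruct the $\alpha$-Hölder seminorm estimate at boundaries between sub-cubes.
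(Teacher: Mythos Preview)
Your construction is essentially the paper's: bump superpositions indexed by a hypercube, a H\"older-norm check, an $L^p$ separation computed via Hamming distance, and Gilbert--Varshamov to extract a large well-separated subfamily. The paper uses $\pm 1$ coefficients rather than $\{0,1\}$ and a bump supported in the $\ell^\infty$ unit ball rescaled by $2N$, but these are cosmetic differences.

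One small slip in your H\"older seminorm argument: the parenthetical case ``one lies outside every support'' cannot be handled by the distance bound $\|x-y\|\geq c/N$. If $x$ lies in the support of bump $k$ and $y$ sits just outside that support (but still outside all others), then $\|x-y\|$ can be arbitrarily small, so the ratio $2\gamma N^{-\alpha}\|D^{\mathbf{n}}\phi\|_\infty/\|x-y\|^\alpha$ blows up. The fix is to notice that in this configuration every bump other than the $k$th vanishes at both $x$ and $y$, and the $k$th bump vanishes at $y$; hence $D^{\mathbf{n}}f_S(x)-D^{\mathbf{n}}f_S(y)$ equals the corresponding difference for the single bump $g_k=\gamma N^{-s}\phi(N\cdot-k)$, which your same-bump estimate already controls by $\gamma[D^{\mathbf{n}}\phi]_\alpha\|x-y\|^\alpha$. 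The paper handles the cross-cube case by the analogous reduction (bounding the difference by twice the maximum of two single-bump differences).
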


Given Lemma \ref{lemme lower bound metric entropy Hölder}, we can use Corollary \ref{main_result} to establish the next proposition and obtain the lower bound on the $L^p(\lambda)$ approximation error.

\begin{prop}
\label{borne inf Hölder}
Let $d \in \NN^*$,  $s>0$, $\gamma \in \left(\frac{s}{d},\frac{2s}{d}\right]$ and $1 \leq p < +\infty$.
Consider $n\in\NN$ and $\alpha\in (0,1]$ such that $s = n + \alpha$.

Let $\sigma : \mathbb{R} \rightarrow \mathbb{R}$ be a piecewise-affine function, and $c>0$. 
Then, there exist $c_1>0$ and $W_{\emph{min}} \in \mathbb{N}^*$
(depending only on $s$, $d$, $p$, $\sigma$ and $c$)
such that for any architecture $\mathcal{A}$ of depth $1 \leq L \leq cW^{\gamma \frac{d}{s}-1} $ with $W\geq W_{\emph{min}}$ weights, and for the activation $\sigma$, the set $H_{\mathcal{A}}$ (cf. Section~\ref{s:intro}) satisfies

\begin{equation}\label{notre_borne_holder}
    \sup_{f \in F_{s,d}} \inf_{g \in {H}_\mathcal{A}} \|f - g\|_{L^p(\lambda)} \geq
         c_1  {W^{-\gamma}} \log^{-\frac{3s}{d}}(W) \;.
\end{equation}
\end{prop}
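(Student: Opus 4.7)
The plan is to apply Corollary~\ref{main_result} directly, with $F = F_{s,d}$, $\mu = \lambda$, $\cX = [0,1]^d$, and $[a,b] = [-1,1]$ (since $\|f\|_\infty \leq \|f\|_{\mathcal{C}^{n,\alpha}} \leq 1$ for $f \in F_{s,d}$). The metric entropy assumption of Corollary~\ref{main_result} is provided by Lemma~\ref{lemme lower bound metric entropy Hölder}, which yields the exponent $\alpha = d/s$ (this is the $\alpha$ of the corollary, not to be confused with the Hölder exponent $\alpha$ appearing in the decomposition $s = n + \alpha$). Since $\sigma$ is piecewise-affine, its maximal degree is $\nu = 1$ on $K \geq 2$ pieces, so we are in the second case of \eqref{eq:lowerbound-feed-forward}.

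Applied to this setting, the corollary produces constants $W'_{\min} \in \N^*$ and $c_2 > 0$, depending only on $s$, $d$, $p$, $\sigma$, such that for every $W \geq W'_{\min}$, every $L \geq 1$, and every feed-forward architecture $\mathcal{A}$ with $W$ weights and depth $L$,
\begin{equation*}
    \sup_{f \in F_{s,d}} \inf_{g \in H_\mathcal{A}} \|f - g\|_{L^p(\lambda)} \;\geq\; c_2 \, (LW)^{-\frac{s}{d}} \, \log^{-\frac{3s}{d}}(W).
\end{equation*}

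The last step is to use the depth constraint $L \leq c W^{\gamma d/s - 1}$, which gives $LW \leq c W^{\gamma d/s}$, hence
\begin{equation*}
    (LW)^{-\frac{s}{d}} \;\geq\; c^{-\frac{s}{d}} \, W^{-\gamma}.
\end{equation*}
Plugging this into the previous inequality yields the announced bound with $c_1 = c_2 \, c^{-s/d}$ and $W_{\min} = W'_{\min}$. This is essentially a plug-and-play argument: all the work has been done in Corollary~\ref{main_result} (for the reduction to the metric entropy of $F$) and in Lemma~\ref{lemme lower bound metric entropy Hölder} (for the metric entropy bound on $F_{s,d}$ in $L^p(\lambda)$). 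The only substantive step is the algebraic manipulation of the depth constraint, which converts the joint $(LW)^{-s/d}$ rate into the pure width rate $W^{-\gamma}$; no genuine obstacle is expected.
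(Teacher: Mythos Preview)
Your proof is correct and follows exactly the same approach as the paper's own proof: apply Corollary~\ref{main_result} with the metric-entropy input from Lemma~\ref{lemme lower bound metric entropy Hölder} (giving the corollary's exponent $d/s$), take the piecewise-affine case $\nu=1$, and absorb the depth constraint $L \leq c W^{\gamma d/s - 1}$ into $(LW)^{-s/d}$ to obtain $W^{-\gamma}$. The paper's proof is a terse one-liner doing precisely this; you have simply spelled out the algebra.
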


Note that the rate of the lower bound does not depend on~$p$. Note also that, when the activation function is ReLU (which is piecewise-affine), we obtain a lower bound which matches the upper bound of the previous subsection up to logarithmic factors.

\begin{proof}
From Lemma \ref{lemme lower bound metric entropy Hölder}, there exist $\varepsilon_0, c_0>0$ such that $\log M\left(\varepsilon, F_{s,d}, \|\cdot\|_{L^p(\lambda)}\right) \geq c_0 \varepsilon^{-\frac{d}{s}}$ for all $0 < \varepsilon \leq \varepsilon_0$.
Combining with Corollary \ref{main_result} and using $L \leq c W^{\gamma \frac{d}{s}-1}$ concludes the proof.
\end{proof}

\begin{remark}[Comparison with existing proof strategies in sup norm.]
\label{rmk:proofsupnorm}
We would like to highlight a key difference between the proof of Proposition \ref{borne inf Hölder} and the lower bound proof strategies of \cite{yarotsky2017error,yarotsky2018optimal,yarotsky2020phase,shen21-optimalApproximationReLU} that are specific to the sup norm. Their overall argument is roughly the following: if $G$ can approximate any $f \in F$ in sup norm at accuracy $\eps>0$, since $F$ contains many \say{oscillating} functions with oscillation amplitude roughly $\eps$, then so must be the case for $G$ (the sup norm is key here: \textbf{all} oscillations of any $f \in F$ are well approximated). Therefore, a small $\eps$ implies a large $\VCdim(G)$, which by contrapositive enables to lower bound the approximation error \eqref{eq:defapproxerror} with a decreasing function of $\VCdim(G)$, and therefore as a function of $L$ and $W$.
In contrast, in the proof of Theorem~\ref{general_thm}, the key probability result of Mendelson (Proposition~\ref{result_mendelson}) enables us to show that, even if the oscillations of any $f \in F$ are only well approximated \textbf{on average} (in $L^p(\mu)$ norm) by $G$, then $\Pdim(G)$ must be large when $\eps$ is small. The conclusion is then the same: the approximation error in $L^p(\mu)$ norm can be lower bounded as a function of $\Pdim(G)$, and therefore in terms of $L$, $W$. This solves the question of DeVore et al.~\cite{devore_hanin_petrova_2021} mentioned in the introduction, showing in particular that VC dimension theory can (surprisingly) be useful to prove $L^p$ approximation lower bounds.
\end{remark}

%%%%%%%%%%%%%%%%%%%%
%%%%%%%% NEW SECTION
%%%%%%%%%%%%%%%%%%%%

\section{Approximation of monotonic functions by feed-forward neural networks}
\label{section monotonic}

In this section, we consider the problem of approximating the set $\mathcal{M}^d$ of all non-decreasing functions from $[0,1]^d$ to $[0,1]$. These are functions $f: [0,1]^d \to [0,1]$ that are non-decreasing along any line parallel to an axis, i.e., such that, for all $x, y \in [0,1]^d$,
$$
x_i \leq y_i, \ \forall i = 1, \ldots, d \implies f(x) \leq f(y) \;.
$$
Monotonic functions are an interesting case study for at least two reasons. First, they naturally appear in physics or engineering applications (consider for instance the braking distance of a vehicle as a function of variables such as the speed, the total load or the drag coefficient). Second, as will be shown in this section, because their sets of discontinuities can have ``complex'' shapes in dimension $d \geq 2$, monotonic functions provide a good example for which the approximation by feed-forward neural networks is hopeless in sup norm, but can be achieved in $L^p(\lambda)$ norm.

Next we focus on the approximation of $\mathcal{M}^d$ with Heaviside feed-forward neural networks. After proving an impossibility result for the sup norm, we show that the weaker goal of approximating $\mathcal{M}^d$ in $L^p(\lambda)$ norm is feasible, and derive nearly matching lower and upper bounds. Interestingly, the approximation rates depend on $p \geq 1$, which is in sharp contrast with the case of Hölder balls, that are not easier to approximate in $L^p(\lambda)$ norm than in sup norm (see Section \ref{section Hölder}). 

\subsection{Warmup: an impossibility result in sup norm}
\label{sec:monotonic-warmup}
We start this section by showing that approximating monotonic functions of $d \geq 2$ variables in sup norm is impossible with Heaviside neural networks.

\begin{prop}\label{borne inf norme infinie}
For any neural network architecture $\mathcal{A}$ with the  Heaviside activation, the set $H_\mathcal{A}$ (cf. Section~\ref{s:intro}) satisfies
$$
\sup_{f \in \mathcal{M}^d} \inf_{g \in H_{\mathcal{A}}} \|f - g\|_{\infty} \geq \frac{1}{2}.
$$
\end{prop}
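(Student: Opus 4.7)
I would prove the proposition (assuming the implicit $d \geq 2$ of this section, which is needed since otherwise a large enough Heaviside network easily approximates any monotonic function of one variable to arbitrary sup-norm accuracy) by exhibiting a single target $f_\star \in \mathcal{M}^d$ of the form $\mathbb{1}_E$, where the up-set $E \subset [0,1]^d$ has a curved boundary, and showing that any Heaviside network $g$ satisfying $\|f_\star - g\|_\infty < 1/2$ would force $\{g > 1/2\}$ to coincide with $E$, contradicting the semi-linear structure of the super-level sets of $g$. The first step is a structural claim: by induction on the depth of $\mathcal{A}$, for any weight vector $\bw$ the function $g_\bw$ is piecewise affine on a finite decomposition of $\R^d$ into closed convex polyhedra. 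The base case is clear, since any first-layer hidden neuron computes $\mathbb{1}\{\text{affine}(x) \geq 0\}$, constant on each side of a hyperplane. For the induction step, the pre-activation of a deeper hidden neuron is an affine combination of $x$ (allowing for skip connections from inputs) and of the piecewise-constant $\{0,1\}$-valued outputs of earlier hidden neurons, hence piecewise affine on the current arrangement; applying the Heaviside refines the arrangement by adding the hyperplanes where the pre-activation vanishes on each cell. The (unactivated) output neuron is an affine combination of $x$ and of earlier hidden outputs, so $g_\bw$ is piecewise affine on the final arrangement. Consequently, for every $c \in \R$, the super-level set $\{g_\bw > c\}$ is a finite union of convex polyhedra, and its topological boundary is contained in a finite union of affine hyperplanes of $\R^d$.

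I would then take $f_\star(x) := \mathbb{1}\{x_1^2 + x_2^2 \geq 1/2\}$ on $[0,1]^d$. Since $x_1, x_2 \geq 0$ throughout $[0,1]^d$, increasing any coordinate only increases $x_1^2 + x_2^2$, so $f_\star \in \mathcal{M}^d$. Suppose, toward a contradiction, that some $g \in H_\mathcal{A}$ satisfied $\|f_\star - g\|_\infty < 1/2$. Then for every $x \in [0,1]^d$ the cases $f_\star(x) = 1$ and $f_\star(x) = 0$ would force $g(x) > 1/2$ and $g(x) < 1/2$ respectively, so $\{g > 1/2\} \cap [0,1]^d = E := \{x \in [0,1]^d : x_1^2 + x_2^2 \geq 1/2\}$.

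To derive a contradiction, note that by the first step the topological boundary of $\{g > 1/2\}$ inside $[0,1]^d$ lies in a finite union $\mathcal{H}$ of affine hyperplanes of $\R^d$. On the other hand, the boundary of $E$ contains the planar arc $\Gamma := \{x \in (0,1)^d : x_1^2 + x_2^2 = 1/2, \; x_3 = \cdots = x_d = 1/2\}$; every affine hyperplane of $\R^d$ intersects the affine $2$-plane $\{x_3 = \cdots = x_d = 1/2\}$ in at most a line, and a line meets the nondegenerate arc $\Gamma$ in at most two points, so $\mathcal{H}$ contains only finitely many points of $\Gamma$. This contradicts $\{g > 1/2\} \cap [0,1]^d = E$, hence $\inf_{g \in H_\mathcal{A}}\|f_\star - g\|_\infty \geq 1/2$, which yields the proposition.

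The main difficulty, in my view, lies in the first step: one must verify carefully that skip connections and the purely affine output layer do not spoil the semi-linear structure of the level sets of $g_\bw$. The right formulation is that $g_\bw$ is piecewise \emph{affine} (not necessarily piecewise constant) on a finite polyhedral decomposition, which is still enough to make every super-level set a finite union of convex polyhedra; the rest of the argument is then a direct geometric observation that a semi-linear set cannot have a nondegenerate circular arc on its boundary.
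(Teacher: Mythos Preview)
Your strategy is the same as the paper's: exhibit $f_\star=\mathds 1_E$ for an up-set $E$ with a curved boundary, and argue that $\|f_\star-g\|_\infty<1/2$ would force $\{g>1/2\}$ to coincide with $E$ on the cube, which is impossible because super-level sets of a Heaviside network are semi-linear. The paper uses the quarter-disk centered at $(1,1)$, treats $d=2$ first, and then reduces $d\ge 2$ to $d=2$ by freezing $x_3=\cdots=x_d=1$. Your remark that $g_\bw$ is only piecewise \emph{affine} (not piecewise constant), because of skip connections and the unactivated output, is in fact more careful than what the paper writes in its 2D step.

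There is, however, a genuine gap in your contradiction step when $d\ge 3$. The claim ``every affine hyperplane of $\R^d$ intersects the affine $2$-plane $P=\{x_3=\cdots=x_d=1/2\}$ in at most a line'' is false: any hyperplane of the form $a_3x_3+\cdots+a_dx_d=\tfrac12(a_3+\cdots+a_d)$ contains all of $P$, hence all of $\Gamma$. So from $\Gamma\subset\mathcal H$ alone you cannot conclude that $\mathcal H$ contains only finitely many points of $\Gamma$. The fix is easy, and is exactly the paper's reduction: restrict $g$ to the $2$-plane $P$ first. Then $g|_P$ is piecewise affine on a finite polygonal decomposition of $P\cong\R^2$, so the boundary of $\{g|_P>1/2\}$ lies in finitely many \emph{lines} of $P$; since $\{g|_P>1/2\}\cap(0,1)^2=\{(u,v):u^2+v^2\ge 1/2\}\cap(0,1)^2$, your line--arc argument now applies verbatim. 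Alternatively, replace the one-dimensional arc $\Gamma$ by the full $(d-1)$-dimensional cylinder $\Sigma=\{x\in(0,1)^d:x_1^2+x_2^2=1/2\}$: every hyperplane meets $\Sigma$ in a set of dimension at most $d-2$, so finitely many hyperplanes cannot cover $\Sigma$.
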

The proof of Proposition \ref{borne inf norme infinie} is postponed to the supplement, Appendix \ref{preuve borne inf norme infinie}.
We show a slightly stronger result, by exhibiting a single function $f\in \mathcal{M}^d$ such that the lower bound of $\frac{1}{2}$ holds simultaneously for all network architectures.

Next we study the approximation of $\mathcal{M}^d$ in $L^p(\lambda)$ norm.

\subsection{Lower bound in $L^p(\lambda)$ norm}

We start by proving a lower bound, as a direct consequence of Corollary \ref{main_result} and a lower bound on the packing number due to \cite{GAO20071751}.

\begin{prop}
\label{borne inf monotone}
Let $1 \leq p < +\infty$, $d \geq 1$, and let $\alpha = \max\{d, (d-1)p\}$.
Let $\sigma : \mathbb{R} \rightarrow \mathbb{R}$ be a piecewise-polynomial function having maximal degree $\nu \in \mathbb{N}$. Then, there exist positive constants $c_1, c_2, c_3, W_{\emph{min}}$ (depending only on $d$, $p$, and $\sigma$) such that for any architecture $\mathcal{A}$ of depth $L \geq 1$ with $W\geq W_{\emph{min}}$ weights, and for the activation $\sigma$, the set $H_{\mathcal{A}}$ (cf. Section~\ref{s:intro}) satisfies
\begin{align}
\label{eq:lowerbound-monotone}
    \sup_{f \in \mathcal{M}^d} \inf_{g \in H_{\mathcal{A}}} \|f - g\|_{L^p(\lambda)} \geq
    \left\{
    \begin{array}{l l}
         c_1  W^{-\frac{2}{\alpha}} \log^{-\frac{2}{\alpha}}(W) & \text{ if } \nu \geq 2 \,, \\
         c_2  (LW)^{-\frac{1}{\alpha}} \log^{-\frac{3}{\alpha}}(W)& \text{ if } \nu=1 \,, \\
         c_3  W^{-\frac{1}{\alpha}} \log^{-\frac{3}{\alpha}}(W)& \text{ if } \nu=0 \,.
    \end{array}
    \right.
\end{align}
\end{prop}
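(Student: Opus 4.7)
The plan is to derive Proposition \ref{borne inf monotone} as an immediate consequence of Corollary \ref{main_result} applied to $F = \mathcal{M}^d$. Since every $f \in \mathcal{M}^d$ is $[0,1]$-valued, the range hypothesis is satisfied with $a = 0$ and $b = 1$. The set $\mathcal{X} = [0,1]^d$ equipped with the Lebesgue measure $\lambda$ is the relevant probability space, and the activation $\sigma$ is assumed piecewise-polynomial of maximal degree $\nu$. Hence the only nontrivial input needed is a polynomial lower bound on the $\varepsilon$-packing number of $\mathcal{M}^d$ in $L^p(\lambda)$ norm, with the correct exponent.

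The key ingredient would be the metric entropy estimate for bounded monotonic functions on $[0,1]^d$ due to Gao and Wellner \cite{GAO20071751}. Their results imply that there exist constants $c_0, \varepsilon_0 > 0$, depending only on $d$ and $p$, such that
$$
\log M\!\left(\varepsilon,\mathcal{M}^d,\|\cdot\|_{L^p(\lambda)}\right) \;\geq\; c_0\, \varepsilon^{-\alpha}, \qquad \text{where } \alpha = \max\{d,(d-1)p\},
$$
for all $0 < \varepsilon \leq \varepsilon_0$. Heuristically, the two regimes correspond to two qualitatively different kinds of worst-case monotonic functions: when $(d-1)p < d$, the packing is driven by locally bumpy non-decreasing perturbations giving rate $\varepsilon^{-d}$; when $(d-1)p \geq d$, it is driven by sharp jumps across $(d{-}1)$-dimensional anti-chains, giving rate $\varepsilon^{-(d-1)p}$. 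The boundary case $d=1$ correctly reduces to the classical $\varepsilon^{-1}$ rate.

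With this packing bound in hand, Corollary \ref{main_result} applies directly with exponent $\alpha = \max\{d,(d-1)p\}$, and the three cases of \eqref{eq:lowerbound-monotone} are precisely the three cases in \eqref{eq:lowerbound-feed-forward}. The constants $c_1, c_2, c_3, W_{\min}$ inherit their dependence on $d$, $p$, $\sigma$ from those of Corollary \ref{main_result} on $b-a$, $c_0$, $\varepsilon_0$, $\alpha$, $p$, $\nu$ and the number of pieces~$K$.

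I expect the only real work to be the bookkeeping needed to extract the packing lower bound with the exact exponent $\max\{d,(d-1)p\}$ from \cite{GAO20071751}: their statements are often phrased in terms of covering numbers (equivalent to packing numbers up to constants) and treat the regimes $(d-1)p < d$ and $(d-1)p \geq d$ separately, so one must check that the same constants $c_0, \varepsilon_0$ can be chosen uniformly for the claimed $\alpha$. Once this verification is done, nothing else is required beyond a single invocation of Corollary \ref{main_result}.
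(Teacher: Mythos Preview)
Your proposal is correct and follows exactly the same approach as the paper: cite the packing lower bound $\log M(\varepsilon,\mathcal{M}^d,\|\cdot\|_{L^p(\lambda)}) \geq c_0 \varepsilon^{-\alpha}$ from \cite{GAO20071751} and then invoke Corollary~\ref{main_result}. The paper's proof is in fact even terser than yours, consisting of precisely these two sentences without the additional heuristic discussion or bookkeeping remarks.
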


Note that, contrary to the case of Hölder balls (Section \ref{section Hölder}), the rate of the lower bound depends on $p$ through $\alpha = \max\{d, (d-1)p\}$.

\begin{proof}
From \cite{GAO20071751}, there exist constants $\varepsilon_0,c_0>0$ such that for $\varepsilon \leq \varepsilon_0$, $\log M\left(\varepsilon, \mathcal{M}_d, \|\cdot\|_{L^p(\lambda)}\right) \geq c_0 \varepsilon^{-\alpha}$.
Using Corollary \ref{main_result}, we obtain the result.
\end{proof}

\subsection{Nearly-matching upper bound in $L^p(\lambda)$ norm}
\label{sec:monotonic-upperbound}

To the best of our knowledge, there does not exist any upper-bound of the $L^p(\lambda)$ approximation error of $\mathcal{M}^d$ with feed-forward neural networks. Checking that all the lower-bounds of Proposition \ref{borne inf monotone} are tight is out of the scope of this paper and we leave it for future research\footnote{Obtaining an upper-bound for ReLU networks seems challenging. For example, the bit extraction technique used in \cite{yarotsky2018optimal} to find a sharp upper bound heavily relies on the local smoothness assumption of the function to approximate, which is not satisfied in general for monotonic functions.}. However, we establish in the next proposition upper-bounds of the $L^p(\lambda)$ approximation error of $\mathcal{M}^d$ with feed-forward neural networks with the Heaviside activation function. This shows that, for the $L^p(\lambda)$ approximation error, the lower-bound obtained in \eqref{eq:lowerbound-monotone}, for $\nu=0$, is tight up to logarithmic factors. The next proposition follows by reinterpreting a metric entropy upper bound of \cite{GAO20071751} in terms of Heaviside neural networks. The proof is postponed to Appendix \ref{preuve borne sup monotone} in the supplement.

\begin{prop}\label{borne sup monotone}

Let $1 \leq p < +\infty$, $d \in \mathbb{N} \setminus \{0,1\}$ and let $\alpha = \max\{d, (d-1)p\}$. There exist positive constants $W_{\emph{min}}$ and $c$, depending only on $d$ and $p$, such that for any integer $W \geq W_{\emph{min}}$, there exists a feed-forward architecture $\mathcal{A}$ with two hidden layers, $W$ weights and the Heaviside activation function such that the set $H_{\mathcal{A}}$ satisfies
\begin{align}
    \sup\limits_{f \in \mathcal{M}^d}\inf\limits_{g \in H_{\mathcal{A}}} \|f - g\|_{L^p(\lambda)} \leq 
    \left\{
    \begin{array}{l l}
         c W^{-\frac{1}{\alpha}} & \textit{ if } p(d-1) \neq d \;, \\
         c W^{-\frac{1}{d}} \log(W) & \textit{ if } p(d-1) = d \;.
    \end{array}
    \right.
\end{align}
\end{prop}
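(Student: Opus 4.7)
The plan is to translate Gao and Wellner's \cite{GAO20071751} upper bound on the $L^p(\lambda)$-metric entropy of $\mathcal{M}^d$ into an approximation result by Heaviside feed-forward neural networks, using the fact that the covering functions they construct are step functions on structured axis-aligned partitions. Concretely, one extracts from \cite{GAO20071751}, for every sufficiently small $\varepsilon>0$, a finite family $\mathcal{G}_\varepsilon \subset L^p(\lambda)$ of step functions from $[0,1]^d$ to $[0,1]$ such that $\sup_{f\in\mathcal{M}^d}\inf_{g\in\mathcal{G}_\varepsilon}\|f-g\|_{L^p(\lambda)}\leq\varepsilon$, where every $g\in\mathcal{G}_\varepsilon$ is piecewise-constant on a partition of $[0,1]^d$ into axis-aligned cells whose faces are drawn from a common list of $O(N^{1/d})$ threshold values per axis, with $N = O(\varepsilon^{-\alpha})$ total cells (and an additional $\log(1/\varepsilon)$ factor in the critical case $p(d-1)=d$).

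Given such a family $\mathcal{G}_\varepsilon$, I would build a two-hidden-layer Heaviside architecture $\mathcal{A}$ with $W = O(dN)$ weights capable of representing every $g\in\mathcal{G}_\varepsilon$ through tuning. In the first hidden layer, for each axis $i\in\{1,\ldots,d\}$ and each of the $O(N^{1/d})$ threshold levels along that axis, I include one unit $x\mapsto\sigma(x_i-t_{i,k})$ where $\sigma$ is the Heaviside activation and $t_{i,k}$ is a tunable bias. In the second hidden layer, for each cell $C$ of the common grid, I include one Heaviside unit whose weights are $+1$ on the first-layer units indexing the lower faces of $C$, $-1$ on those indexing its upper faces, and $0$ elsewhere, with bias $-d+1/2$; this unit outputs $1$ precisely when $x\in C$. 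The output layer linearly combines these $N$ cell indicators with coefficients equal to the step values of $g$ on each cell. Both the thresholds $t_{i,k}$ and the output coefficients are weights of $\mathcal{A}$, so the same fixed architecture realizes every $g\in\mathcal{G}_\varepsilon$ through an appropriate weight choice.

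Counting weights, the first hidden layer contributes $O(dN^{1/d})$, the second $O(dN)$ (each of the $N$ cell-indicator units has $2d$ non-zero incoming weights plus a bias), and the output layer $O(N)$, for a total $W = O(dN)$. Inverting $W = O(d\,\varepsilon^{-\alpha})$ then yields $\varepsilon = O(W^{-1/\alpha})$, with an extra $\log W$ factor in the critical case $p(d-1) = d$, as claimed. The main obstacle will be establishing that a covering $\mathcal{G}_\varepsilon$ in the above form exists: if \cite{GAO20071751}'s original construction does not directly produce partitions living on a common grid of size $O(N)$, one needs to refine it, and this refinement must preserve the entropy rate (for instance, using that Gao-Wellner's partitions are dyadic in nature, so that they can be embedded into a single dyadic grid of size $O(N)$ up to constants).
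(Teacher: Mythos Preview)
Your plan has the right overall shape---reinterpret Gao--Wellner's covering as piecewise-constant functions and realize them with a two-hidden-layer Heaviside network---but the specific architecture you propose rests on a structural assumption that fails for $p>1$.

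The gap is the claim that the covering functions are piecewise-constant on partitions ``whose faces are drawn from a common list of $O(N^{1/d})$ threshold values per axis.'' Gao--Wellner's construction is \emph{adaptive}: for each $f\in\mathcal{M}^d$ one starts from a dyadic grid at scale $2^{-N}$ and locally refines, up to $l$ additional times, wherever the oscillation of $f$ exceeds a geometrically growing threshold. The resulting partition consists of dyadic cubes at scales $2^{-N}$ through $2^{-(N+l)}$; it is not a product grid, and the distinct threshold values along a single axis can number $\Theta(M)$ (with $M$ the total number of cubes), not $O(M^{1/d})$. Embedding these cubes into a regular grid forces the finest scale everywhere and inflates the cell count to $2^{(N+l)d}$, which is polynomially larger than the $M\asymp\varepsilon^{-\alpha}$ adaptive cubes. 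More directly: a regular grid with $m^d$ cells approximates $\mathcal{M}^d$ in $L^p(\lambda)$ only to accuracy $O(m^{-1/p})$ (bound the $p$-th powers of oscillations along each diagonal chain by $1$, and there are $O(m^{d-1})$ chains), hence needs $m^d\asymp\varepsilon^{-dp}$ cells. Since $dp>\alpha=\max\{d,(d-1)p\}$ whenever $p>1$, your architecture would yield at best $W^{-1/(dp)}$ rather than the claimed $W^{-1/\alpha}$. Your proposed fix---``embed into a single dyadic grid of size $O(N)$''---is precisely what does not work.

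The paper sidesteps this by \emph{not} sharing first-layer units across cubes. It first shows that any function $\sum_{i=1}^M\alpha_i\mathds{1}_{C_i}$ on $M$ disjoint hypercubes is realized by a two-hidden-layer Heaviside network with $2(d+1)^2M$ weights: the first hidden layer contains $2d$ units \emph{per cube} (one per face, each with its own tunable affine parameters), and the second hidden layer has one unit per cube that AND's its $2d$ face indicators via $\sigma\bigl(\sum_j p^i_j(x)-2d\bigr)$. Because the cube locations are encoded entirely in the first-layer weights, the same fixed architecture represents the adaptive approximant $\tilde f$ of every $f$, with $M=\sum_i|S_i|\lesssim\varepsilon^{-\alpha}$ cubes taken directly from the Gao--Wellner refinement. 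No common grid is needed, and the rate follows.
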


\section{Conclusion and other possible applications}
\label{sec:Barron}

We proved a general lower bound on the approximation error of $F$ by $G$ in $L^p(\mu)$ norm (Theorem~\ref{general_thm}), in terms of generic properties of $F$ and $G$ (packing number of $F$, range of $F$, fat-shattering dimension of $G$). The proof relies on VC dimension theory as in the sup norm case, but uses an additional key probabilistic argument due to Mendelson (\cite{971753}, see Proposition~\ref{result_mendelson}), solving a question raised by DeVore et al.~\cite{devore_hanin_petrova_2021}.

In Sections~\ref{section Hölder} and~\ref{section monotonic} we detailed two applications, where Corollary~\ref{main_result} yields nearly optimal approximation lower bounds in $L^p$ norm, and which correspond to two examples where the approximation rate may depend or not depend on $p$.

Theorem~\ref{general_thm} and Corollary~\ref{main_result} can be used to derive approximation lower bounds for many other cases. Corollary~\ref{main_result} only requires a (tight) lower bound on the packing number of $F$, for which approximation theory provides several examples. For instance, for the \textit{Barron space} introduced in \cite{Barron}, Petersen and Voigtlaender \cite{https://doi.org/10.48550/arxiv.2112.12555} showed a tight lower bound on the log packing number in $L^p(\lambda,[0,1]^d)$ norm, of order $\eps^{-2d/(d+2)}$. Applying Corollary~\ref{main_result}, this yields an approximation lower bound of $(LW)^{-\left(\frac{1}{2} + \frac{1}{d}\right)} \log^{-3\left(\frac{1}{2} + \frac{1}{d}\right)}(W)$ for ReLU networks (see Appendix~\ref{appendix_perspectives} in the supplement for details). Other examples of sets $F$ for which tight lower bounds on the packing number (or metric entropy) are available include: multivariate cumulative distribution functions \cite{blei07-metricEntropyHighDim}, multivariate convex functions \cite{guntuboyina13-coveringConvex}, and functions with other shape constraints \cite{groeneboom_jongbloed_2014}.

Piecewise-polynomial activation functions are not essential for the current derivation. Indeed, Theorem~\ref{general_thm} can also be applied to the case where $G$ corresponds to a neural network with other activation functions such as the sigmoid. In the sigmoid case, the pseudo-dimension is known to be at most of the order of $W^4$ (see \cite{KARPINSKI1997169,MR1741038}), which we can use to derive an approximation lower bound similar to that of Corollary 1, with a smaller right-hand side for large $W$. However, to the best of our knowledge, it is not known whether the $\mathcal{O}(W^4)$ VC bound is tight (only a lower bound of the order of $W^2$ is known), so the resulting approximation lower bound could be loose. We leave this interesting question for future work.

Theorem~\ref{general_thm} can also be applied to other approximating sets $G$, beyond classical feed-forward neural networks, as soon as a (tight) upper bound on the fat-shattering dimension of $G$ is available. For example, upper bounds were derived by \cite{wang22-vcdim-partiallyQuantizedNN} on the VC dimension of partially quantized networks, while \cite{belkin18a-RKHS} derived bounds on the fat-shattering dimension of some RKHS. Investigating such applications and whether the obtained approximation lower bounds are rate-optimal is a natural research direction for the future.

\section*{Acknowledgements} The authors would like to thank Keridwen Codet for contributing to the results of Sections~\ref{sec:monotonic-warmup} and~\ref{sec:monotonic-upperbound}.

This work has benefited from the AI Interdisciplinary Institute ANITI, which is funded by the French ``Investing for the Future – PIA3'' program under the Grant agreement ANR-19-P3IA-0004. The authors gratefully acknowledge the support of IRT Saint Exupéry and the DEEL project.\footnote{\url{https://www.deel.ai/}}

\bibliographystyle{alpha} % try also: abbrv
\bibliography{biblio}

%\input{checklist}

%%%%%%%%%%%%%%%%%%%%%%%%%
% SUPPLEMENTARY MATERIAL
%%%%%%%%%%%%%%%%%%%%%%%%%

%\section{Appendix}

%Optionally include extra information (complete proofs, additional experiments and plots) in the appendix. This section will often be part of the supplemental material.

\newpage

\begin{center}

  \vbox{%
    \hsize\textwidth
    \linewidth\hsize
    \vskip 0.1in
          \hrule height 4pt
          \vskip 0.25in
          \vskip -\parskip%
    \centering
    {\LARGE{\bf A general approximation lower bound in $L^p$ norm, with applications to feed-forward neural networks} \\ \smallskip Supplementary Material \par}
          \vskip 0.29in
          \vskip -\parskip
          \hrule height 1pt
          \vskip 0.09in%
  }
\end{center}

\appendix
This is the appendix for ``A general approximation lower bound in $L^p$ norm, with applications to feed-forward neural networks''.

\section{Feed-forward neural networks: formal definition}
\label{sec:deffeed-forward}

In all this paper, we use the following classical graph-theoretic definitions for feed-forward neural networks given, e.g., in \cite{JMLR:v20:17-612} (with slightly different terms and notation).

A \emph{feed-forward neural network architecture} $\mathcal{A}$ of depth $L \geq 1$ is a directed acyclic graph $(V,E)$ with $d \geq 1$ nodes with in-degree $0$ (also called the \emph{input neurons}), a single node with out-degree $0$ (also called the \emph{output neuron}), and such that the longest path in the graph has length $L$. 

We define layers $\ell=0,1,\ldots,L$ recursively as follows:
\begin{itemize}
    \item layer $0$ is the set $V_0$ of all input neurons; we assume that $V_0 = \{1,\ldots,d\}$ without loss of generality.
    \item for any $\ell=1,\ldots,L$, layer $\ell$ is the set $V_\ell$ of all nodes that have one or several predecessors\footnote{A node $u \in V$ is a predecessor of another node $v \in V$ if there is a directed edge from $u$ to $v$.} in layer $\ell-1$, possibly other predecessors in layers $0,1,\ldots,\ell-2$, but no other predecessors.
\end{itemize}

Layer $L$ consists of a single node: the output neuron. Layers $1,\ldots,L-1$ are called the \emph{hidden layers} (if $L \geq 2)$. Note that skip connections are allowed, i.e., there can be connections between non-consecutive layers.

Given a feed-forward neural network architecture $\mathcal{A}$ of depth $L \geq 1$, we associate real numbers $w_e \in \R$ to all edges $e \in E$ and $w_v \in \R$ to all nodes $v \in V_1 \cup \ldots \cup V_L$. These real numbers are called \emph{weights}
(they correspond to linear coefficients and biases) and are concatenated in a \emph{weight vector} $\bw \in \R^W$, where $W = \card(E) + \sum_{\ell=1}^L \card(V_\ell)$ is the total number of weights. 

Given $\mathcal{A}$, an associated weight vector $\bw \in \R^W$, and a function $\sigma:\R \to \R$ (called \emph{activation function}), the network represents the function $g_{\bw}:\R^d \to \R$ defined recursively as follows. We write $P_v \subset V$ for the set of all predecessors of any node $v \in V$, and $w_{u \to v}$ for the weight on the edge from $u$ to $v$. The recursion from layer $\ell=0$ to layer $\ell=L$ reads: given $x = (x_1,\ldots,x_d) \in \R^d$,
\begin{itemize}
    \item each input neuron $v \in \{1,\ldots,d\}$ outputs the value $y_v := x_v$;
    \item for any $\ell=1,\ldots,L-1$, each neuron $v \in V_\ell$ outputs $y_v := \sigma\bigl(\sum_{u \in P_v} w_{u \to v} y_u + w_v\bigr)$;
    \item the unique output neuron $v \in V_L$ outputs $g_{\bw}(x) := \sum_{u \in P_v} w_{u \to v} y_u + w_v$.
\end{itemize}

Finally, we define $H_{\mathcal{A}} := \{g_{\bw}: \bw \in \R^W\}$ to be the set of all functions that can be represented by tuning all the weights assigned to the network (the dependency on the activation function $\sigma$ is not written explicitly).

\section{Main results: technical details}
\label{Appendix}

We provide technical details that were missing to establish Proposition \ref{result_mendelson}, Theorem~\ref{general_thm} and Corollary~\ref{main_result}.

\subsection{Proof of Proposition \ref{result_mendelson}}
\label{sec:rescaling}

Proposition \ref{result_mendelson} is a direct extension of \cite[Corollary~3.12]{971753} to any range $[a,b]$. We first recall this result but in slightly different terms (see the comments afterwards).

\begin{prop}[Corollary~3.12 in \cite{971753}, ``almost equivalent'' statement]
    \label{prop_mendelson_original}
    Let $G$ be a set of measurable functions from a measurable space $\cX$ to $[0,1]$, such that $\fat_\gamma(G)<+\infty$ for all $\gamma>0$. Then, for every $1 \leq p < +\infty$, there is some constant $c_p>0$ depending only on $p$ such that, for every probability measure $\mu$ on $\cX$ and every $\eps>0$,
    \begin{align*}
    \log M\left(\varepsilon, G, \|\cdot\|_{L^p(\mu)}\right) \leq c_p \fat_{\frac{\varepsilon}{32}}(G) \log^2 \left(\frac{2 \fat_{\frac{\varepsilon}{32}}(G)}{\varepsilon}\right).
    \end{align*}
\end{prop}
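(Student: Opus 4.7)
The plan is to follow Mendelson's original strategy, which reduces the continuous problem of bounding $L^p(\mu)$-packings to a combinatorial problem on a random finite sample, where the fat-shattering dimension enters through a Sauer--Shelah type theorem due to Alon, Ben-David, Cesa-Bianchi and Haussler (ABCH).

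First I would fix an $\eps$-packing $\{g_1, \ldots, g_N\} \subset G$ in $L^p(\mu)$ and seek to bound $N$. Since every $g_i$ takes values in $[0,1]$, the inequality $\|g_i - g_j\|_{L^p(\mu)}^p > \eps^p$ and the bound $|g_i-g_j| \leq 1$ together with a standard Markov-type argument imply that $\mu\bigl(\{x : |g_i(x)-g_j(x)| \geq \eps/c\}\bigr) \geq (\eps/c')^p$ for absolute constants $c, c'$. Next I would draw an i.i.d. sample $X_1, \ldots, X_n$ from $\mu$ with $n$ of order $\fat_{\eps/32}(G)$ times a logarithmic factor; a union bound over the $\binom{N}{2}$ pairs together with Chernoff's inequality shows that, with positive probability, for every $i \neq j$ a significant fraction of the indices $k$ satisfy $|g_i(X_k) - g_j(X_k)| \geq \eps/c$. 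In particular, the restrictions $g_i|_{\{X_1, \ldots, X_n\}}$ form a packing at scale $\eps/c$ in the empirical $\ell^p$-norm on the sample, so the problem has been discretized.

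The combinatorial core is the ABCH theorem: on any sample of size $n$, the maximum size of a $\gamma$-separated (in empirical $\ell^\infty$) subset of $G|_{\{X_1,\ldots,X_n\}}$ is at most $(en/\gamma)^{K \fat_\gamma(G)}$ for an absolute constant $K$. Passing from empirical $\ell^p$ to $\ell^\infty$ separation at a slightly smaller scale (using again the $[0,1]$-boundedness and a pigeonhole on the coordinates where $g_i$ and $g_j$ differ substantially), and applying this bound at $\gamma = \eps/32$, yields $N \leq (en/(\eps/32))^{K \fat_{\eps/32}(G)}$. Taking logarithms and substituting the choice of $n \asymp \fat_{\eps/32}(G)\log(\fat_{\eps/32}(G)/\eps)$ produces the squared logarithmic factor and gives the stated inequality.

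The main obstacle is the ABCH combinatorial bound itself, whose proof relies on a delicate symmetrization/double-sampling argument originally due to Alon et al., and which is invoked here as a black box. In the context of the present paper, Proposition~\ref{prop_mendelson_original} is simply a restatement of Corollary~3.12 of \cite{971753}, so the only additional content needed from the authors is the rescaling from $[0,1]$ to a general range $[a,b]$; this is straightforward because $L^p(\mu)$-distances scale linearly and $\fat_\gamma$ scales as $\gamma \mapsto \gamma/(b-a)$ under affine rescaling, and it is carried out when deducing Proposition~\ref{result_mendelson} from Proposition~\ref{prop_mendelson_original}.
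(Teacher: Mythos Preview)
Your sketch is essentially correct and follows the same route the paper indicates. In the paper, Proposition~\ref{prop_mendelson_original} is not proved from scratch: it is taken from \cite{971753}, and the only argument supplied is a short explanation of why Mendelson's original hypotheses (image-admissible Suslin and uniform Glivenko--Cantelli) can be replaced by the sole assumption $\fat_\gamma(G)<+\infty$ for all $\gamma>0$. The key observation there is exactly the direction you take: start from an $\eps$-packing of $G$ in $L^p(\mu)$ and show it remains a packing in the empirical $L^p(\mu_n)$ norm on some random sample, which only requires concentration for \emph{finitely many} bounded random variables (no uniform law of large numbers). The paper phrases this via Hoeffding's inequality applied directly to the averages $\frac{1}{n}\sum_k |g_i(X_k)-g_j(X_k)|^p$, whereas you pass through a Markov step and then Chernoff on the indicators $\{|g_i-g_j|\geq \eps/c\}$; both achieve the same reduction, the paper's being marginally more direct. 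After that, the combinatorial ABCH bound on empirical $\ell^\infty$-packings is invoked as a black box in both your sketch and in Mendelson's proof.

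One small correction: you write that ``the only additional content needed from the authors is the rescaling from $[0,1]$ to $[a,b]$''. That rescaling is indeed the content of Proposition~\ref{result_mendelson}, but for Proposition~\ref{prop_mendelson_original} itself the paper's added content is the remark about dropping the measurability/Glivenko--Cantelli hypotheses, not the rescaling. Your argument already incorporates this modification implicitly by working in the packing-to-empirical direction, so nothing is missing; it is just worth noting that this reversal of direction is precisely the point the authors single out.
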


To be precise, \cite[Corollary~3.12]{971753} was stated a little differently. Instead of the assumption on $\fat_\gamma(G)$, there were two conditions on $G$:
(i) $G$ satisfies a weak measurability assumption such as the ``image admissible Suslin'' property, and (ii) $G$ is a uniform Glivenko-Cantelli class.
Fortunately, note that assumption (i) could easily be checked in special cases such as the setting of Corollary~\ref{main_result}, and that assumption (ii) is equivalent to $\fat_\gamma(G)<+\infty$ for all $\gamma>0$ when (i) holds and when $G$ only consists of $[0,1]$-valued functions (see
\cite{alon97-scaleSensitiveDimensions}, Theorem~2.5). The two statements are thus ``almost equivalent''. However, we stress that (i) and (ii) are not necessary (assuming $\fat_\gamma(G)<+\infty$ for all $\gamma>0$). To see why, it suffices to adapt the proof of \cite[Corollary~3.12]{971753} as follows: instead of starting from an $\varepsilon$-packing of $G$ in empirical $L^p(\mu_n)$ norm and showing that it is also an $\varepsilon'$-covering of $G$ in $L^p(\mu)$ norm, with $\varepsilon'>\varepsilon$, we can start from an $\varepsilon$-packing of $G$ in $L^p(\mu)$ norm and show that it is also an $\varepsilon$-packing of $G$ in empirical $L^p(\mu_n)$ norm for some large integer $n$ (with positive probability). This last statement directly follows from the Hoeffding inequality: no uniform law of large numbers is required, since we only need to compare empirical averages to their expectations for a finite number of bounded functions.\footnote{In passing, all occurrences of $\fat_{\frac{\varepsilon}{32}}(G)$ could be replaced with $\fat_{\frac{\varepsilon}{8}}(G)$.}

We now explain how to derive Proposition~\ref{result_mendelson} (with an arbitrary range $[a,b]$) as a straightforward consequence of Proposition~\ref{prop_mendelson_original}.

\begin{proof}[Proof (of Proposition \ref{result_mendelson}).]
\label{proof_lemma_ext_mendel}
In order to apply Proposition~\ref{prop_mendelson_original}, we reduce the problem from $[a,b]$ to $[0,1]$ by translating and rescaling every function in $G$. For $g \in G$, we define $\tilde{g}:\cX \to [0,1]$ by $\tilde{g}(x) = \frac{g(x) - a}{b-a}$, and we set
    $$
    \tilde{G} = \left\{\tilde{g} \; : \ g \in G \right\} \;.
    $$
    Note that every $\tilde{g} \in \tilde{G}$ is indeed $[0,1]$-valued.
    
We now note that translation does not affect packing numbers nor the fat-shattering dimension, while rescaling only changes the scale $\eps$ by a factor of $b-a$. More precisely, we have the following two properties:

\textbf{Property 1:} For all $u > 0$, $\fat_{\frac{u}{b-a}}(\tilde{G}) = \fat_u(G)$. \\
\textbf{Property 2:} For all $u > 0$, $M\!\left(\frac{u}{b-a}, \tilde{G}, \|\cdot\|_{L^p(\mu)}\right) = M\!\left(u, G, \|\cdot\|_{L^p(\mu)}\right)$.

Before proving the two properties (see below), we first conclude the proof of Proposition \ref{result_mendelson}. By Property~1, $\fat_{\gamma}(\tilde{G}) = \fat_{\gamma (b-a)}(G)$, which by assumption is finite for all $\gamma>0$. Since every $\tilde{g} \in \tilde{G}$ is $[0,1]$-valued, we can thus apply Proposition~\ref{prop_mendelson_original}. Using it with $\tilde{\eps} = \eps/(b-a)$, we get
\[
\log M\!\left(\tilde{\varepsilon}, \tilde{G}, \|\cdot\|_{L^p(\mu)}\right) \leq c_p \fat_{\frac{\tilde{\varepsilon}}{32}}(\tilde{G}) \log^2 \!\left(\frac{2 \fat_{\frac{\tilde{\varepsilon}}{32}}(\tilde{G})}{\tilde{\varepsilon}}\right) \;.
\]
Combining with the two equalities in Properties~1 and ~2, we obtain
\[
\log M\!\left(\eps, G, \|\cdot\|_{L^p(\mu)}\right) \leq c_p \fat_{\frac{\eps}{32}}(G) \log^2 \!\left(\frac{2 (b-a) \fat_{\frac{\eps}{32}}(G)}{\eps}\right) \;,
\]
which concludes the proof of Proposition \ref{result_mendelson}.

We now prove the two properties.
    
\textbf{Proof of Property~1.} We first show that $\fat_{\frac{u}{b-a}}(\tilde{G}) \geq \fat_u(G)$. To that end, let $S = \{x_1, \ldots, x_m\}$ and $r : S \to \mathbb{R}$ be such that for any $E \subset S$, there exists $g \in G$ such that $g(x) \geq r(x) + u$ if $x \in E$ and $g(x) \leq r(x) - u$ otherwise. Setting $\tilde{r}(x) = \frac{r(x) - a}{b-a}$, we can see that $\tilde{g}(x) \geq \tilde{r}(x) + \frac{u}{b-a}$ if $x \in E$ and $\tilde{g}(x) \leq \tilde{r}(x) - \frac{u}{b-a}$ otherwise, which proves $\fat_{\frac{u}{b-a}}(\tilde{G}) \geq \fat_u(G)$. The reverse inequality is proved similarly.
    
\textbf{Proof of Property~2.} Let $\{g_1, \ldots, g_m\}$ be a $u$-packing of $G$ in $L^p(\mu)$ norm. This means that $\|g_i - g_j\|_{L^p(\mu)}>u$ and therefore $\|\tilde{g}_i - \tilde{g}_j\|_{L^p(\mu)}>\frac{u}{b-a}$ for all $i \neq j \in \{1,\ldots,m\}$, so that $\{\tilde{g}_1, \ldots, \tilde{g}_m\} \subset \tilde{G}$ is a $\frac{u}{b-a}$-packing of $\tilde{G}$. This proves $M\bigl(\frac{u}{b-a}, \tilde{G}, \|\cdot\|_{L^p(\mu)}\bigr) \geq M\bigl(u, G, \|\cdot\|_{L^p(\mu)}\bigr)$.
The reverse inequality is proved similarly.
\end{proof}

\subsection{Clipping can only help}
\label{sec:clipping}

The next two lemmas indicate that clipping (truncature) to a known range can only help. These are key to apply Proposition~\ref{result_mendelson} in our setting.
In the sequel, for a set $G$ of functions from a set $\cX$ to $\mathbb{R}$, and for $a < b$ in $\mathbb{R}$, we denote by $G_{[a,b]}$ the set of all functions in $G$ whose values are truncated (clipped) to the segment $[a,b]$, that is, $G_{[a,b]} = \{\Tilde{g}: g \in G\}$, where $\Tilde{g}:\cX \to \R$ is given by
$$
\forall x \in \mathcal{X}, \quad
\Tilde{g}(x) = \min(\max(a,g(x)), b) \;.
$$
\begin{lemma}
\label{lemma_fatG}
Let $G$ be a set of functions defined on a set $\mathcal{X}$, and with values in $\mathbb{R}$. Let $G_{[a,b]}$ be defined as above.
Then, for any $\gamma > 0$,
\begin{align*}
&\fat_{\gamma}(G) \geq \fat_{\gamma}(G_{[a,b]}) \;.
\end{align*}
\end{lemma}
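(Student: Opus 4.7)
The plan is to show that if a set $S = \{x_1, \ldots, x_N\} \subset \cX$ is $\gamma$-shattered by the clipped class $G_{[a,b]}$ with witness $r$, then $S$ is also $\gamma$-shattered by $G$ with the same witness $r$; the inequality on the fat-shattering dimensions then follows immediately. If $\gamma > (b-a)/2$, no valid witness can satisfy simultaneously $r(x_i) + \gamma \leq b$ and $r(x_i) - \gamma \geq a$, so $\fat_\gamma(G_{[a,b]}) = 0$ and the bound is trivial; we therefore assume $\gamma \leq (b-a)/2$.

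A preliminary observation is that any witness $r$ of $\gamma$-shattering by $G_{[a,b]}$ must satisfy $r(x_i) \in [a + \gamma, b - \gamma]$ for every $i$. Indeed, applying the shattering condition with $E = \{x_i\}$ produces some $\tilde{g} \in G_{[a,b]}$ with $\tilde{g}(x_i) \geq r(x_i) + \gamma$; since $\tilde{g}$ is $[a,b]$-valued, this forces $r(x_i) + \gamma \leq b$. Taking $E = S \setminus \{x_i\}$ symmetrically yields $r(x_i) - \gamma \geq a$. Consequently $r(x_i) + \gamma \geq a + 2\gamma > a$ and $r(x_i) - \gamma \leq b - 2\gamma < b$ strictly, for every $i$.

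The remaining transfer step is essentially pointwise. Fix $E \subset S$, pick $\tilde{g} \in G_{[a,b]}$ realizing the shattering pattern associated with $E$, and let $g \in G$ be such that $\tilde{g}$ is its clipping. The map $t \mapsto \min(\max(a,t), b)$ merely pulls values lying outside $[a,b]$ to the nearest endpoint, so $g(x) \geq \tilde{g}(x)$ whenever $\tilde{g}(x) > a$ and $g(x) \leq \tilde{g}(x)$ whenever $\tilde{g}(x) < b$. Combined with the preliminary observation, for $x_i \in E$ we get $g(x_i) \geq \tilde{g}(x_i) \geq r(x_i) + \gamma$, and for $x_i \notin E$ we get $g(x_i) \leq \tilde{g}(x_i) \leq r(x_i) - \gamma$. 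Hence $g \in G$ realizes the same shattering pattern, completing the argument. No substantive obstacle is expected: the whole proof is a verification, and the only point requiring care is the normalization of $r$ to lie in $[a+\gamma, b-\gamma]$, which guarantees that clipping cannot reverse any of the strict inequalities dictated by the shattering definition.
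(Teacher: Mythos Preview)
Your proof is correct and follows essentially the same approach as the paper: both establish that any witness $r$ for $\gamma$-shattering by $G_{[a,b]}$ must take values strictly inside $(a,b)$, and then observe that clipping can only move values toward the interior, so the un-clipped $g$ satisfies the same shattering inequalities. Your bound $r(x_i)\in[a+\gamma,b-\gamma]$ is a touch sharper than the paper's $r_i\in(a,b)$, but the argument is otherwise identical.
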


\begin{proof}
Let $\gamma > 0$. The case when $\fat_{\gamma}(G_{[a,b]}) = 0$ is straightforward. We thus assume that $\fat_{\gamma}(G_{[a,b]}) \geq 1$. 
To prove the result, we show that any subset $A$ of $X$ that is \textit{$\gamma$-shattered} by $G_{[a,b]}$ is also \textit{$\gamma$-shattered} by $G$.
Let us consider such a subset $A = \{x^1, \ldots, x^N\} \subset X$, with cardinality $N \geq 1$. Hence, there exists $\{r_1, \ldots, r_N\} \subset \mathbb{R}$ such that for any $E \subset A$, there exists $\tilde{g} \in G_{[a,b]}$ such that $\Tilde{g}(x_i) - r_i \geq \gamma$ if $x_i \in E$ and $\Tilde{g}(x_i) - r_i \leq -\gamma$ otherwise. Note that this must imply that $r_i \in ]a,b[$ for all $i = 1, \ldots, N$ (indeed, by choosing $E$ such that $x_i \in E$ or not, we have either $r_i + \gamma \leq \tilde{g}(x_i) \leq b$ or $r_i - \gamma \geq \tilde{g}(x_i) \geq a$). Now fix $i \in \{1, \ldots, N\}$ and let us assume $\tilde{g}(x_i) - r_i \geq \gamma$ (by symmetry, the reversed case $\tilde{g}(x_i) - r_i \leq -\gamma$ is treated the same way). Because $r_i > a$, this implies that $\tilde{g}(x_i) > a$ and thus $g(x_i) \geq \tilde{g}(x_i)$ (by definition of $\tilde{g}$), which entails $g(x_i) - r_i \geq \gamma$. It follows that if $G_{[a,b]}$ \textit{$\gamma$-shatters} $A$, then $G$ also \textit{$\gamma$-shatters} $A$, and the result follows.
\end{proof}
The following lemma formalizes the well-known idea that it is easier to approach a function with values in a finite range by a function with values in the same range.
\begin{lemma}
\label{lemma_supinf_G}
Let $G$ be a set of measurable functions from a measurable space $\mathcal{X}$ to $\mathbb{R}$, and let $G_{[a,b]}$ be defined as above. Assume $F$ is a set of measurable functions from $\mathcal{X}$ to $[a,b]$. Then, for any probability measure $\mu$ on $\mathcal{X}$,
$$
\sup_{f \in F}\inf_{g \in G} \|f - g\|_{L^p(\mu)} \geq \sup_{f \in F} \inf_{\tilde{g} \in G_{[a,b]}} \|f - \tilde{g}\|_{L^p(\mu)} \;.
$$
\end{lemma}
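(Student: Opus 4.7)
The plan is to prove the lemma via a pointwise contraction argument, then integrate and pass to the infimum and supremum. The central observation is that the clipping operation $y \mapsto \min(\max(a,y),b)$ is the metric projection of $\mathbb{R}$ onto the closed convex interval $[a,b]$, and is therefore $1$-Lipschitz with every point of $[a,b]$ as a fixed point.

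First, I would fix an arbitrary $f \in F$ and an arbitrary $g \in G$, and consider its clipping $\tilde{g} \in G_{[a,b]}$. For every $x \in \mathcal{X}$, since $f(x) \in [a,b]$, the $1$-Lipschitz property of the clipping map yields
\[
|\tilde{g}(x) - f(x)| \;=\; |\tilde{g}(x) - \widetilde{f(x)}| \;\leq\; |g(x) - f(x)|.
\]
(This can also be verified by a three-case analysis on whether $g(x)$ lies below $a$, inside $[a,b]$, or above $b$; in the first case $|\tilde g(x)-f(x)|=f(x)-a\leq f(x)-g(x)$, the second case is an equality, and the third is symmetric.)

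Next, I would raise both sides to the power $p$, integrate against the probability measure $\mu$, and take $p$-th roots to obtain
\[
\|f - \tilde{g}\|_{L^p(\mu)} \;\leq\; \|f - g\|_{L^p(\mu)}.
\]
Since $\tilde{g} \in G_{[a,b]}$, this gives $\inf_{h \in G_{[a,b]}} \|f - h\|_{L^p(\mu)} \leq \|f - g\|_{L^p(\mu)}$. As $g \in G$ was arbitrary, I would then take the infimum over $g \in G$ on the right-hand side to get
\[
\inf_{h \in G_{[a,b]}} \|f - h\|_{L^p(\mu)} \;\leq\; \inf_{g \in G} \|f - g\|_{L^p(\mu)},
\]
and finally take the supremum over $f \in F$ on both sides, which yields the claimed inequality.

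There is no real obstacle here: the argument is a straightforward contraction-plus-monotonicity chain. The only mild subtlety is to make explicit that $f(x) \in [a,b]$ (which is given) is what makes clipping a contraction \emph{towards} $f(x)$, rather than, in general, an arbitrary projection; this is what fails if $F$ is not assumed to have range inside $[a,b]$.
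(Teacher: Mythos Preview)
Your proof is correct and is essentially the same as the paper's: both establish the pointwise inequality $|f(x)-\tilde g(x)|\le|f(x)-g(x)|$ (the paper via a two-case analysis on whether $g(x)\in[a,b]$, you via the $1$-Lipschitz property of the projection onto $[a,b]$, which you also back up with the same case split), then integrate and pass to $\inf$ and $\sup$.
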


\begin{proof}
To prove the above result, it is enough to show that for any $f \in F$ and $g \in G$, the function $\tilde{g}$ is pointwise at least as close to $f$ as $g$ is, which for all $f \in F$ yields $\inf_{g \in G} \|f - g\|_{L^p(\mu)} \geq \inf_{\tilde{g} \in G_{[a,b]}} \|f - \tilde{g}\|_{L^p(\mu)}$. By definition of $G_{[a,b]}$, for any $x \in \cX$, if $g(x) \in [a,b]$, then $|f(x) - g(x)| = |f(x) - \tilde{g}(x)|$. And if $g(x) \notin [a,b]$, then $|f(x) - \tilde{g}(x)| < |f(x) - g(x)|$ since $f(x) \in [a,b]$. It follows that the discrepancy $|f - \tilde{g}|$ is everywhere bounded by $|f - g|$, and the result follows.
\end{proof}

\subsection{Missing details in the proof of Theorem \ref{general_thm}}
\label{function_study}

We provide all details that were missing to derive \eqref{main_result_proof_conclusion}, which is a direct consequence of Lemma~\ref{resolution equation en epsilon} below. We follow the convention $a P^{-\frac{1}{\alpha}}\log^{-\frac{2}{\alpha}}(P) = +\infty$ when $P=1$.

\begin{lemma}\label{resolution equation en epsilon}
Let $P \in \mathbb{N}^*$ and $c, \alpha, r>0$. There exist constants $a,\varepsilon''_1>0$ depending only on $c$, $\alpha$ and~$r$ such that, for all $\varepsilon \in (0,r)$ satisfying 
\begin{equation}\label{erivuqetbn} 
\varepsilon^{-\alpha} \leq c P \log^2\left(\frac{rP}{\varepsilon}\right) \;,
\end{equation}
we have
$$\varepsilon \geq \min\left(\varepsilon''_1,a P^{-\frac{1}{\alpha}}\log^{-\frac{2}{\alpha}}(P)\right).$$
\end{lemma}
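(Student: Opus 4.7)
My plan is a two-step bootstrap argument, after isolating the trivial case $P=1$.

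For $P=1$, the convention makes $aP^{-1/\alpha}\log^{-2/\alpha}(P)=+\infty$, so the claim reduces to showing $\varepsilon\geq \varepsilon''_1$ for some $\varepsilon''_1>0$. Here (\ref{erivuqetbn}) reads $\varepsilon^{-\alpha}\leq c\log^2(r/\varepsilon)$; since the left-hand side blows up polynomially and the right-hand side only logarithmically as $\varepsilon\to 0^+$, the solution set in $(0,r)$ is bounded below by some positive constant depending only on $c,\alpha,r$, which I take as $\varepsilon''_1$.

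For $P\geq 2$ the target is $\varepsilon\geq aP^{-1/\alpha}\log^{-2/\alpha}(P)$. Setting $y:=\log(1/\varepsilon)$ rewrites (\ref{erivuqetbn}) as
\begin{equation*}
e^{\alpha y}\leq cP(\log(rP)+y)^2.
\end{equation*}
In Step~1 I would establish the crude bound $y\leq C_3\log P + C_4$ for constants depending only on $c,\alpha,r$. The argument splits on whether $y\leq \log(rP)$ (in which case $y$ is already $O(\log P)$, using $\log(rP)\leq C_1\log P$ for $P\geq 2$ with $C_1$ depending only on $r$) or $y>\log(rP)$ (in which case the displayed inequality gives $e^{\alpha y}\leq 4cPy^2$; taking logarithms yields $\alpha y-2\log y\leq \log(4cP)$, and since $y\mapsto \alpha y-2\log y$ is strictly increasing for $y>2/\alpha$ and tends to $+\infty$, this forces $y=O(\log P)$). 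In Step~2, the crude bound implies $\log(rP/\varepsilon)=\log(rP)+y\leq C_5\log P$ uniformly in $P\geq 2$. Plugging back into (\ref{erivuqetbn}) yields $\varepsilon^{-\alpha}\leq cC_5^2\,P\log^2 P$, hence $\varepsilon\geq (cC_5^2)^{-1/\alpha}P^{-1/\alpha}\log^{-2/\alpha}(P)$, and I take $a:=(cC_5^2)^{-1/\alpha}$.

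The main obstacle is that (\ref{erivuqetbn}) is implicit in $\varepsilon$: the right-hand side depends on $\varepsilon$ through $\log(rP/\varepsilon)$, so one cannot simply invert. The bootstrap resolves this, but one has to check carefully that the constants $C_3,C_4,C_5,a$ depend only on $c,\alpha,r$ and not on $P$. This is routine once the uniform bound $\log(rP)\leq C_1\log P$ for $P\geq 2$ is in place and one uses the monotonicity of $\alpha y-2\log y$.
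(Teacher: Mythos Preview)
Your proof is correct and takes a genuinely different route from the paper's. The paper introduces the auxiliary function $f(x)=x^{\alpha}/\log^2(rPx)$, establishes its monotonicity on $[\exp(2/\alpha)/r,\infty)$ together with a scaling property $bf(x)\geq f(ax)$ for large $x$ (their Lemma~\ref{proprietes fct f}), and then compares $f(1/\varepsilon)$ with $f(1/\varepsilon_P)$ at the target point $\varepsilon_P=P^{-1/\alpha}\log^{-2/\alpha}(P)$; a separate case handles all $P$ below a threshold $P_3$. Your argument instead bootstraps directly on $y=\log(1/\varepsilon)$: a crude bound $y=O(\log P)$ in Step~1 (via the dichotomy $y\lessgtr\log(rP)$ and the sublinear growth of $\alpha y-2\log y$), then substitution back into \eqref{erivuqetbn} in Step~2 to extract the sharp constant. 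Your approach is more elementary and self-contained (no auxiliary lemma needed), while the paper's function-analytic viewpoint makes the structure of the inversion more explicit and would generalise more mechanically if the exponent on the logarithm changed. Two minor points worth tightening when you write it out: in Case~A, note that $\log(rP)$ can be negative for small $r$, in which case Case~A is simply vacuous since $y>0$ whenever $\varepsilon<r\leq 1$ (and if $r>1$, $y<0$ already gives a trivial bound); in Case~B, the inversion of $\alpha y-2\log y\leq\log(4cP)$ is cleanest via the elementary inequality $\log y\leq(\alpha/4)y+C_{\alpha}$ valid for all $y>0$, which immediately yields $y\leq(2/\alpha)\log(4cP)+C'_{\alpha}$.
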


\begin{proof}
Assume $\varepsilon \in (0,r)$ is such that \eqref{erivuqetbn} holds. To show the result, we study the function $f:(1/r,+\infty) \to \R$ defined for all $x>1/r$ by
\[
f(x) = \frac{x^{\alpha}}{\log^2(rPx)} \;.
\]
Note that \eqref{erivuqetbn} implies that $f(1/\eps) \leq c P$. For all $P \geq 2$, we set 
\begin{align}\label{epsilon_P}
\varepsilon_P = P^{-\frac{1}{\alpha}}\log^{-\frac{2}{\alpha}}(P) \;.
\end{align}
Let $P_1 \geq 2$ be such that $P_1^{\frac{1}{\alpha}}\log^{\frac{2}{\alpha}}(P_1) \geq \frac{\exp(\frac{2}{\alpha})}{r}$. For all $P \geq P_1$, we have $\frac{1}{\eps_P} \geq \frac{\exp(\frac{2}{\alpha})}{r} > 1/r$ and
$$ f\left(\frac{1}{\varepsilon_P}\right) = \frac{P \log^2(P)
}{\log^2\left(rP^{1+\frac{1}{\alpha}}\log^{\frac{2}{\alpha}}(P)\right)} \;.$$
Since 
$$ \lim_{Q\to +\infty} \frac{ \log^2(Q)
}{\log^2\left(rQ^{1+\frac{1}{\alpha}}\log^{\frac{2}{\alpha}}(Q)\right)} = \frac{1}{(1+\frac{1}{\alpha})^2} =:c_1 \;,
$$
there exists $P_2$ such that for all $Q \geq P_2$, we have $\frac{ \log^2(Q)
}{\log^2\left(rQ^{1+\frac{1}{\alpha}}\log^{\frac{2}{\alpha}}(Q)\right)} \geq \frac{c_1}{2}$ \;.

Below we distinguish the cases $P \geq \max(P_1,P_2)$ and $P < \max(P_1,P_2)$.

\textbf{1st case:} $P \geq \max(P_1,P_2)$.\\
We have $f\bigl(\frac{1}{\varepsilon_P} \bigr) \geq \frac{c_1 P}{2}$ and $P \geq \frac{1}{c} f\!\left(\frac{1}{\varepsilon} \right)$ (by \eqref{erivuqetbn}), so that $f\bigl(\frac{1}{\varepsilon_P} \bigr) \geq \frac{c_1}{2 c} f\left(\frac{1}{\varepsilon} \right)$.
We now use Lemma~\ref{proprietes fct f} below with $b=\frac{c_1}{2 c}$: setting $a := (b/2)^{1/\alpha} = (c_1/(4c))^{1/\alpha}$, there exists $x_1>\max\bigl\{\frac{1}{r}, \frac{1}{a r}\bigr\}$ depending only on $r,b,\alpha$ such that $b f(x) \geq f(a x)$ for all $x \geq x_1$.

Therefore, if $\varepsilon < \frac{1}{x_1} =: \varepsilon_1$,
then $\frac{c_1}{2 c} f\!\left(\frac{1}{\varepsilon} \right) \geq f\bigl(\frac{a}{\varepsilon}\bigr)$. Therefore $f(\frac{1}{\varepsilon_P}) \geq f\bigl(\frac{a}{\varepsilon}\bigr)$.

Recall from \eqref{epsilon_P} and $P \geq P_1$ that $\frac{1}{\varepsilon_P} \geq \frac{\exp(\frac{2}{\alpha})}{r}$.
If $ \varepsilon < \frac{a r}{\exp(\frac{2}{\alpha})}=: \varepsilon_2$, then we also have $\frac{a}{\varepsilon} \geq \frac{\exp(\frac{2}{\alpha})}{r}$.
Therefore, using Lemma \ref{proprietes fct f} again, $f(\frac{1}{\varepsilon_P}) \geq f(\frac{a}{\varepsilon})$ implies
that
$\frac{1}{\varepsilon_P} \geq \frac{a}{\varepsilon}$, that is,
$$ \varepsilon \geq a \, \varepsilon_P \;.$$

Summarizing, when $\varepsilon \in (0,r)$ satisfies \eqref{erivuqetbn} and when $ P\geq \max(P_1,P_2) $, either $\varepsilon \geq \varepsilon_1$ or $\varepsilon \geq \varepsilon_2$ or $\varepsilon \geq a \, \varepsilon_P$. Put differently,
\begin{equation}\label{qereqgr}
    \varepsilon \geq  \min(\varepsilon_1,\varepsilon_2, a \,\varepsilon_P) \;.
\end{equation}

\textbf{2nd case:} $P<\max(P_1,P_2)=:P_3$.\\
Using \eqref{erivuqetbn} and the fact that $t \mapsto c t \log^2\bigl( \frac{rt}{\varepsilon}\bigr)$ is non-decreasing on $[\eps/r,+\infty)$, together with $\eps/r \leq 1 \leq P\leq P_3$ yields $\eps^{-\alpha} \leq c P_3 \log^2(r P_3/\eps)$. This entails that, for some $\eps_3 > 0$ depending only on $\alpha, c, P_3, r$, 
\begin{equation} \label{ergibetg}
\varepsilon \geq \varepsilon_3 \;.
\end{equation}

\textbf{Conclusion:} combining the two cases, when $\varepsilon \in (0,r)$ satisfies \eqref{erivuqetbn}, whatever $P\in\NN^*$, we have \eqref{qereqgr} or \eqref{ergibetg}. Setting $\varepsilon''_1 = \min(\varepsilon_1,\varepsilon_2,\varepsilon_3)$, we obtain $$\varepsilon \geq \min\left(\varepsilon''_1, a \, P^{-\frac{1}{\alpha}}\log^{-\frac{2}{\alpha}}(P)\right).$$
(Note that this is also true in the case $P=1$, by the convention $a P^{-\frac{1}{\alpha}}\log^{-\frac{2}{\alpha}}(P) = +\infty$.) Since $\eps_1,\eps_2,\eps_3$ and $a$ only depend on $c,\alpha,r$, this concludes the proof.
\end{proof}

\begin{lemma}\label{proprietes fct f}
Let $\alpha, r > 0$ and $P \in \N^*$.
We define $f(x) = \frac{x^{\alpha}}{\log^2(rPx)}$ for all $x > 1/r$. Then:
\begin{itemize}
    \item[i)] $f$ is increasing on $I := \left[\frac{\exp(\frac{2}{\alpha})}{r}, +\infty\right)$ and $\lim_{x\to +\infty} f(x) = +\infty$.
    \item[ii)] for all $b>0$, setting $a := (b/2)^{1/\alpha}$, there exists $x_1>\max\bigl\{\frac{1}{r}, \frac{1}{a r}\bigr\}$ depending only on $r,b,\alpha$ such that,
    \[
    \forall x \geq x_1 \;, \qquad b f(x) \geq f(a x) \;.
    \]
\end{itemize}
    
\end{lemma}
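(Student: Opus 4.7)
\textbf{Proof plan for Lemma~\ref{proprietes fct f}.} For part (i), I would simply differentiate $f$. A direct computation yields
\[
f'(x) = \frac{x^{\alpha - 1}\bigl[\alpha \log(rPx) - 2\bigr]}{\log^3(rPx)} \;.
\]
On $I = [\exp(2/\alpha)/r, +\infty)$, since $P \geq 1$, we have $rPx \geq rx \geq \exp(2/\alpha) > 1$, so $\log(rPx) \geq 2/\alpha > 0$, which makes both the bracket and $\log^3(rPx)$ nonnegative. Hence $f' \geq 0$ on $I$, proving monotonicity. The limit $\lim_{x \to +\infty} f(x) = +\infty$ is immediate since $x^\alpha$ dominates any power of $\log(rPx)$.

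For part (ii), I would reduce the inequality $bf(x) \geq f(ax)$ to an inequality between logarithms. Writing out both sides and using $a^\alpha = b/2$, the factor $x^\alpha$ cancels and the inequality becomes
\[
2 \log^2(arPx) \geq \log^2(rPx) \;.
\]
Provided $x$ is large enough that both logarithms are positive (which I will ensure), this is equivalent to $\sqrt{2}\,\log(arPx) \geq \log(rPx)$, or after rearranging,
\[
(\sqrt{2}-1)\,\log(arPx) \geq -\log(a) \;.
\]
The right-hand side depends only on $a$, hence only on $b,\alpha$. It therefore suffices to pick $x_1$ so that $\log(arPx) \geq \max\bigl\{1,\, -\log(a)/(\sqrt{2}-1)\bigr\}$ for all $x \geq x_1$ and all $P \geq 1$.

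The key point — and the only subtlety — is that $x_1$ must be independent of $P$. This is easy once one observes that the left-hand side $\log(arPx) = \log(arx) + \log P$ is non-decreasing in $P$, so the inequality becomes easier as $P$ grows. It therefore suffices to handle the worst case $P = 1$: choose $x_1$ large enough (in terms of $r,b,\alpha$ only) that $\log(arx) \geq \max\bigl\{1,\, -\log(a)/(\sqrt{2}-1)\bigr\}$ and simultaneously $x_1 > \max\{1/r, 1/(ar)\}$ (so that both $f(x)$ and $f(ax)$ are defined and both logarithms are positive). With this choice, the above reduction is valid for all $x \geq x_1$ and all $P \in \N^*$, concluding the proof. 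No step is genuinely hard; the only thing to watch is to package the constants so that $x_1$ depends only on $r,b,\alpha$ and not on $P$.
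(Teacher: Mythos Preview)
Your proof is correct and follows essentially the same route as the paper: part (i) is identical, and for part (ii) both arguments reduce $bf(x)\geq f(ax)$ to the inequality $2\log^2(arPx)\geq \log^2(rPx)$ and then dispose of the dependence on $P$ by monotonicity (you argue directly in $P$, the paper substitutes $u=Px\geq x$). The only cosmetic difference is that the paper obtains $x_1$ from the limit $\log^2(ru)/\log^2(rau)\to 1$, whereas you solve the inequality explicitly; both are equally valid.
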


\begin{proof}

\textbf{Proof of i):}
The fact that $\lim_{x\to +\infty} f(x) = +\infty$ is because $\alpha > 0$. To see why $f$ is increasing on $I$, note that
$$f'(x) = \frac{\alpha x^{\alpha-1} \log^2(rPx)-x^{\alpha}2 \log(rPx) \frac{1}{x}   }{\log^4(rPx)} = \frac{ x^{\alpha-1} \log(rPx)(\alpha \log(rPx)- 2   )}{\log^4(rPx)} \;,$$
so that $f'(x)>0$ for all $x > \frac{ \exp(\frac{2}{\alpha}) }{rP}$, and in particular for all $x > \frac{ \exp(\frac{2}{\alpha}) }{r}$ (since $P \geq 1$). This proves that  $f$ is increasing on $I$.

\textbf{Proof of ii):}
Let $b>0$ and set $a := (b/2)^{1/\alpha}$. Let $x_1>\max\bigl\{\frac{1}{r}, \frac{1}{a r}\bigr\}$ depending only on $r,b,\alpha$ such that, for all $u \geq x_1$,
\[
\frac{\log^{2}(r u)}{\log^{2}(r a u)} \leq 2 \;.
\]
(Such an $x_1$ exists since the ratio converges to $1$ as $u \to +\infty$, and we can choose $x_1$ as a function of~$r,a$ only.) Now, for all $x \geq x_1$, using the above inequality with $u = P x \geq x$ (since $P \geq 1$), we get
\begin{align*}
    \frac{f(ax)}{f(x)} = a^{\alpha} \frac{\log^{2}(rPx)}{\log^{2}(rPax)} \leq 2 a^{\alpha} = b\;,
\end{align*}
where the last equality is because $a := (b/2)^{1/\alpha}$. This proves that $b f(x) \geq f(a x)$ for all $x \geq x_1$.
\end{proof}

\subsection{Proof of Corollary~\ref{main_result}}
\label{sec:proofcorollary-main}

We first recall some definitions and two key bounds on the VC-dimension of piecewise-polynomial feed-forward neural networks, proved by \cite{Goldberg2004BoundingTV} and \cite{JMLR:v20:17-612}.

For a set $F$ of functions from $\mathcal{X}$ to $\{-1,1\}$, we say that a set $S = \{x_1\, \ldots, x_N\} \subset \mathcal{X}$ is \textit{shattered} by $F$ if for any $E \subset S$, there exists $f \in F$ satisfying for all $i = 1, \ldots, N$, $f(x_i) = 1$ if $x_i \in E$, and $f(x_i) = -1$ if $x_i \notin E$. The VC-dimension of $F$, denoted by $\VCdim(F)$, is defined as the largest number $N \geq 1$ such that there exists $S \subset \mathcal{X}$ of cardinality $N$ which is shattered by $F$ (by convention, $\VCdim(F)=0$ if no such set $S$ exists, while $\VCdim(F)=+\infty$ if there exist sets $S$ of unbounded cardinality $N$).

Let $\cB$ be any feed-forward neural network architecture of depth $L \geq 1$ with $W \geq 1$ weights, $d \geq 1$ input neurons, and $U \geq 1$ hidden or output neurons. Let $\sigma:\R \to \R$ be any piecewise-polynomial activation function on $K \geq 2$ pieces, with maximal degree $\nu \in \N$. Denote by $\sgn(H_{\cB}) = \{\sgn(g_{\bw}) : \bw \in \R^W\}$ the set of all classifiers obtained by looking at the sign of the network's output, that is, the classifiers defined by $\sgn(g_{\bw})(x) = \mathds{1}_{\{g_{\bw}(x)>0\}}$ for all $x \in \R^d$.

Goldberg and Jerrum \cite{Goldberg2004BoundingTV} showed that, for some constant $c_1'>0$ depending only on $d$, $\nu$ and $K$, the VC-dimension of $\sgn(H_{\cB})$ is bounded as follows (see also Theorem~8.7 in \cite{MR1741038}):
\begin{equation}
\label{eq:VCdimGJ04}
\VCdim(\sgn(H_{\cB})) \leq c_1'  W^2 \;.
\end{equation}

This bound was refined for piecewise-affine activation functions. Namely, Bartlett et al. \cite[Theorem~7]{JMLR:v20:17-612} proved that, if $U \geq 3$, then, for some $R \leq U + U (L-1) \nu^{L-1}$, 
\begin{align*}
    \VCdim(\sgn(H_{\cB})) & \leq
    L + \bar{L} W \log_2\Bigl(4 e (K-1) R \log_2\bigl(2 e (K-1)R\bigr)\Bigr) \;,
\end{align*}
where $\bar{L} = 1$ if $\nu = 0$, and $\bar{L} \leq L$ otherwise. Therefore, for some constants $W'_{\min} \geq 1$ and $c'_2,c'_3>0$ depending only on $d$ and $K$, we have, for all $W \geq W'_{\min}$ (which in particular implies $U \geq 3$),
\begin{align}
    \VCdim(\sgn(H_{\cB})) & \leq
    \left\{
    \begin{array}{l l}
         c_2'  LW\log(W)& \text{ if } \nu=1 \,, \\
         c_3' W\log(W) & \text{ if } \nu =0 \,.
    \end{array}
    \right. \label{eq:VCdimBHLM19}
\end{align}

We are now ready to prove Corollary~\ref{main_result} from Theorem~\ref{general_thm}.

\begin{proof}[Proof (of Corollary~\ref{main_result})]
In order to apply Theorem~\ref{general_thm}, we first bound $P := \Pdim(H_{\cA})$ from  above. The bounds \eqref{eq:VCdimGJ04} and \eqref{eq:VCdimBHLM19} were on the VC-dimension of $\sgn(H_{\cB})$, for any feed-forward neural network architecture $\cB$, while we need a bound on the pseudo-dimension. However, by a well-known trick (e.g., Theorem~$14.1$ in \cite{MR1741038}), the pseudo-dimension of $H_{\cA}$ is upper bounded by the VC-dimension of (the sign of) an augmented network architecture of depth $L$, with $d+1$ input neurons and $W+1$ weights.\footnote{This is because $\Pdim(H_{\cA}) = \VCdim\bigl(\{(x,r) \in \R^d \times \R \mapsto \mathds{1}_{\{g(x) - r >0\}}  : g \in H_{\cA}\}\bigr)$, the output neuron of $\cA$ is linear, and we allow skip connections.}
Therefore, replacing $(d,W)$ with $(d+1,W+1)$ in \eqref{eq:VCdimGJ04} and \eqref{eq:VCdimBHLM19}, we get that, for some constants $\tilde{W}_{\min} \geq 1$ and $\tilde{c_1}, \tilde{c_2}, \tilde{c_3} > 0$ depending only on $d$, $\nu$ and $K$, for all $W \geq \tilde{W}_{\min}$,
\begin{align}
    \label{eq:ub_bartlett_al_pdim}
    P \leq
    \left\{
    \begin{array}{l l}
         \tilde{c_1}  W^{2} & \text{ if } \nu \geq 2 \;, \\
         \tilde{c_2}  LW\log(W)& \text{ if } \nu=1 \;, \\
         \tilde{c_3} W\log(W) & \text{ if } \nu =0 \;.
    \end{array}
    \right.
\end{align}

Now, by Theorem \ref{general_thm}, we have, for some constants $c_1,\eps_1>0$ depending only on $b-a$, $p$, $c_0$, $\varepsilon_0$, $\alpha$,
\begin{align}
    \label{eq:main_thm_rappel}
    \sup_{f \in F} \inf_{g \in H_{\mathcal{A}}} \|f - g\|_{L^p(\mu)} \geq \min\left\{\varepsilon_1, \ c_1 P^{-\frac{1}{\alpha}}\log^{-\frac{2}{\alpha}}(P)\right\} \;.
\end{align}
Noting that $P \mapsto \min\left\{\varepsilon_1, \ c_1 P^{-\frac{1}{\alpha}}\log^{-\frac{2}{\alpha}}(P)\right\}$ is non-increasing and plugging~(\ref{eq:ub_bartlett_al_pdim}) into~(\ref{eq:main_thm_rappel}), we get, for $W \geq W_{\min}$,
\begin{align*}
    \sup_{f \in F} \inf_{g \in {H}_\mathcal{A}} \|f - g\|_{L^p(\mu)} \geq \min \left\{\varepsilon_1,
    \left(
    \begin{array}{l l}
         c_4  W^{-\frac{2}{\alpha}} \log^{-\frac{2}{\alpha}}(W^2) & \text{ if } \nu \geq 2 \\
         c_5  (LW\log(W))^{-\frac{1}{\alpha}} \log^{-\frac{2}{\alpha}}(LW\log(W)) & \text{ if } \nu=1 \\
         c_6  (W\log(W))^{-\frac{1}{\alpha}} \log^{-\frac{2}{\alpha}}(W\log(W)) & \text{ if } \nu=0
    \end{array}
    \right)
    \right\}
\end{align*}
for some constants $W_{\min} \geq 1$ and $c_4, c_5, c_6>0$
depending only on $d$, $\nu$, $K$, $b-a$, $p$, $c_0$, $\varepsilon_0$ and~$\alpha$. Taking $W_{\min}$ large enough, the first term $\varepsilon_1$ is always larger than the second term in the above minimum, and the logarithmic terms $\log(W\log(W))$ and $\log(LW\log(W))$ can be upper bounded by a constant times $\log(W)$ (since $L \leq W)$. Rearranging concludes the proof.
\end{proof}

\section{Earlier works: two other lower bound proof strategies}
\label{sec:relatedworks}

Approximation lower bounds in a sense similar to ours have been obtained in other recent works. In the purpose of highlighting the differences between the different approaches, we describe the lower bound proof strategies of Yarotsky \cite{yarotsky2017error} and of Petersen and Voigtlaender \cite{PETERSEN2018296}.

\subsection{Approximation in sup norm of Sobolev unit balls with ReLU networks \cite{yarotsky2017error}}

Recall that the Sobolev space
$\mathcal{W}^{n,\infty}([0,1]^d)$ is defined as the set of functions on $[0,1]^d$ lying in $L^\infty$ along with all their weak derivatives up to order $n$. We equip this space with the norm
$$
\|f\|_{\mathcal{W}^{n,\infty}([0,1]^d)} = \max_{\textbf{n} \in \mathbb{N}^d : |\textbf{n}| \leq n} \esssup_{x \in [0,1]^d} |D^{\textbf{n}}f(x)|,
$$
and we let $F_{n,d}$ be the unit ball of this space.

We first state the sup norm lower bound and then we give a synthesized version of the proof.

\begin{prop}[\cite{yarotsky2017error}]
    \label{prop_lb_yarotsky_2017}
    There exists positive constants $W_{\min}, c> 0$ such that for any feed-forward neural network with architecture $\mathcal{A}$, ReLU activation and $W \geq W_{\min}$ weights,
    $$
    \sup_{f \in F_{n,d}} \inf_{g \in H_{\mathcal{A}}} \|f - g\|_{\infty} \geq c W^{-\frac{2n}{d}}.
    $$
\end{prop}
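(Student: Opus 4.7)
My plan is to follow the classical VC-dimension lower bound strategy, which is natural in sup norm. The idea is to exhibit a large family of functions in $F_{n,d}$ whose sufficiently accurate sup-norm approximation by a ReLU network would force the associated threshold classifiers to shatter many points. Combined with the $\mathcal{O}(W^2)$ VC-dimension bound for piecewise-polynomial networks (cf. \eqref{eq:VCdimGJ04} with $\nu=1$, $K=2$), this will yield the claimed $W^{-2n/d}$ lower bound.

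Concretely, I would first construct a family of ``bump-encoded'' functions. Fix $N \in \mathbb{N}^*$ and partition $[0,1]^d$ into $N$ congruent cubes of side length $\asymp N^{-1/d}$, with centers $x_1,\ldots,x_N$. Pick a single smooth template bump $\varphi:\R^d \to \R$ supported in $[-1/2,1/2]^d$ with $\varphi(0)=1$ and $\|\varphi\|_{\mathcal{W}^{n,\infty}} \leq C$. For each bit string $\mathbf{b} \in \{0,1\}^N$, define
\[
f_{\mathbf{b}}(x) = h \sum_{i=1}^N b_i \, \varphi\!\left(N^{1/d}(x - x_i)\right),
\]
with amplitude $h \asymp N^{-n/d}$. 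The dilation by $N^{1/d}$ multiplies the $k$-th weak derivative $L^\infty$-norm by $N^{k/d} \leq N^{n/d}$, so the factor $h \asymp N^{-n/d}$ is exactly what is needed to guarantee $\|f_{\mathbf{b}}\|_{\mathcal{W}^{n,\infty}} \leq 1$ uniformly in $\mathbf{b}$ (using that the bumps have disjoint supports, so only one bump contributes at any given point). By design $f_{\mathbf{b}}(x_i) = h\,b_i$ for each $i$.

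Next I would argue by contradiction. Assume that $\sup_{f \in F_{n,d}} \inf_{g \in H_\mathcal{A}} \|f - g\|_\infty < h/4$. Then for each $\mathbf{b}$ there exists $g_{\mathbf{b}} \in H_\mathcal{A}$ with $\|f_{\mathbf{b}} - g_{\mathbf{b}}\|_\infty < h/4$, so $g_{\mathbf{b}}(x_i) > h/2$ iff $b_i = 1$. Consequently the classifier family $\{x \mapsto \mathds{1}_{\{g_{\mathbf{w}}(x) > h/2\}} : \mathbf{w} \in \R^W\}$ shatters $\{x_1,\ldots,x_N\}$, giving VC-dimension at least $N$. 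This family is the sign output of an augmented ReLU network with $W + \mathcal{O}(1)$ weights (subtracting the constant $h/2$), which is still piecewise-polynomial with $\nu=1$. Hence \eqref{eq:VCdimGJ04} yields $N \leq c' W^2$. Choosing $N = \lfloor c' W^2 \rfloor + 1$ yields a contradiction, and therefore the approximation error must be at least of order $h \asymp N^{-n/d} \asymp W^{-2n/d}$.

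The main obstacle is the bump-family construction: one must pick $h$ so that $f_{\mathbf{b}} \in F_{n,d}$ uniformly in $\mathbf{b}$, which requires disjoint supports and careful bookkeeping of how the Sobolev norm behaves under rescaling a compactly supported template. Everything else is a clean application of the VC-dimension shattering principle to the sign of an augmented network. Note that the argument is intrinsic to the sup norm: only sup-norm accuracy guarantees that $g_{\mathbf{b}}(x_i)$ is close to $h b_i$ at \emph{every} center $x_i$, which is precisely why the $L^p$ case (our Theorem~\ref{general_thm}) has to substitute Mendelson's inequality for this step and replace the VC dimension by the fat-shattering dimension.
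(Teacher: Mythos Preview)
Your proposal is correct and follows essentially the same VC-dimension shattering strategy as the paper's summary of Yarotsky's proof: construct bump-encoded functions in $F_{n,d}$ on a grid, transfer shattering to $H_{\mathcal{A}}$ via the sup-norm approximation assumption, and invoke the $\mathcal{O}(W^2)$ VC bound. The only cosmetic difference is parameterization (your $N$ counts total cubes, the paper's $N$ counts cubes per axis so that $N^d$ points are shattered), and your concluding remark on why this argument is sup-norm specific matches the paper's commentary exactly.
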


Details aside, the proof reads as follows. The author assumes that $H_\mathcal{A}$ approximates $F_{n,d}$ with error $\varepsilon$. Fixing $N = c_{n,d}(3\varepsilon)^{-1/n}$ for a properly chosen constant $c_{n,d} > 0$, he constructs a set of functions in $F_{n,d}$ that can shatter a grid of $N^d$ points $x_1, \ldots, x_{N^d}$ evenly distributed over $[0,1]^d$. The assumption that $H_\mathcal{A}$ approximates $F_{n,d}$ in sup norm with error $\varepsilon$ allows to conclude that $H_\mathcal{A}$ also shatters $\{x_1, \ldots, x_{N^d}\}$, and hence, $\VCdim(H_\mathcal{A}) \geq N^d = c'_{n,d} \varepsilon^{-\frac{d}{n}}$, for a properly chosen constant $c'_{n,d} > 0$. The author concludes using the upper bound on $\VCdim(H_\mathcal{A})$ with respect to $W$ from \cite{MR1741038} which yields $\VCdim(H_\mathcal{A}) \leq c' W^2$ for some constant $c'$.

It is worth stressing that in this proof, it is paramount to assume that $H_\mathcal{A}$ approximates $F_{n,d}$ in sup norm, rather than any $L^p$ norm with $p < +\infty$. The reason is that only this choice of norm allows to bound the discrepancy between $f \in F_{n,d}$ and $g_f \in H_\mathcal{A}$ chosen optimally with respect to $f$ at any chosen points. Our proof strategy relying on Proposition \ref{result_mendelson} allows to circumvent this issue by relating the pseudo-dimension to the metric entropy with respect to any $L^p$ norm, $1 \leq p < +\infty$.

\subsection{Approximation in $L^p$ norm of \textit{Horizon functions} with quantized networks \cite{PETERSEN2018296}}

The authors study \textit{quantized} neural networks, that is, networks with weights constrained to be representable with a fixed number of bits. They obtain a lower bound on the minimal number of weights in a quantized neural network that can approximate a set of \textit{Horizon functions} in $L^p$ norm, $p > 0$, with error $\varepsilon > 0$. This lower bound is easily invertible to a bound on the approximation error and is thus comparable to the results we obtain in this paper.

Textually, the authors introduce the set of horizon functions as follows:
``These are $\{0, 1\}$-valued functions with a jump along a hypersurface and such that the jump surface is
the graph of a smooth function'' \cite{PETERSEN2018296}. Denoting by $H$ the indicator function of the set $[0,+\infty) \times \mathbb{R}^{d-1}$, the set of horizon functions reads as
\begin{align*}
    \mathcal{HF}_{\beta, d, B} = \Biggl\{ & f \circ T \in L^{\infty}\left(\left[-\frac{1}{2}, \frac{1}{2}\right]^d\right) \ : \\
    & \quad f(x) = H(x_1 + \gamma(x_2, \ldots, x_d), x_2, \ldots, x_d), \gamma \in \mathcal{F}_{\beta, d-1, B}, \ T \in \Pi(d, \mathbb{R}) \Biggr\} \;,
\end{align*}

where $\mathcal{F}_{\beta, d-1, B}$ denotes the set of Hölder functions over $\left[-1/2, 1/2\right]^{d-1}$ whith smoothness parameter $\beta$ and with norm $\|.\|_{\mathcal{C}^{n,\alpha}}$ bounded by $B$ (see Section \ref{section Hölder}), and $\Pi(d, \mathbb{R})$ denotes the group of $d$-dimensional permutation matrices.

In the following, for any nonzero integer $K$ and any neural network architecture $\mathcal{A}$, we denote by $H_\mathcal{A}^K \subset H_\mathcal{A}$ the set of $K$-quantized functions in $H_\mathcal{A}$; namely, the functions in $H_\mathcal{A}$ with weights representable using at most $K$ bits. The lower bound in \cite{PETERSEN2018296} (Theorem $4.2$) reads as follow:

\begin{prop}[\cite{PETERSEN2018296}]
    \label{prop_lb_petersen}
    Let $d \geq 2$. Let $p, \beta, B, c_0 > 0$ and let $\sigma : \mathbb{R} \rightarrow \mathbb{R}$ be such that $\sigma(0) = 0$. There exist positive constants $\varepsilon_0, c> 0$ depending only on $d, p, \beta, B$ and $c_0$ such that, for any $\varepsilon \leq \varepsilon_0$, setting $K = \lceil c_0 \log(1/\varepsilon)\rceil$, for any feed-forward neural network architecture $\mathcal{A}$ with $W$ weights and activation $\sigma$ such that $H_\mathcal{A}^K$ approximates $\mathcal{HF}_{\beta, d, B}$ in $L^p$ norm with error less than $\varepsilon$, we have
    $$
    W \geq c \varepsilon^{-\frac{p(d-1)}{\beta}} \log^{-1} (1/\varepsilon).
    $$
\end{prop}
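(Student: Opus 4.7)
The argument is a packing-versus-counting lower bound, much more direct than the VC route of Theorem~\ref{general_thm}, because quantization forces the approximating class to be finite. The plan has three ingredients: an upper bound on the cardinality of $H_\mathcal{A}^K$, a lower bound on the $L^p$-packing number of $\mathcal{HF}_{\beta, d, B}$, and the pigeonhole inequality from Lemma~\ref{lemma_entropy_G_vs_F}.

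The counting bound is immediate: each of the $W$ weights of a $K$-quantized network takes at most $2^K$ admissible values, so $|H_\mathcal{A}^K| \leq 2^{KW}$. The packing bound requires a short geometric computation. For two Hölder graphs $\gamma_1, \gamma_2 \in \mathcal{F}_{\beta, d-1, B}$, the horizon functions $f_{\gamma_i}(x) = H(x_1 + \gamma_i(x_2, \ldots, x_d), x_2, \ldots, x_d)$ take values in $\{0,1\}$, so $|f_{\gamma_1} - f_{\gamma_2}|^p = |f_{\gamma_1} - f_{\gamma_2}|$, and for fixed $\tilde{x} = (x_2, \ldots, x_d)$ the two functions differ exactly when $x_1$ lies between $-\gamma_1(\tilde{x})$ and $-\gamma_2(\tilde{x})$. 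Integrating $x_1$ first,
\begin{equation*}
\|f_{\gamma_1} - f_{\gamma_2}\|_{L^p}^p = \|\gamma_1 - \gamma_2\|_{L^1([-1/2,1/2]^{d-1})},
\end{equation*}
provided $B \leq 1/2$ so that the level sets $\{x_1 = -\gamma_i(\tilde{x})\}$ stay inside the cube (otherwise restrict to the sub-ball $\mathcal{F}_{\beta, d-1, \min(B,1/2)}$, which has the same metric entropy up to constants). Composition with a permutation matrix preserves $L^p$ norms, so this extends to the whole class $\mathcal{HF}_{\beta, d, B}$. Combined with the classical bump-function lower bound $\log M(\eta, \mathcal{F}_{\beta, d-1, B}, \|\cdot\|_{L^1}) \geq c_1 \eta^{-(d-1)/\beta}$ for small $\eta$ (obtained exactly as in Lemma~\ref{lemme lower bound metric entropy Hölder}, using tensor products of bumps with disjoint supports) and applied with $\eta = (2\varepsilon)^p$, we deduce
\begin{equation*}
\log M(2\varepsilon, \mathcal{HF}_{\beta, d, B}, \|\cdot\|_{L^p}) \geq c_2 \varepsilon^{-p(d-1)/\beta}
\end{equation*}
for all small $\varepsilon$.

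The conclusion then follows by pigeonhole. Arguing as in Lemma~\ref{lemma_entropy_G_vs_F} (with packing scale $2\varepsilon$ instead of $3\varepsilon$) applied to $G = H_\mathcal{A}^K$, the hypothesis that $H_\mathcal{A}^K$ approximates $\mathcal{HF}_{\beta, d, B}$ in $L^p$ within $\varepsilon$ forces $|H_\mathcal{A}^K| \geq M(2\varepsilon, \mathcal{HF}_{\beta, d, B}, \|\cdot\|_{L^p})$. Combining with the counting bound and taking logarithms, $KW \log 2 \geq c_2 \varepsilon^{-p(d-1)/\beta}$. Substituting $K \leq 2 c_0 \log(1/\varepsilon)$ for small $\varepsilon$ and dividing yields the claimed $W \geq c\, \varepsilon^{-p(d-1)/\beta} \log^{-1}(1/\varepsilon)$. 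The main obstacle is the horizon distance formula, where the parametrisation $\gamma \mapsto f_\gamma$ must be controlled carefully enough that the packing scale is preserved (and, in particular, the extra factor $p$ in the exponent correctly migrates to the horizon side); the rest is routine bookkeeping. The assumption $\sigma(0) = 0$ plays no role in the counting argument itself and is inherited from the original statement, where it is typically used to guarantee that weights set to zero genuinely disable connections.
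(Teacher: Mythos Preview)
Your proof is correct and follows essentially the same strategy as the paper's sketch of \cite{PETERSEN2018296}: a packing lower bound for $\mathcal{HF}_{\beta,d,B}$ (which the paper itself notes underlies the encoder--decoder bit-length estimate $\ell_\varepsilon \geq c\,\varepsilon^{-p(d-1)/\beta}$) combined with a counting/encoding bound on the quantized class. Your presentation is slightly more direct, bypassing the encoder--decoder formalism and using the cardinality bound $|H_\mathcal{A}^K| \leq 2^{KW}$ in place of the bit-length $\ell = c_1 W(K + \lceil \log_2 W\rceil)$ from the original, but the difference is cosmetic once $K \asymp \log(1/\varepsilon)$.
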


The proof of this result is based on a lemma giving a lower bound on the minimal number of bits $\ell$ necessary for a binary encoder-decoder pair to achieve an error less than $\varepsilon > 0$ in approximating $\mathcal{HF} := \mathcal{HF}_{\beta, d, B}$ in $L^p$ norm. Formally, given an integer $\ell > 0$, a binary encoder $E^{\ell} : \mathcal{HF}~\rightarrow~\{0,1\}^\ell$ and given a decoder $D^\ell : \{0,1\}^\ell \rightarrow \mathcal{HF}$, one can measure an approximation error
$$
\sup_{f \in \mathcal{HF}}\|f - D^\ell(E^\ell(f))\|_{L^p},
$$
\noindent which quantifies the loss of information due to the encoding $E^\ell$. Clearly, for an optimal choice of encoder, one can reduce this loss of information by increasing $\ell$. In particular, for $\varepsilon > 0$, it is possible to estimate
$$
\ell_{\varepsilon} = \min \left\{\ell > 0 \ : \inf_{E^\ell, D^\ell} \sup_{f \in \mathcal{HF}} \|f - D^\ell(E^\ell(f))\|_{L^p} \leq \varepsilon \right\},
$$
with the convention that $\ell_{\varepsilon} = \infty$ if the above set is empty. The authors show in their Lemma B.3 that for $\varepsilon$ small enough (smaller than some $\varepsilon_0 > 0$), it holds that
\begin{align}
    \label{eq_bit_length}
    \ell_{\varepsilon} \geq c \varepsilon^{-\frac{p(d-1)}{\beta}}
\end{align}
for some constant $c > 0$ depending only on $d, p, \beta$ and $B$. In other words, one can not achieve a loss of information smaller than $\varepsilon$ by encoding functions in $\mathcal{HF}$ over less than $c \varepsilon^{-\frac{p(d-1)}{\beta}}$ bits.

The rest of the proof consists in showing that for an integer $K > 0$, given a neural network architecture $\mathcal{A}$ with $W$ weight that can approximate $\mathcal{HF}$ in $L^p$ norm with error less than $\varepsilon > 0$, one can encode exactly (without loss of information, and for a given activation function) any function in $H_\mathcal{A}^K$ over a string of $\ell = c_1 W (K + \lceil \log_2 W \rceil)$ bits. This generates a natural encoder-decoder system where any function $f \in \mathcal{HF}$  is encoded as the bit string of length $\ell$ associated to $g_f \in H_\mathcal{A}^K$ chosen to approximate $f$. It remains to observe that if we fix $K$, this automatically yields a lower bound on $\ell$ using inequality~(\ref{eq_bit_length}), and thus on $W$ by expressing $W$ through $\ell$ and $K$.

\textbf{Remark.} The authors in \cite{PETERSEN2018296} study the neural network approximation in a setting slightly different from ours, since they focus on the approximation by quantized neural networks. This partly explains why their proof strategy differs from ours. However, it is worth pointing out that the proof of their lower bound on the minimal number of bits required to accurately encode a function in $\mathcal{HF}$ relies on a lower bound of the packing number of $\mathcal{HF}$, just like the lower bound of the packing number of the set to approximate is key in our proof strategy. An interesting question for the future would be to see whether our general lower bound (Theorem~\ref{general_thm}) yields lower bounds of the same order as those in \cite{PETERSEN2018296} for quantized neural networks.

\section{Hölder balls}

\subsection{Proof of Lemma \ref{lemme lower bound metric entropy Hölder}}
\label{proof of lemme lower bound metric entropy Hölder}

Though not necessarily stated this way, many arguments below are classical (see, e.g., Theorem~3.2 by \cite{GyKoKrWa-02-DistributionFreeNonparametric} with a similar construction for lower bounds in nonparametric regression).

Let $N \in \mathbb{N}^*$. For $\textbf{m} = (m_1, \ldots, m_d) \in \{0, \ldots, N-1\}^d$, we let $x_{\textbf{m}} := \frac{1}{N}(m_1 + 1/2, \ldots, m_d + 1/2)$ and we denote by $C_{\textbf{m}}$ the cube of side-length $\frac{1}{N}$ centered at $x_{\textbf{m}}$, with sides parallel to the axes. We see that the $N^d$ cubes
$C_{\textbf{m}}$ decompose the cube $[0,1]^d$ in smaller parts which, up to negligible sets which will not be problematic, form a partition of $[0,1]^d$. We will use this decomposition to construct a packing of $F_{s,d}$. Denoting $\|\cdot\|$ the sup norm in $\RR^d$, we define the $C^\infty$ test function $\phi : \mathbb{R}^d \rightarrow \mathbb{R}$ by:
$$
\phi(x) = \exp\left(-\frac{\|x\|^2}{1 - \|x\|^2}\right),
$$
for any $x\in\RR^d$ such that $\|x\| < 1$, and $\phi(x) = 0$ otherwise. Recalling that $n\in\NN$ and $\alpha\in (0,1]$ are such that $s = n + \alpha$, and since all the high-order partial derivatives of $\phi$ are uniformly bounded on $[0,1]^d$, $\|\phi\|_{\mathcal{C}^{n,\alpha}}$ is thus finite and is nonzero. 

Let $c_s = \frac{1}{2} (2N)^{-s} \|\phi\|_{\mathcal{C}^{n,\alpha}}^{-1}$ and consider, for any tensor of signs $\sigma = (\sigma_\mbf)_{\mbf\in\{0,\cdots,N-1\}^d} \in \{-1,1\}^{N^d}$, the function $f_\sigma$ defined as follows:
\[\label{define_f_sigma}
f_\sigma(x) = c_s \sum_{\textbf{m} \in \{0, \ldots, N-1\}^d}  \sigma_{\textbf{m}} \phi\left(2N(x - x_{\textbf{m}})\right),
\]
for all $x\in[0,1]^d$. There are $2^{N^d}$ different functions $f_\sigma$. 

Let us prove that, for all $\sigma \in \{-1,1\}^{N^d}$, $f_\sigma \in F_{s,d}$. To do so, we study the constituents of $\|f_\sigma\|_{\mathcal{C}^{n,\alpha}}$ separately and show that they are all bounded by $1$. For $\mbf\in\{0,\cdots,N-1\}^d$, we define the function $g_{\mbf}(x) = c_s \sigma_{\textbf{m}} \phi\left(2N(x - x_{\textbf{m}})\right)$. Note that because $\phi$ vanishes outside $(-1,1)^d$, we have that $g_{\mbf}$ vanishes everywhere outside the interior of $C_{\mbf}$, and the same holds for $D^{\nbf} g_{\mbf}$ for all $\nbf \in \mathbb{N}^d$ such that $|\nbf| \leq n$. For any such $\textbf{n}$, we have
\[ \|D^\nbf g_{\mbf}\|_\infty = c_s (2 N)^{|\nbf|} \|D^\nbf \phi\|_\infty \leq c_s (2 N)^s \|\phi\|_{\mathcal{C}^{n, \alpha}} \leq \frac{1}{2}.
\]
Therefore,
\[\max_{\textbf{n}:|\textbf{n}| \leq n}\|D^{\textbf{n}}f_\sigma\|_{\infty} \leq 1.
\]
Now for any $\textbf{n}\in \NN^d$ such that $|\textbf{n}| = n$, any $x,y \in [0,1]^d$, we have
\begin{align*}
    \frac{|D^{\textbf{n}}f_{\sigma}(x) - D^{\textbf{n}}f_{\sigma}(y)|}{\|x - y\|^{\alpha}_2}
    &= \frac{|D^{\textbf{n}}g_{\mbf}(x) - D^{\textbf{n}}g_{\textbf{m}'}(y)|}{\|x-y\|^{\alpha}_2},
\end{align*}
\noindent where $x \in C_{\mbf}$ and $y \in C_{\mbf'}$ for some multi-indexes $\mbf$ and $\mbf'$. We have to distinguish between the cases $\mbf = \mbf'$ and $\mbf \neq \mbf'$. In the former case, we have
\begin{align*}
    \frac{|D^{\textbf{n}}f_{\sigma}(x) - D^{\textbf{n}}f_{\sigma}(y)|}{\|x - y\|^{\alpha}_2}
    &= c_s (2 N)^{n+\alpha} \frac{|D^{\textbf{n}}\phi(2 N(x-x_{\textbf{m}})) - D^{\textbf{n}}\phi(2 N(y - x_{\textbf{m}}))|}{\|2 N(x-x_{\textbf{m}}) - 2 N(y-x_{\textbf{m}})\|^{\alpha}_2} \\
    &= c_s (2 N)^{s} \frac{|D^{\textbf{n}}\phi(x') - D^{\textbf{n}}\phi(y')|}{\|x' - y'\|^{\alpha}_2}\\
    &\leq c_s (2 N)^{s} \|\phi\|_{\mathcal{C}^{n,\alpha}} = \frac{1}{2},
\end{align*}
where at the second line, we used the changes of variables $x' = 2N(x - x_{\mbf})$ and $y' = 2N(y - x_{\mbf})$. In the case $\mbf = \mbf'$ ($x$ and $y$ belong to the same cube), we thus have
$$
\frac{|D^{\textbf{n}}f_{\sigma}(x) - D^{\textbf{n}}f_{\sigma}(y)|}{\|x - y\|^{\alpha}_2} \leq 1.
$$
In the case $\mbf \neq \mbf'$, observe that we have
\begin{align}
    \label{eq:Dn_holder}
    |D^{\nbf} g_{\mbf}(x) - D^{\nbf} g_{\mbf'}(y)|
    &\leq 2 \max \{|D^{\nbf} g_{\mbf}(x)|, |D^{\nbf} g_{\mbf'}(y)|\}.
\end{align}
Besides, recall that $D^{\nbf} g_{\mbf}$ and $D^{\nbf} g_{\mbf'}$ both vanish outside of the interiors of $C_{\mbf}$ and $C_{\mbf'}$ respectively. We can thus rewrite (\ref{eq:Dn_holder}) as
\begin{align*}
    |D^{\nbf} g_{\mbf}(x) - D^{\nbf} g_{\mbf'}(y)|
    &\leq 2 \max \{|D^{\nbf} g_{\mbf}(x) - D^{\nbf} g_{\mbf}(y)|, |D^{\nbf} g_{\mbf'}(x) - D^{\nbf} g_{\mbf'}(y)|\} \\
    &\leq 2 c_s (2N)^n \max \{|D^{\nbf}\phi(2N(x- x_{\mbf})) - D^{\nbf}\phi(2N(y- x_{\mbf}))|, \\
    &\qquad \qquad \qquad \qquad
    |D^{\nbf}\phi(2N(y- x_{\mbf'})) - D^{\nbf}\phi(2N(y- y_{\mbf'}))|\}.
\end{align*}
This entails
\begin{align*}
    \frac{|D^{\textbf{n}}f_{\sigma}(x) - D^{\textbf{n}}f_{\sigma}(y)|}{\|x - y\|^{\alpha}_2}
    &\leq c_s 2 (2N)^{s} \max \left\{ \frac{|D^{\textbf{n}}\phi(x') - D^{\textbf{n}}\phi(y')|}{\|x' - y'\|^{\alpha}_2}, \frac{|D^{\textbf{n}}\phi(x'') - D^{\textbf{n}}\phi(y'')|}{\|x'' - y''\|^{\alpha}_2} \right\} \\
    &\leq c_s 2 (2N)^{s} \|\phi\|_{\mathcal{C}^{n,\alpha}} = 1,
\end{align*}
where $x' = 2N(x - x_{\mbf})$ and $y' = 2N(y - x_{\mbf})$, and $x'' = 2N(x - x_{\mbf'})$ and $y'' = 2N(y - x_{\mbf'})$.

Summarizing, we showed that for all $\sigma \in \{-1,1\}^{N^d}$
$$\max_{\nbf: |\nbf| \leq n} \|D^{\nbf} f_{\sigma}\|_{\infty} \leq 1 \qquad \mbox{ and }\qquad\max_{\nbf: |\nbf|=n} \sup_{x \neq y} \frac{|D^{\nbf} f_\sigma(x) - D^{\nbf} f_\sigma(y)|}{\|x - y\|_{2}^\alpha} \leq 1.$$ 
We conclude that for all $\sigma \in \{-1,1\}^{N^d}$
\[\|f_\sigma\|_{\mathcal{C}^{n,\alpha}} \leq 1,
\]
and therefore $\{f_\sigma : \sigma \in \{-1,1\}^{N^d} \} \subset F_{s,d}$.

Let us now evaluate the distance between distinct elements of $\{f_\sigma : \sigma \in \{-1,1\}^{N^d} \}$. Let $\sigma^1$, $\sigma^2 \in \{-1,1\}^{N^d}$, with $\sigma^1\neq\sigma^2$, and let $\textbf{m} \in \{0, \ldots, N-1\}^{d}$ be  such that $\sigma^1_{\textbf{m}} = -\sigma^2_{\textbf{m}}$. Let us estimate $\Delta_p$ the $L^p(\lambda)$ discrepancy between $f_{\sigma^1}$ and $f_{\sigma^2}$ on the cube $C_{\textbf{m}}$, that is
\begin{align*}
    \Delta_p^p
    &= \int_{C_{\textbf{m}}} |f_{\sigma^1}(x)  - f_{\sigma^2}(x)|^p \dd x \\
    &= 2^p c_s^p \int_{C_{\textbf{m}}}  |\phi \left(2N(x - x_\textbf{m})\right)|^p \dd x \\
    &= 2^{p} c_s^p (2N)^{-d} \|\phi\|_{L^p(\lambda)}^p.
\end{align*}
It remains to find a subset among the functions $f_\sigma$ such that any two functions of this set differ on a significant number of cubes $C_{\textbf{m}}$. According to the Varshamov-Gilbert Lemma \cite{Yu1997AssouadFA}, there exists $\Gamma \subset \{-1,1\}^{N^d}$ with cardinal at least $\exp(N^d/8)$ such that for any $\sigma^1, \sigma^2 \in \Gamma$, such that $\sigma^1 \neq \sigma ^2$, $\sigma^1$ and $\sigma^2$ differ on at least one fourth of their coordinates; i.e., $\sum_{k=1}^{N^d} \mathds{1}_{\sigma^1_k \neq \sigma^2_k} \geq \frac{N^d}{4}$. We thus fix such a set $\Gamma \subset \{-1, 1\}^{N^d}$. For any $\sigma^1$, $\sigma^2\in\Gamma$, with $\sigma^1\neq \sigma^2$,
\begin{eqnarray*}
    \|f_{\sigma^1} - f_{\sigma^2}\|^p_{L^p(\lambda)}
    &= & \sum_{\textbf{m} : \sigma^1_{\textbf{m}} \neq \sigma^2_{\textbf{m}}} \int_{C_{\textbf{m}}} |f_{\sigma^1}(x) - f_{\sigma^2}(x)|^p \dd x \\
    &\geq &\frac{N^d}{4} \Delta_p^p = \frac{2^{p-d} c^p_s}{4}\|\phi\|_{L^p(\lambda)}^p.
\end{eqnarray*}
Finally, recalling the definition of $c_s$, we have for any $\sigma^1$, $\sigma^2\in\Gamma$, with $\sigma^1\neq \sigma^2$,
\[    \|f_{\sigma^1} - f_{\sigma^2}\|_{L^p(\lambda)}
    \geq 2^{1-\frac{d+2}{p}} \frac{1}{2} (2N)^{-s} \|\phi\|_{\mathcal{C}^{n,\alpha}}^{-1} \|\phi\|_{L^p(\lambda)} = c N^{-s},
\]
where $c =2^{-s-\frac{d+2}{p}}  \frac{\|\phi\|_{L^p(\lambda)}}{\|\phi\|_{\mathcal{C}^{n,\alpha}}}$.

It follows that $\{f_\sigma \ : \ \sigma \in \Gamma\}$ is a $c N^{-s}$-packing of $F_{s,d}$. Given the lower bound on the size of $\Gamma$, this implies
$$ M\left(c N^{-s}, F_{s,d}, \|\cdot\|_{L^p(\lambda)}\right) \geq \exp(N^d/8),
$$
for all $N \in \NN^*$. 

Set $\varepsilon_0=c$ and $c_0 =2^{-d} c^\frac{d}{s}/8$. Consider $\varepsilon >0$, with $\varepsilon \leq \varepsilon_0$. To conclude the proof, we need to show that \eqref{lemme2_eq} holds for $\varepsilon$. To do so, we consider $N$: the smallest integer such that $c N^{-s} \geq \varepsilon \geq c (2N)^{-s}$. This $N\in\NN^*$ exists because $0<\varepsilon\leq\varepsilon_0=c$ and $s>0$. On one side, we have
\[M\left(\varepsilon, F_{s,d}, \|\cdot\|_{L^p(\lambda)}\right) \geq M\left(c N^{-s}, F_{s,d}, \|\cdot\|_{L^p(\lambda)}\right),
\]
and on the other side, since $2N\geq c^\frac{1}{s} \varepsilon^{-\frac{1}{s}}$,
\[\exp(N^d/8) \geq \exp(2^{-d} c^\frac{d}{s} \varepsilon^{-\frac{d}{s}} /8) = \exp(c_0\varepsilon^{-\frac{d}{s}}) .
\]
Combining the last three inequalities, we finally obtain
$$
\log M\left(\varepsilon, F_{s,d}, \|\cdot\|_{L^p(\lambda)}\right) \geq c_0 \varepsilon^{-d/s},
$$
for all $0 < \varepsilon \leq \varepsilon_0$.

%%%%%%%%%%%%%%%%%%%%%%%%%%%%%%%%%%%%%%%%%%%%%%%%%%

\section{Monotonic functions}

This section contains the proofs of the results stated in Section \ref{section monotonic}. More precisely, in Section \ref{preuve borne sup monotone} we provide the proof of  Proposition \ref{borne sup monotone} and in Section \ref{preuve borne inf norme infinie} we provide the proof of   Proposition \ref{borne inf norme infinie}.

\subsection{Proof of Proposition \ref{borne sup monotone}}\label{preuve borne sup monotone}
The section contains two sub-sections. In the first sub-section, we provide a proposition on the representation of piecewise-constant functions with Heaviside neural-networks. Section \ref{qerouneqvibet} contains the main part of the proof of Proposition \ref{borne sup monotone}.

\subsubsection{Representing piecewise-constant functions with Heaviside neural networks}
We first describe a neural network architecture which, with the Heaviside activation function, is  able to represent functions that are piecewise-constant on cubes.
\begin{prop}\label{prop:2layers}
  Let $d\in\mathbb N^*$, $M\in\mathbb N^*$.
  There exists an architecture $\mathcal{A}$ with two-hidden layers, $2(d+1)^2M$ weights and the Heaviside activation function, such that for any $(\alpha_i)_{1\leq i\leq M}\in\mathbb R^M$, any collection  $(\mathcal C_i)_{1\leq i\leq M}$ of mutually disjoint hypercubes of $\RR^d$  the function $\tilde f\colon\RR^d\rightarrow[0,1]$ defined by
  \[\forall x\in\RR^d,\quad \tilde f(x)=\sum_{i= 1}^M\alpha_i\mathds1_{\mathcal C_i}(x)
  \]
  satisfies $\tilde{f} \in H_{\mathcal{A}}$.
\end{prop}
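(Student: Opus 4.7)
The plan is to construct, explicitly, a two-hidden-layer Heaviside architecture whose graph depends only on $d$ and $M$, and inside which any target $\tilde f = \sum_{i=1}^M \alpha_i \mathds{1}_{\mathcal{C}_i}$ is realized by an appropriate choice of numerical weights. The key observation is that each hypercube indicator factors as a logical AND of $2d$ coordinate half-space indicators, and a Heaviside neuron can implement such an AND by thresholding a sum of $2d$ binary inputs against $2d - 1/2$.

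Concretely, write $\mathcal{C}_i = \prod_{j=1}^d [a_{i,j}, b_{i,j}]$. In the first hidden layer, I would allocate $2d$ Heaviside neurons per cube $i$. For each coordinate $j$, one neuron receives an edge from input $j$ with weight $+1$ and bias $-a_{i,j}$, so that its output is $\mathds{1}_{\{x_j \geq a_{i,j}\}}$; the twin neuron receives weight $-1$ on input $j$ and bias $b_{i,j}$, outputting $\mathds{1}_{\{x_j \leq b_{i,j}\}}$. In the second hidden layer, allocate one neuron $\eta_i$ per cube, fed by the $2d$ first-hidden neurons attached to cube $i$, with every edge weight equal to $+1$ and bias $-2d + 1/2$. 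Then $\eta_i$ outputs $\sigma\bigl(\sum_{j=1}^d \mathds{1}_{\{x_j \geq a_{i,j}\}} + \mathds{1}_{\{x_j \leq b_{i,j}\}} - 2d + 1/2\bigr)$, which equals $1$ exactly when all $2d$ coordinate tests hold simultaneously, i.e.\ exactly when $x \in \mathcal{C}_i$, and equals $0$ otherwise (since the sum is then at most $2d-1$). Finally, the output neuron, which is linear in the paper's convention, is connected to the $M$ second-hidden neurons with weights $\alpha_i$ and bias $0$, producing $\sum_i \alpha_i \mathds{1}_{\mathcal{C}_i}(x) = \tilde f(x)$ pointwise on $\R^d$.

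The remaining task is a bookkeeping check on the weight count. The graph (set of neurons and edges) depends only on $d$ and $M$, and only the numerical weights depend on the $\alpha_i$'s and $\mathcal{C}_i$'s, so the same architecture $\mathcal{A}$ works for every choice of coefficients and cubes. Counting layer by layer with fully connected input edges gives $2dM(d+1)$ weights in the first hidden layer (edges and biases), $M(2d+1)$ in the second, and $M+1$ in the output, summing to $2(d+1)^2 M + 1$, which is bounded by $2(d+1)^2 M$ up to a single weight absorbable by dropping the output bias; using sparse input connections gives instead $6dM + 2M + 1$, still comfortably within the stated budget for $d \geq 2$.

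No substantial analytic obstacle arises; the only point requiring minor care is the convention of the Heaviside at $0$. Since the hypercubes are mutually disjoint, there is strictly positive slack between each $\mathcal{C}_i$ and every point of $\R^d \setminus \mathcal{C}_i$, so biases $-a_{i,j}$ and $b_{i,j}$ can be shifted by an arbitrarily small amount to ensure that whichever convention is in use ($\sigma(0)=0$ or $\sigma(0)=1$) reproduces $\mathds{1}_{\mathcal{C}_i}$ exactly. This adjustment does not affect the weight count, and the construction is otherwise a mechanical verification.
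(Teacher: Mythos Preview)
Your construction is essentially the paper's: a first hidden layer of $2d$ Heaviside half-space detectors per cube, a second hidden layer ANDing them, and a linear output. The weight bookkeeping is fine (the paper also lands on $2(d+1)^2M$ by counting a fully-connected input layer and omitting the output bias).

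There is, however, a real error in your treatment of the boundary. You claim that ``since the hypercubes are mutually disjoint, there is strictly positive slack between each $\mathcal{C}_i$ and every point of $\R^d \setminus \mathcal{C}_i$'', and that biases can therefore be nudged to absorb whichever Heaviside convention holds at $0$. That is false: disjointness of the cubes from \emph{each other} says nothing about the distance from $\mathcal{C}_i$ to its own complement, which is zero. If you enlarge (or shrink) $\mathcal{C}_i$ by any $\delta>0$ through a bias shift, the network's output differs from $\tilde f$ on a shell of positive measure around $\partial\mathcal{C}_i$, so you do not get $\tilde f\in H_{\mathcal A}$ exactly. Moreover, the proposition is later applied to half-open cubes $\prod_j[a_j,b_j)$ forming a partition, so at least one face of each cube is strict; your closed-cube formula $\sigma(b_{i,j}-x_j)=\mathds 1_{\{x_j\le b_{i,j}\}}$ gets that face wrong under the paper's convention $\sigma(0)=1$.

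The paper's fix is the one you need: for a face that does \emph{not} belong to the cube (a strict inequality $\langle w,x\rangle+b>0$), replace the detector $\sigma(\langle w,x\rangle+b)$ by $1-\sigma(-\langle w,x\rangle-b)$, and adjust the second-layer bias accordingly (the paper uses threshold $-J_i$, where $J_i$ is the number of closed faces). This costs no extra edges or neurons, so your architecture and weight count survive; only the numerical weights change. With that correction in place, your argument goes through.
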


\begin{proof}
  Define $\sigma\colon\mathbb R\rightarrow\mathbb R$ by $\sigma(x)=\mathds1_{x\geq0}$ for all $x\in\mathbb R$.

  Let $i\in\{1,\ldots,M\}$. The cube $\mathcal C_{i}$ has $2d$ faces. These faces are supported by hyperplanes whose equations are of the form $\left<\w,x\right>+b=0$, with $\w\in\mathbb R^d$ and $b\in\mathbb R$. We allow the faces to belong to the cube or not. To distinguish them, we denote $J_i\in\{1,\ldots,2d\}$ the number of faces that belong to $\mathcal C_i$. We index the $J_i$ faces that belong to the cube from $1$ to $J_i$, and the other faces from $J_i+1$ to $2d$. Thus, for all $i\in\{1,\ldots,M\}$ and all $j\in\{1,\ldots,2d\}$, there exist $\w_j^{i}\in\mathbb R^d,b_j^{i}\in\mathbb R$ such that
    \[
      \mathcal C_i=\bigcap_{j=1}^{J_i}\{x\in\mathbb R^d\colon\left<\w_j^{i},x\right>+b_j^{i}\geq0\}\ \cap\ \bigcap_{j=J_i+1}^{2d}\{x\in\mathbb R^d\colon\left<\w_j^{i},x\right>+b_j^{i}>0\}\;.
    \]
  We rewrite:
  \begin{equation} \label{zqrmoinbt}
  \mathcal C_i=\left\{x\in\mathbb R^d\colon\sum_{j=1}^{J_i}\mathds1_{\{\left<\w_j^{i},x\right>+b_j^{i}\geq0\}}+\sum_{j=J_i+1}^{2d}\mathds1_{\{\left<\w_j^{i},x\right>+b_j^{i}>0\}}\ \geq \ 2d\right\}.
  \end{equation}
  Denoting for all $i\in\{1,\ldots,M\}$ and all $j\in\{1,\ldots,2d\}$ and for all $x\in \RR^d$,

    \[
       p^i_j(x)=\begin{cases}
                              \sigma\left(\left<\w_j^{i},x\right>+b_j^{i}\right) &\text{if } j\leq J_i\\
                              1-\sigma\left(-\left<\w_j^{i},x\right>-b_j^{i}\right)&\text{otherwise},
                            \end{cases}
    \]
    we have, see Figure \ref{fig:perceptronsdim2} and \eqref{zqrmoinbt}, for all $x\in\mathbb R^d$
    \begin{align*}
      \mathds1_{\mathcal C_{i}}(x)&=\begin{cases}
                                       1\quad\text{if $\sum_{j=1}^{2d}p^i_j(x)\geq2d$,}\\
                                       0\quad\text{otherwise},
                                     \end{cases}\\
      &=\sigma\left(\sum_{j=1}^{2d}p^i_j(x)-2d\right).
    \end{align*}
  \begin{figure}
  \begin{center}
   \begin{tikzpicture}[y=1cm,scale=1]
      \draw (1,0) -- (1,4);
      \draw (3,0) -- (3,4);
      \draw (0,1) -- (4,1);
      \draw (0,3) -- (4,3);
      \node at (0.5,0.5) {2};
      \node at (0.5,2) {3};
      \node at (0.5,3.5) {2};
      \node at (2,0.5) {3};
      \node at (3.5,0.5) {2};
      \node at (2,2) {4};
      \node at (2,3.5) {3};
      \node at (3.5,3.5) {2};
      \node at (3.5,2) {3};
      \draw [->] (1,2) -- (1.5,2);
      \node[below] at (1.3,2) {$\w^i_j$};
      \draw [->] (3,2) -- (2.5,2);
      \draw [->] (2,3) -- (2,2.5);
      \draw [->] (2,1) -- (2,1.5);
    \end{tikzpicture}
    \caption{\label{fig:perceptronsdim2}Values of the sum of the perceptrons $p^i_j(x)$ around a hypercube $\mathcal C_{i}$ in dimension 2.}
  \end{center}
  \end{figure}
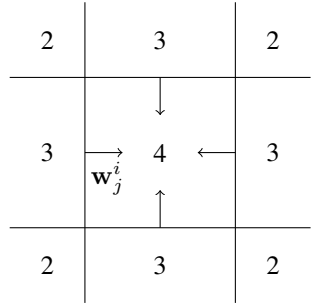
  Since the hypercubes are mutually disjoints,  for all $x\in[0,1]^d$, we have
  \begin{align}
    \tilde f(x)&=\sum_{i=1}^ M\alpha_i\sigma\left(\sum_{j=1}^{J_i}\sigma\left(\left<\w_j^{i},x\right>+b_j^{i}\right)+\sum_{j=J_i+1}^{2d}\left(1-\sigma\left(-\left<\w_j^{i},x\right>-b_j^{i}\right)\right)\ - \ 2d\right) \nonumber\\
    &=\sum_{i=1}^ M\alpha_i\sigma\left(\sum_{j=1}^{2d}\varepsilon^i_{j}\sigma\left(\left<\tilde \w_j^{i},x\right>+\tilde b_j^{i}\right)-J_i\right), \label{etgipuhqtbe}
  \end{align}
  where 
  $$\varepsilon^i_{j}=\left\{\begin{array}{ll}
  +1 & \mbox{if } j\leq J_i \\
  -1 & \mbox{otherwise,}
  \end{array}\right.
  \qquad
  \tilde \w^i_{j}=\left\{\begin{array}{ll}
   \w^i_{j}& \mbox{if } j\leq J_i \\
  -\w^i_{j} & \mbox{otherwise,}
  \end{array}\right.
  \qquad
    \tilde b^i_{j}=\left\{\begin{array}{ll}
   b^i_{j}& \mbox{if } j\leq J_i \\
  -b^i_{j} & \mbox{otherwise.}
  \end{array}\right.
  $$
  Equation \eqref{etgipuhqtbe} is the action of the Heaviside neural network with two hidden layers whose architecture is on Figure~\ref{fig:neuralnetwork2layers}.
  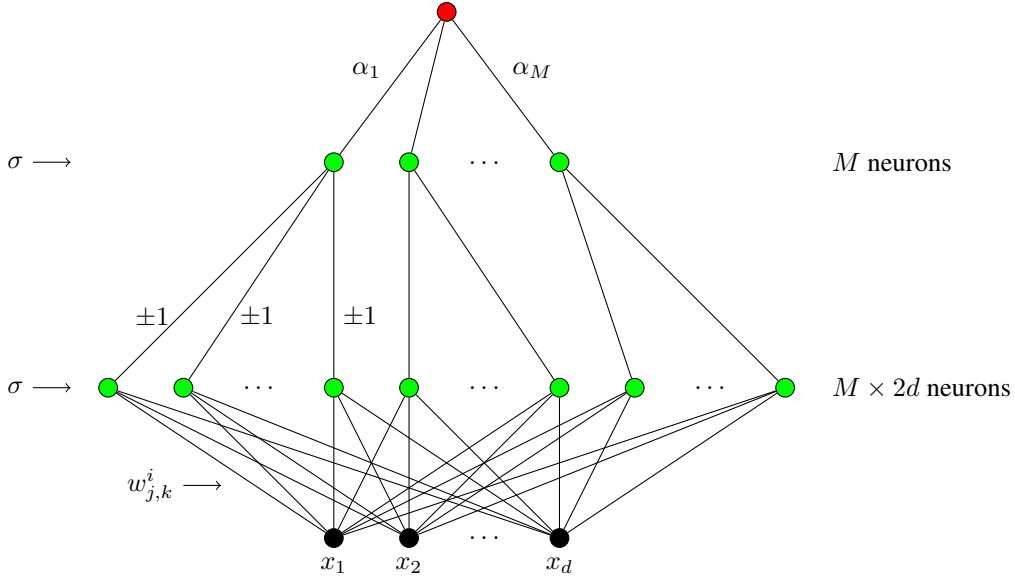
\begin{figure}
    \begin{center}
      \begin{tikzpicture}
        \tikzstyle{DISTANCE}=[outer sep=4pt]
        \foreach\NODE in {
          1-1,1-2,1-4,%
          2-1,2-2,2-4,2-5,2-7,2-8,2-10,%
          3-1,3-2,3-4,%
          4-1%
        } {
          \expandafter\gdef\csname NODE-\NODE\endcsname{true}%
        }
        \foreach \LEVEL/\NUMBER/\Y/\COLOR in {1/4/0/black,2/10/2/green,3/4/5/green,4/1/7/red}{
          \foreach \K in {1,...,\NUMBER}{
            \pgfmathsetmacro\X{\K-0.5*\NUMBER+0.5}
            \ifcsname NODE-\LEVEL-\K\endcsname
              \node[draw,circle,fill=\COLOR,inner sep=2.5pt] (N-\LEVEL-\K) at (\X,\Y) {};
            \else
              \node (N-\LEVEL-\K) at (\X,\Y) {$\ldots$};
            \fi
          }
        }
        \foreach \BEGIN/\END in {%
          1-1/{2-1//,2-2//,2-4//,2-5//,2-7//,2-8//,2-10//},1-2/{2-1//,2-2//,2-4//,2-5//,2-7//,2-8//,2-10//},1-4/{2-1//,2-2//,2-4//,2-5//,2-7//,2-8//,2-10//},%
          2-5/{3-2//},2-7/{3-2//},2-8/{3-4//},2-10/{3-4//},%
          3-1/{4-1/above left/$\alpha_1$},3-2/{4-1//},3-4/{4-1/above right/$\alpha_M$}%
        } {
          \foreach \NODE/\OPTIONS/\LABEL in \END {
            \draw (N-\BEGIN) -- (N-\NODE) node[midway,\OPTIONS] {\LABEL};
          }
        }
        \draw (N-2-1) -- (N-3-1) node[midway,left,pos=0.3] {$\pm1$};
        \draw (N-2-2) -- (N-3-1) node[midway,right,pos=0.3] {$\pm1$};
        \draw (N-2-4) -- (N-3-1) node[midway,right,pos=0.3] {$\pm1$};
        \node[below,DISTANCE] at (N-1-1) {$x_1$};
        \node[below,DISTANCE] at (N-1-2) {$x_2$};
        \node[below,DISTANCE] at (N-1-4) {$x_d$};
        \draw[<-] (-2,0.7) -- +(-0.5,0) node[left] {$w^{i}_{j,k}$};
        \draw[<-] (-4,2) -- +(-0.5,0) node[left] {$\sigma$};
        \draw[<-] (-4,5) -- +(-0.5,0) node[left] {$\sigma$};
        \node[right] at (6,5) {$M$ neurons};
        \node[right] at (6,2) {$M\times 2d$ neurons};
      \end{tikzpicture}
      \caption{\label{fig:neuralnetwork2layers}The function $\tilde f$ represented as a neural network.}
    \end{center}
  \end{figure}

  It remains to count the weights and biases of $\tilde f$ :
  \begin{itemize}
    \item the architecture has $M$ edges going to the output layer, due to the $\alpha_i$;
    \item it has $M$ biases associated to the neurons of the second hidden layer (they correspond to the terms $-J_i$);
    \item between the second and the first hidden layer, the architecture has $M\times2d$ edges (corresponding to the $\varepsilon_{i,j}$);
    \item it has $M\times2d$ biases associated to the neurons of the first hidden layer (the $\tilde b_j^{i}$);
    \item it has $M\times2d\times d$ edges between the first hidden layer and the entry (the $\tilde \w_j^{i}$).
  \end{itemize}
   Thus there are $2M+2M\times2d+M\times2d\times d=2(d^2+2d+1)M=2(d+1)^2M$ weights and biases in total.
\end{proof}

\subsubsection{Main developments of the proof of Proposition \ref{borne sup monotone}}\label{qerouneqvibet}
Let $N\in\mathbb N^*$ and $f \in \mathcal{M}^d$. In this section, we partition $[0,1)^d$ into cubes whose sizes depend on the maximal variation of $f$. Then we use this partition to construct a piecewise constant approximation $\tilde f$ of $f$; we will bound from above the $L^p(\lambda)$ approximation error $\|f-\tilde f\|_{L^p(\lambda)}$ by a function of $N$. This part is a direct reinterpretation of the proof of Proposition $3.1$ in \cite{GAO20071751}. We then apply  Proposition \ref{prop:2layers} to $\tilde f$ and obtain the announced result.

We first define some notation that will be used in the rest of the section, then we explain the algorithm used to divide $[0,1)^d$ into cubes. 
We fix the constant $K>1$ the following way:
\begin{equation*}\label{eq: def K}
  K:=\begin{cases}
       2^d \quad \text{    if }p=1,\\
       2^\beta  \quad \text{    otherwise, where }\beta=\frac12(d-1+\frac1{p-1}).
     \end{cases}
\end{equation*}
We also define an integer $l$ that corresponds to the number of cube decompositions:
\begin{equation}\label{l_def_qpeuon}
  l:=\left\lceil\frac{N\log 2}{\log K}\right\rceil = \begin{cases}
       \left\lceil\frac{N}{d}\right\rceil \quad \text{    if }p=1,\\
       \left\lceil\frac{N}{\beta}\right\rceil  \quad \text{    otherwise }.
     \end{cases}
\end{equation}
It is worth noting that this implies $K^{-l}\leq2^{-N}<K^{-l+1}$. 

Now we partition $[0,1)^d$ into dyadic cubes of the form $[a_1,b_1) \times \cdots \times [a_d,b_d)$. If $C$ is such a cube, we use the following convenient notation:
\begin{align*}
  \underline C&:=(a_1,\ldots,a_d)\in\mathbb R^d,&
  \overline C&:=(b_1,\ldots,b_d)\in\mathbb R^d,
\end{align*}
to refer to the smallest and largest vertices of $C$. The cube decompositions process reads as follow:

\begin{itemize}
  \item First we partition $[0,1)^d$ into $2^{Nd}$ cubes of side-length $2^{-N}$. We denote by $S_0$ the set of these cubes $C$ such that $f(\overline C)-f(\underline C)\leq K2^{-N}$ and by $R_0$ the set of the remaining cubes.
  \item For $1\leq i<l$, we partition each cube in the set $R_{i-1}$ (the remaining cubes at the step $i-1$) into $2^d$ cubes of equal size, and we denote by $S_i$ the set of obtained cubes $C$ of side-length $2^{-(i+N)}$ such that 
  \begin{equation}\label{eq: ineq partition cubes}
    f(\overline C)-f(\underline C)\leq K^{i+1}2^{-N}.
  \end{equation}\label{eq:defRi}
 Again, the set of remaining cubes is denoted by $R_i$.
  \item Lastly, we partition each cube in the set $R_{l-1}$ into $2^d$ cubes of equal size, and we denote by $S_l$ the set of obtained cubes of side-length $2^{-(l+N)}$.
\end{itemize}

Once the algorithm is done, each point in $[0,1)^d$ clearly belongs to one single cube of $\cup_{i=0}^l S_i$. For $i \in \{0,\ldots,l\}$, we let $\tilde{S}_i = \cup_{C \in S_i} C$.

We now define the piecewise constant approximation of $f$ by
  \begin{equation*}
    \forall x\in[0,1]^d,\quad \tilde f(x)=\sum_{C\in\bigcup_{0\leq i\leq l}S_i}f(\underline C)\mathds1_{x\in C},
  \end{equation*}
  where $\mathds1_{x\in C}$ denotes the indicator function of the cube $C$. We do not make the dependence explicit, but $\tilde f$ depends on the parameters $N$, $d$ and $p$. The number of cubes over which $\tilde f$ is constant is $\sum_{i=0}^l |S_i|$. This quantity is key when constructing the neural network according to Proposition \ref{prop:2layers}; in the next lemma, we bound from above $|S_i|$ for all $i= 0, \ldots, l$. Then, we will estimate the error $\|f-\tilde f\|_{L^p(\lambda)}$.
  
\begin{lemma}\label{lemma:Si}
  With the above notation:
  \begin{equation*}
    \forall i\in \{0,\ldots,l\} ,\quad |S_i| \leq 
    dK^{-i}2^{i(d-1)+Nd+1} 
  \end{equation*}
  Moreover,
    \begin{equation}\label{eq:measureSi}
    \lambda(\tilde{S}_i)\leq \left\{\begin{array}{ll}
    1 & \mbox{ if }i=0, \\ 
    2d(2K)^{-i}& \mbox{, otherwise}.
    \end{array}\right.
  \end{equation}
\end{lemma}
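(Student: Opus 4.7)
The plan is to reduce both claims to a single telescoping estimate on the variations of $f$ across a regular dyadic tiling.

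First, observe that the two bounds are essentially equivalent. Every cube in $S_i$ has side-length $2^{-(i+N)}$, so $\lambda(\tilde{S}_i) = |S_i| \cdot 2^{-(i+N)d}$. Plugging in the proposed bound $|S_i| \leq d K^{-i} 2^{i(d-1)+Nd+1}$ gives exactly $\lambda(\tilde{S}_i) \leq d K^{-i} 2^{-i+1} = 2d(2K)^{-i}$ for $i \geq 1$, while $\lambda(\tilde{S}_0) \leq 1$ is immediate from $\tilde{S}_0 \subseteq [0,1)^d$. The case $i=0$ of the cardinality bound is also trivial since $|S_0| \leq 2^{Nd} \leq 2d \cdot 2^{Nd}$. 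So the task reduces to proving $|S_i| \leq d K^{-i} 2^{i(d-1)+Nd+1}$ for $i \geq 1$.

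For such $i$, every cube in $S_i$ is a subcube of some $C \in R_{i-1}$ and hence $|S_i| \leq 2^d |R_{i-1}|$. By construction each $C \in R_{i-1}$ has side-length $2^{-(i-1+N)}$ and satisfies $f(\overline{C}) - f(\underline{C}) > K^i 2^{-N}$, and the family $R_{i-1}$ is a subfamily of the full regular dyadic tiling $\mathcal{T}_{i-1}$ of $[0,1)^d$ at that scale. Since monotonicity of $f$ forces $f(\overline{C}) - f(\underline{C}) \geq 0$ for every $C$, I can dominate the sum over $R_{i-1}$ by the sum over the whole tiling:
\begin{equation*}
|R_{i-1}|\, K^i 2^{-N} \;<\; \sum_{C \in R_{i-1}} \bigl(f(\overline{C}) - f(\underline{C})\bigr) \;\leq\; \sum_{C \in \mathcal{T}_{i-1}} \bigl(f(\overline{C}) - f(\underline{C})\bigr).
\end{equation*}

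The key step is then to bound this last sum by $d n^{d-1}$ with $n = 2^{i-1+N}$. Setting $\delta = 1/n$, I would use the axial decomposition
\begin{equation*}
f(\overline{C}) - f(\underline{C}) \;=\; \sum_{k=1}^{d} \bigl[ f(p_k^C) - f(p_{k-1}^C) \bigr], \qquad p_0^C = \underline{C}, \ p_k^C = \underline{C} + \delta(e_1 + \ldots + e_k),
\end{equation*}
swap the two sums, and telescope: for each fixed $k$ and each fixed choice of the $d-1$ coordinates other than the $k$-th, summing over the $k$-th index in $\{0, \ldots, n-1\}$ produces a telescoping value of the form $f(\ldots, 1, \ldots) - f(\ldots, 0, \ldots) \in [0,1]$. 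There are $n^{d-1}$ such slices per direction $k$ and $d$ directions, yielding $d n^{d-1}$. Plugging $n = 2^{i-1+N}$ back gives $|R_{i-1}| \leq d K^{-i} 2^{(i-1)(d-1)+Nd}$, and the factor of $2^d$ coming from $|S_i| \leq 2^d|R_{i-1}|$ absorbs into $2^{i(d-1)+Nd+1}$ after a routine simplification of exponents.

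The only delicate point — which I flag as the main obstacle — is the bookkeeping in the telescoping step: one must verify that as $C$ ranges over $\mathcal{T}_{i-1}$, the pairs $(p_{k-1}^C, p_k^C)$ hit each adjacent pair in the $k$-th direction (at fixed other coordinates) exactly once, so the telescoping collapses cleanly. Everything else is bounded manipulation of exponents, and the use of monotonicity and the $[0,1]$ range of $f$ enters only through this combinatorial estimate, mirroring the classical argument of \cite{GAO20071751}.
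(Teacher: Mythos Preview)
Your proof is correct, and the telescoping bookkeeping you flag is indeed routine: for fixed $k$ and fixed $(j_1,\ldots,j_{k-1},j_{k+1},\ldots,j_d)$, the points $p_{k-1}^C$ and $p_k^C$ share all coordinates except the $k$-th, and summing over $j_k\in\{0,\ldots,n-1\}$ collapses to a single difference bounded by $1$.

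Your route, however, differs from the paper's. The paper does not sum the variation $f(\overline C)-f(\underline C)$ over the \emph{whole} tiling $\mathcal T_{i-1}$. Instead it partitions the lowest vertices $\{\underline C : C\in R_{i-1}\}$ into equivalence classes of points lying on the same ``extended diagonal'' (i.e., points differing by a multiple of $(1,\ldots,1)$), observes there are at most $d\,2^{(i-1+N)(d-1)}$ such classes, and applies pigeonhole to find one diagonal carrying at least $|R_{i-1}|/(d\,2^{(i-1+N)(d-1)})$ cubes of $R_{i-1}$. Along that single diagonal, monotonicity gives $\sum_j (f(\overline{C_j})-f(\underline{C_j}))\leq f(1,\ldots,1)-f(0,\ldots,0)\leq 1$, while each summand exceeds $K^i 2^{-N}$, forcing the same bound on $|R_{i-1}|$. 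Your axial-decomposition argument is more self-contained: it avoids both the pigeonhole step and the count of diagonal classes (which the paper states without proof), and it telescopes coordinate by coordinate rather than along the main diagonal. The paper's version is perhaps more geometric, but yours is cleaner and makes the dependence on $d$ transparent.
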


\begin{proof}
  By construction, we have
  \begin{equation*}
    \forall i\in\{1,\ldots,l\},\quad |S_i| + |R_i| =2^d |R_{i-1}|,
  \end{equation*}
since the set $S_i\cup R_i$ contains all the cubes of side-length $2^{-(i+N)}$, that have been constructed from the cubes of $R_{i-1}$. In particular,
\begin{equation}\label{eq:Si}
  \forall i\in\{1,\ldots,l\},\quad |S_i| \leq2^d |R_{i-1}|.
\end{equation}
It remains to bound $|R_{i-1}|$ from above for $i\geq1$. Define $V:=\{\underline C\colon C\in R_{i-1}\}$ the set of the smallest vertices of the cubes in $R_{i-1}$. We consider the classes of these vertices under the ``laying on the same extended diagonal'' equivalence relation. Since the cubes have side-length $2^{-(i-1+N)}$, there are less than $d2^{(i-1+N)(d-1)}$ equivalence classes. According to the pigeonhole principle, the largest class has at least $\left\lceil\frac{|V|}{d2^{(i-1+N)(d-1)}}\right\rceil$ elements; let us refer to this class as $\mathcal{D}$. Let $(C_j)_{1\leq j\leq J}$ be the set of cubes in $R_{i-1}$ having a point in $\mathcal{D}$ as lowest vertex. Since $f$ is non-decreasing and according to \eqref{eq: ineq partition cubes}, we have:
\begin{align*}
  1&\geq f(1,\ldots,1)-f(0,\ldots,0)\geq\sum_{j=1}^Jf(\overline{C_j})-f(\underline{C_j})\geq JK^{i}2^{-N}\\
  &\geq\frac{|V|}{d2^{(i-1+N)(d-1)}} K^{i}2^{-N}=\frac{|R_{i-1}|}{d2^{(i-1+N)(d-1)}} K^{i}2^{-N}.
\end{align*}
Thus
\begin{equation*}
  |R_{i-1}|\leq d2^{i(d-1)+Nd+1-d} K^{-i}.
\end{equation*}
The first statement of Lemma \ref{lemma:Si} follows from \eqref{eq:Si}.

For $i=0$, $\lambda(\tilde{S}_0) \leq 1$. For $1\leq i\leq l$, using the first statement of this lemma, we bound from above the measure of $\tilde{S}_i$:
  \begin{eqnarray*}
    \lambda(\tilde{S}_i)=\left(2^{-(i+N)}\right)^d |S_i| & \leq&  dK^{-i}2^{i(d-1)+Nd+1}2^{-d(i+N)}, \\
    & = & 2d(2K)^{-i}.
  \end{eqnarray*}
\end{proof}

  To show that $\tilde f$ is close to $f$ in $L^p(\lambda)$ norm, let us use the fact that $(\tilde S_i)_{0\leq i\leq l}$ is a partition of $[0,1)^d$ and decompose the error in three parts:
    \begin{equation*}
    \Vert f-\tilde f\Vert_{L^p(\lambda)}^p=\int_{\tilde{S}_0}\vert f(x)-\tilde f(x)\vert^p \dd x +\sum_{i=1}^{l-1}\int_{\tilde{S}_i}\vert f(x)-\tilde f(x)\vert^p \dd x+\int_{\tilde{S}_l}\vert f(x)-\tilde f(x)\vert^p \dd x.
  \end{equation*}
  In the next lemma, we control each term of the above sum  to bound from above $\Vert f-\tilde f\Vert_{L^p(\lambda)}$ by a function of $N$ that is independent of $f$ and tends to $0$ when $N$ tends to~$+\infty$.

 \begin{lemma}\label{lemma8}
    For any $1 \leq p < +\infty$, there exists a constant $c_{d,p} >0$ depending only on $d$ and $p$ such that for all $N\in\NN^*$
  \begin{equation}\label{majoref-ft_en_N}
    \Vert f-\tilde f\Vert_{L^p(\lambda)} \leq c_{d,p}\left\{\begin{array}{ll}
    2^{-N} & \mbox{ if } p(d-1) <d, \\
    2^{-N\frac{(1+1/\beta)}{p}} & \mbox{ if }p(d-1) >d, \\
    N^{\frac{1}{p}} 2^{-N} & \mbox{ if }p(d-1) =d, \\
    \end{array}\right.
  \end{equation}
  where $\tilde f$ is the function constructed for the parameters $N$, $d$ and $p$.
 \end{lemma}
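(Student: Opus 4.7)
The plan is to decompose $\|f - \tilde{f}\|_{L^p(\lambda)}^p$ over the partition $(\tilde{S}_i)_{0 \leq i \leq l}$ of $[0,1)^d$ and to control each piece via monotonicity combined with the volume estimates from Lemma~\ref{lemma:Si}. For every $i \in \{0, \ldots, l-1\}$ and every cube $C \in S_i$, monotonicity of $f$ gives $f(\underline{C}) \leq f(x) \leq f(\overline{C})$ for all $x \in C$, so the defining inequality \eqref{eq: ineq partition cubes} of $S_i$ yields $|f(x) - \tilde{f}(x)| \leq K^{i+1} 2^{-N}$ pointwise on $\tilde{S}_i$. For $i = l$ no such control is available, but $f$ and $\tilde{f}$ both take values in $[0,1]$, so we fall back on $|f - \tilde{f}| \leq 1$ on $\tilde{S}_l$.

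Integrating these pointwise bounds and inserting $\lambda(\tilde{S}_0) \leq 1$ and $\lambda(\tilde{S}_i) \leq 2d(2K)^{-i}$ for $i \geq 1$ from Lemma~\ref{lemma:Si}, I would obtain
$$\|f - \tilde{f}\|_{L^p(\lambda)}^p \leq K^p 2^{-Np} + 2d K^p 2^{-Np} \sum_{i=1}^{l-1} q^i + 2d (2K)^{-l},$$
where $q := K^{p-1}/2$. Using $\beta = \tfrac{1}{2}\bigl(d-1 + \tfrac{1}{p-1}\bigr)$ when $p>1$, a short algebraic check shows that $q<1$, $q=1$, $q>1$ correspond exactly to $p(d-1) < d$, $p(d-1) = d$, $p(d-1) > d$; the case $p = 1$ gives $q = 1/2$ and falls into the first regime.

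The conclusion then follows by evaluating the geometric sum in each regime and using $l = \lceil N/\beta \rceil$ (resp.\ $l = \lceil N/d \rceil$ when $p=1$). If $q<1$, the sum is bounded by a constant, so the first two terms are $O(2^{-Np})$; the last term is absorbed since one verifies $1 + 1/\beta \geq p$ in this regime. If $q=1$, the sum equals $l-1 = O(N)$, producing the boundary rate $N^{1/p} 2^{-N}$. If $q>1$, the sum is controlled by $q^l$, and plugging in $l \sim N/\beta$ shows that both $2dK^p 2^{-Np} q^l$ and $(2K)^{-l}$ are of order $2^{-N(1+1/\beta)}$, while $K^p 2^{-Np}$ is dominated since $p > 1 + 1/\beta$ here. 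Taking $p$-th roots in each case yields the three cases of \eqref{majoref-ft_en_N}.

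The main obstacle is verifying that the specific choice $\beta = \tfrac{1}{2}\bigl(d-1 + \tfrac{1}{p-1}\bigr)$ exactly balances the two competing contributions $2dK^p 2^{-Np} q^l$ and $2d(2K)^{-l}$ in the regime $p(d-1) > d$, so that their common order $2^{-N(1+1/\beta)}$ determines the final rate. This reduces to checking the pivotal quadratic inequality $(d-1)p^2 - (2d-1)p + d \leq 0 \Leftrightarrow 1 \leq p \leq d/(d-1)$ (a straightforward discriminant computation), which governs the trichotomy underlying the three cases of the claimed bound.
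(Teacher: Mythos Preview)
Your proposal is correct and follows essentially the same approach as the paper: decompose over the partition $(\tilde S_i)_{0\le i\le l}$, use the pointwise bound $|f-\tilde f|\le K^{i+1}2^{-N}$ on $\tilde S_i$ for $i<l$ and the trivial bound on $\tilde S_l$, combine with the volume estimates from Lemma~\ref{lemma:Si}, and analyze the resulting geometric sum according to whether $q=K^{p-1}/2$ is below, equal to, or above $1$. The only cosmetic difference is your closing remark about the quadratic $(d-1)p^2-(2d-1)p+d$, which is a repackaging of the trichotomy $p(d-1)\lessgtr d$ rather than an additional obstacle; the paper handles the same balancing by direct inequalities on $\beta$ without invoking this quadratic.
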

 \begin{proof}

  For $0\leq i<l$, on any cube $C\in S_i$, we have
  \begin{equation}\label{eq:boundf}
    \forall x\in C,\quad\vert f(x)-\tilde f(x)\vert=\vert f(x)-f(\underline C)\vert\leq f(\overline C)-f(\underline C)\leq K^{i+1}2^{-N},
  \end{equation}
  since $f$ is non-decreasing, and by definition of $\tilde f$ and $S_i$.
  \begin{itemize}
    \item Using the fact that $\lambda(\tilde{S}_0)\leq1$ and by \eqref{eq:boundf}:
    \begin{equation}
      \label{eq:int1}
      \int_{\tilde{S}_0}\vert f(x)-\tilde f(x)\vert^p \dd x \leq(2^{-N}K)^p.
    \end{equation}
    \item Using \eqref{eq:measureSi} and \eqref{eq:boundf}, we get for all $i \in \{1,\ldots,l-1\}$
    \begin{equation}
      \label{eq:int2}
      \int_{\tilde{S}_i}\vert f(x)-\tilde f(x)\vert^p \dd x \leq(K^{i+1}2^{-N})^p2d(2K)^{-i}.
    \end{equation}
    \item On any $C\in S_l$, we have, for all $x\in C$, $\vert f(x)-\tilde f(x)\vert\leq\vert f(x)-f(\underline C)\vert\leq1$, and we get, using \eqref{eq:measureSi}:
    \begin{equation}
      \label{eq:int3}
      \int_{\tilde{S}_l}\vert f(x)-\tilde f(x)\vert^p \dd x\leq2d(2K)^{-l}.
    \end{equation}
  \end{itemize}
  Combining \eqref{eq:int1}, \eqref{eq:int2} and \eqref{eq:int3} we get:
  \begin{align}
    \Vert f-\tilde f\Vert_{L^p(\lambda)}^p&\leq(2^{-N}K)^p+\sum_{i=1}^{l-1}(K^{i+1}2^{-N})^p2d(2K)^{-i}+2d(2K)^{-l} \nonumber \\
    &\leq(2^{-N}K)^p+2^{1-Np}K^pd\sum_{i=1}^{l-1}\left(\dfrac{K^{p-1}}{2}\right)^i+2d(2K)^{-l}. \label{eq: ineqf-ftilde step 1}
  \end{align}
  It remains to bound the right-hand side of \eqref{eq: ineqf-ftilde step 1}, depending on the value of $p$ and $d$. Note that the behavior of this term depends on whether $\frac{K^{p-1}}{2}$ is larger or smaller than $1$.
  \begin{itemize}
    \item 
  Suppose that $p(d-1)<d$. In this case, we can have $p=1$ or $p>1$. If $p=1$, we have $\frac{K^{p-1}}{2}=\frac12<1$ and $\frac{1}{2K} < K^{-p}$. If $p>1$, we have:
  \begin{equation*}
    p(d-1)<d\ \Longleftrightarrow \ dp-p-d +1 < 1  \ \Longleftrightarrow \  d-1<\frac1{p-1}.
  \end{equation*}
  Thus, $\beta$ being the arithmetic mean of $d-1$ and $\frac1{p-1}$, we have $d-1<\beta<\frac1{p-1}$. Then $K=2^\beta<2^{1\slash (p-1)}$ and hence $\frac{K^{p-1}}{2}<1$ and $\frac1{2K}<K^{-p}$. Therefore, both for $p=1$ and $p>1$,
  \begin{align*}
      \sum_{i=1}^{l-1} \left( \frac{K^{p-1}}{2} \right)^i \leq \frac{K^{p-1}}{2-K^{p-1}} \quad \text{ and } \quad (2K)^{-l} \leq K^{-pl}.
  \end{align*}
  Since $K^{-l}\leq2^{-N}$, this leads to
  \begin{align}
    \Vert f-\tilde f\Vert_{L^p(\lambda)}^p&\leq(2^{-N}K)^p+2^{1-Np}K^pd\dfrac{K^{p-1}}{2-K^{p-1}}+2dK^{-pl} \nonumber \\
    &\leq\left(K^p+2K^pd\dfrac{K^{p-1}}{2-K^{p-1}}+2d\right)2^{-Np}.\nonumber
  \end{align}
  We thus have, setting $c_1:=\left(K^p+2K^pd\dfrac{K^{p-1}}{2-K^{p-1}}+2d\right)^{\frac1p}$,
  \begin{equation*}
    \Vert f-\tilde f\Vert_{L^p(\lambda)} \leq c_1 2^{-N}.
  \end{equation*}
Notice $c_1$ only depends on $d$ and $p$.
  \item Suppose that $p(d-1)>d$. We have $p>1$ and  $d-1>\beta>\frac1{p-1}$. Then $K=2^\beta>2^{1\slash(p-1)}$ and hence $\frac{K^{p-1}}{2}>1$, which entails using \eqref{eq: ineqf-ftilde step 1}
  \begin{align*}
    \Vert f-\tilde f\Vert_{L^p(\lambda)}^p&\leq(2^{-N}K)^p+2^{1-Np}K^pd\dfrac{(K^{p-1}\slash2)^l}{K^{p-1}\slash2-1}+2d(2K)^{-l} \\
    &\leq2^{-Np}K^p+2^{-Np}K^{pl}\frac{2K^pd}{K^{p-1}\slash2-1}(2K)^{-l}+2d(2K)^{-l}.
  \end{align*}
  
  Since $p>1+\frac{1}{\beta}$, we have $2^{-Np} \leq 2^{-N(1+\frac{1}{\beta})}$.
  Also, since $K=2^{\beta}$, $(2K)^{-l} = 2^{-l(\beta+1)}$, and since $l \geq \frac{N \log(2)}{\log(K)} = \frac{N}{\beta}$, we have $(2K)^{-l} \leq 2^{-\frac{N}{\beta}(\beta+1)} = 2^{-N(1+\frac{1}{\beta})}$.
  Finally, since $2^{-N}K^l < K$,
  \begin{align*}
      \Vert f-\tilde f\Vert_{L^p(\lambda)}^p&\leq \left(K^p+K^p\frac{2K^{p}d}{K^{p-1}\slash2-1}+2d\right)2^{-N(1+1\slash\beta)}
  \end{align*}
  
  We thus have, setting $c_2:=\left(K^p+\frac{2K^{2p}d}{K^{p-1}\slash2-1}+2d\right)^{\frac1p}$,  \begin{equation*}
    \Vert f-\tilde f\Vert_{L^p(\lambda)}\leq c_2 2^{-\frac{N(1+1\slash\beta)}p}.
  \end{equation*}
Notice $c_2$ only depends on $d$ and $p$.

  \item Suppose that $p(d-1)=d$. It implies $p>1$ and $p-1=\frac1{d-1}$, then $\beta=d-1$. We thus have $K^{p-1}=2^{(d-1)(p-1)}=2$. Therefore, \eqref{eq: ineqf-ftilde step 1} becomes
  \begin{equation*}
    \Vert f-\tilde f\Vert_{L^p(\lambda)}^p\leq2^{-Np}K^p+2^{-Np}2K^pd(l-1)+2d(K^p)^{-l}.
  \end{equation*}
  On the one hand, we have $K^{-l}\leq2^{-N}$. On the other, we have $2^{-N}<K^{-l+1}$, so $l-1<N\frac{\log 2}{\log K}=\frac{N}{d-1}$. Putting it all together, we get
  \begin{align*}
    \Vert f-\tilde f\Vert_{L^p(\lambda)}^p&\leq2^{-Np}K^p+2^{-Np}2K^pd(l-1)+2d2^{-Np}\\
    &\leq\left(K^p+2K^p\frac d{d-1}+2d\right)N2^{-Np}.
  \end{align*}
  We thus have, setting $c_3:=\left(K^p+2K^p\frac d{d-1}+2d\right)^\frac1p$,
  \begin{equation*}
    \Vert f-\tilde f\Vert_{L^p(\lambda)}\leq c_3 N^\frac1p2^{-N}.
  \end{equation*}
  Notice $c_3$ only depends on $d$ and $p$.

  \end{itemize}
  Letting $c_{d,p} = \max\{c_1,c_2,c_3\}$ yields the result.
   \end{proof}
  %%%%%%%%%%%%%%%%%%%%%%%%%%%%%%%%%%%%%%%%%%%%%
 According to Proposition~\ref{prop:2layers}, the function $\tilde f$ constructed for a given $N\in\NN^*$ can be implemented by a Heaviside neural network with two hidden layers and $W=2(d+1)^2\sum_{i=0}^l|S_i|$ weights.
  Using Lemma \ref{lemma:Si}, we obtain
  \begin{align*}
    W= 2(d+1)^2\sum_{i=0}^l|S_i|&\leq2(d+1)^2\sum_{i=0}^ldK^{-i}2^{i(d-1)+Nd+1}\\
    &= 2^{Nd+2}d(d+1)^2\sum_{i=0}^l\left(\frac{2^{d-1}}K\right)^i.
  \end{align*}
  We let, for all $N\in\NN^*$,
  \begin{equation}\label{WN_eq}
  W_N := 2^{Nd+2}d(d+1)^2\sum_{i=0}^l\left(\frac{2^{d-1}}K\right)^i.
  \end{equation}
 Although we do not make the dependence explicit, $W_N$ also depends on $d$ and $p$. Observe that for all $d\geq 1$: $(W_N)_{N\in\NN^*}$ is non-decreasing and $\lim_{N\rightarrow +\infty} W_N = + \infty$.
 
 \begin{lemma}\label{lastLemma}
 With the above notation: For any $+\infty >p\geq 1$, there exist constants $W'_{\min}, c'_{d,p} >0$ depending only on $d$ and $p\geq 1$ such that for all $N$ satisfying $W_N\geq W'_{\min}$
  \[\Vert f-\tilde f\Vert_{L^p(\lambda)} \leq c'_{d,p} ~g(W_{N+1})
  \]
  where $\tilde f$ is constructed for the parameters $N$, $p$ and $d$, and where for all $W\geq 1$,
  \[g(W) = 
  \begin{cases}
      W^{-1/d}&\text{if $(d-1)p<d$,}\\
      W^{-\frac1{p(d-1)}}&\text{if $(d-1)p>d$,}\\
      W^{-1/d}\log W&\text{if $(d-1)p=d$.}
    \end{cases}
  \]
  \end{lemma}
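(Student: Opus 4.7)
The plan is to invert the map $N \mapsto W_N$ by bounding $W_N$ (and $W_{N+1}$) from above in terms of $N$, then to substitute into the error estimate of Lemma~\ref{lemma8}. From the definition \eqref{WN_eq}, the essential quantity is the geometric sum $T_N := \sum_{i=0}^l (2^{d-1}/K)^i$, whose behavior splits into three cases according to whether $2^{d-1}/K$ is less than, greater than, or equal to $1$; as established in the proof of Lemma~\ref{lemma8}, this trichotomy coincides exactly with the sign of $p(d-1)-d$. The threshold $W'_{\min}$ will be chosen at the end so that the condition $W_N \geq W'_{\min}$ translates into $N \geq N_0$ for some $N_0$ large enough to absorb all the $O(1)$ slacks appearing below.

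In the first case $p(d-1) < d$, the ratio $2^{d-1}/K$ is strictly less than $1$, so $T_N$ is bounded by the convergent geometric series, yielding $W_{N+1} \leq C_1 \cdot 2^{Nd}$. Raising to the $-1/d$ power and combining with the bound $\|f-\tilde f\|_{L^p(\lambda)} \leq c_{d,p} \, 2^{-N}$ from Lemma~\ref{lemma8} produces the desired estimate. In the second case $p(d-1) > d$, we have $2^{d-1}/K > 1$, so $T_N \leq C \, (2^{d-1}/K)^{l+1}$. Using $l \leq N/\beta + 1$ and simplifying the exponent $d + (d-1-\beta)/\beta = (d-1)(\beta+1)/\beta$ yields $W_{N+1} \leq C_2 \cdot 2^{N(d-1)(\beta+1)/\beta}$. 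Inverting with exponent $1/(p(d-1))$ and using the identity $(\beta+1)/(\beta p) = (1+1/\beta)/p$ gives exactly the rate $2^{-N(1+1/\beta)/p}$ produced by Lemma~\ref{lemma8}.

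The borderline case $p(d-1)=d$ is the most delicate. Here $\beta = d-1$ and $2^{d-1}/K = 1$, so $T_N = l+1$, leading to $W_{N+1} \leq C_3 \cdot N \cdot 2^{Nd}$. To recover the logarithmic factor in $g(W) = W^{-1/d}\log W$, I need \emph{both} this upper bound (to lower-bound $W_{N+1}^{-1/d}$) \emph{and} a crude lower bound $W_{N+1} \geq 4 d (d+1)^2 \, 2^{(N+1)d}$ coming from the $i=0$ term in \eqref{WN_eq} (so that $\log W_{N+1} \geq c_0 \, N$ for $N$ large). Multiplying these two estimates gives $W_{N+1}^{-1/d} \log W_{N+1} \geq c' \cdot 2^{-N} N^{(d-1)/d}$, which, since $1/p = (d-1)/d$ in this case, matches the $N^{1/p} 2^{-N}$ rate of Lemma~\ref{lemma8}.

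The main technical obstacle is precisely this borderline case, where two opposing bounds on $W_{N+1}$ must be combined to reproduce the logarithmic factor; the two outer cases reduce to single-direction estimates of the geometric sum. The remainder is bookkeeping: one chooses $W'_{\min}$ large enough that the corresponding $N_0$ absorbs all the $O(1)$ terms across the three cases, and sets $c'_{d,p}$ to the maximum of the three case-dependent constants.
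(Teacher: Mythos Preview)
Your argument is correct, and for the two non-critical cases $p(d-1)<d$ and $p(d-1)>d$ it coincides with the paper's proof: bound the geometric sum $T_N$ above, write the resulting inequality for $N+1$, and invert. The difference lies in the borderline case $p(d-1)=d$. The paper proceeds by inverting the relation $W_N \leq c\,N\,2^{Nd}$ in the Lambert-$W$ style: setting $\tilde W_N \leq \tilde N e^{\tilde N}$, it proves $\tilde N \geq \log \tilde W_N - \log\log \tilde W_N$, feeds this lower bound on $N$ into the error estimate $c_{d,p}\,N^{1/p}2^{-N}$, and uses the eventual monotonicity of $t\mapsto t^{1/p}2^{-t}$ to conclude. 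Your route is more direct and avoids the inversion: you bound $g(W_{N+1}) = W_{N+1}^{-1/d}\log W_{N+1}$ from below by combining the upper bound $W_{N+1}\leq C_3\,N\,2^{Nd}$ (which controls $W_{N+1}^{-1/d}$) with the trivial lower bound $W_{N+1}\geq 4d(d+1)^2\,2^{(N+1)d}$ from the $i=0$ term (which controls $\log W_{N+1}$), and then observe that the product recovers $N^{(d-1)/d}2^{-N} = N^{1/p}2^{-N}$. This is a cleaner and shorter argument that exploits the specific form of $g$ rather than solving for $N$; the paper's approach is slightly more general in spirit (it would adapt to other functions of $N$ on the error side) but is noticeably heavier here.
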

 \begin{proof}
  Again, we distinguish three cases depending on the values of $p$ and $d$.
  \begin{itemize}
    \item Suppose that $p(d-1)< d$: if $p=1$, $\frac{2^{d-1}}K=\frac12<1$; if $p>1$, since $\frac{1}{p-1} > d-1$, $\beta>d-1$ and $\frac{2^{d-1}}K=2^{d-1-\beta}<1$. Thus, in both cases $\frac{2^{d-1}}{K}<1$ and for all $N\geq 1$,
    \begin{equation*}
      W_N\leq2^{Nd}\left(\frac{4d(d+1)^2}{1-2^{d-1-\beta}}\right)=:2^{Nd}c''_{d,p}. %\label{eq:weights1}
    \end{equation*}
    Writing the inequality for $N+1$, we obtain
     \begin{equation*}
      W_{N+1}\leq 2^{Nd} 2^d c''_{d,p} \;. %\label{eq:weights1}
    \end{equation*}
   
   That is: $2^{-N} \leq 2 \left(\frac{c''_{d,p}}{W_{N+1}}\right)^{1\slash d}$. Combined with \eqref{majoref-ft_en_N}, this provides
    \begin{equation*}
      \Vert f-\tilde f\Vert_{L^p(\lambda)} \leq 
      2 c_{d,p}\left(\frac{c''_{d,p}}{W_{N+1}}\right)^{1\slash d}= d_{d,p} W_{N+1}^{-1\slash d},
    \end{equation*}
    for $d_{d,p}= 2 c_{d,p} (c''_{d,p})^{1\slash d}$ and all $N\in\NN^*$.
    \item If $p(d-1)> d$, then $\beta<d-1$ and $\frac{2^{d-1}}K = 2^{d-1-\beta}>1$. Thus, reminding the definition of $l$ in \eqref{l_def_qpeuon}, we have for all $N\geq 1$
    \begin{align*}
      W_N&\leq2^{Nd}2^{(d-1-\beta)(l+1)}\left(\frac{4d(d+1)^2}{2^{d-1-\beta}-1}\right)\leq2^{Nd}2^{(d-1-\beta)(N\slash\beta+2)}\left(\frac{4d(d+1)^2}{2^{d-1-\beta}-1}\right)\\
      &=
      2^{N(d+(d-1)/\beta-1)}\left(\frac{4d(d+1)^22^{2(d-1-\beta)}}{2^{d-1-\beta}-1}\right)=:2^{N(1+\frac{1}{\beta})(d-1)}c''_{d,p},%\label{eq:weights2}
    \end{align*}
    for a different constant $c''_{d,p}$.
    Writing again this inequality for $N+1$, we obtain
    \[W_{N+1} \leq c''_{d,p} 2^{(1+\frac{1}{\beta})(d-1)}~ 2^{N(1+\frac{1}{\beta})(d-1)},
    \]
    which we can write $2^{-N(1+\frac{1}{\beta})} \leq 2^{(1+\frac{1}{\beta})} \left( \frac{c''_{d,p}}{W_{N+1}}\right)^{\frac{1}{d-1}}$. This provides
    $$2^{-N\frac{(1+1/\beta)}{p}} \leq 2^{\frac{(1+1/\beta)}{p}} \left( \frac{c''_{d,p}}{W_{N+1}}\right)^{\frac{1}{p(d-1)}}.$$
    Therefore, using \eqref{majoref-ft_en_N}, we obtain
    \begin{equation*}
      \Vert f-\tilde f\Vert_{L^p(\lambda)}\leq
      c_{d,p} 2^{\frac{(1+1/\beta)}{p}} \left(\frac{c''_{d,p}}{W_{N+1}}\right)^{\frac{1}{p(d-1)}}= d'_{d,p} W_{N+1}^{-\frac{1}{p(d-1)}},
    \end{equation*}
    for $d'_{d,p} =c_{d,p}~ 2^{\frac{(1+1/\beta)}{p}}~  (c''_{d,p})^{\frac{1}{p(d-1)}} $ and all $N\in\NN^*$.
    \item If $p (d-1)= d$, then $\beta=d-1$ and  $\frac{2^{d-1}}K=1$. Thus, reminding the definition of $l$ in \eqref{l_def_qpeuon}, we have for all $N\geq 1$
    \begin{align}
      W_N&= 2^{Nd+2}d(d+1)^2(l+1)\leq2^{Nd+2}d(d+1)^2\left(\frac N\beta+2\right) \nonumber \\
      &=2^{Nd}\left(\frac N\beta+2\right)\left(4d(d+1)^2\right)=:2^{Nd}\left(\frac N{d-1}+2\right)c''_{d,p} \nonumber \\
      &\leq2^{d(d-1)\left(\frac N{d-1}+2\right)}\left(\frac N{d-1}+2\right)c''_{d,p} \nonumber \\
      & = \exp\left(d(d-1)\left(\frac N{d-1}+2\right)\log 2\right)\left(\frac N{d-1}+2\right)c''_{d,p}\label{eq:weights3}
    \end{align}
    where $c''_{d,p} = 4d(d+1)^2$. Setting
    \[\tilde W_N:=\frac{d(d-1)W_N\log2}{c''_{d,p}} \qquad \mbox{and}\qquad
      \tilde N:=d(d-1)\left(\frac N{d-1}+2\right)\log 2,
    \]
    we can rewrite \eqref{eq:weights3} as:
   \begin{equation}
    \label{tmpnvet}
     \tilde W_N\leq \tilde N\exp(\tilde N) \;.
   \end{equation}
Since $d\geq 2$, $c''_{d,p}>0$, $(W_N)_{N\in\NN^*}$ is non-decreasing and $\lim_{N\rightarrow +\infty} W_N = +\infty$, there exists $W'_{min}$  such  that, for all $N$ satisfying $W_N\geq W'_{min}$, we have the following:
\begin{equation}\label{lesEqs}
\left\{\begin{array}{l}
\log(\tilde{W}_N)>1 \\
\log(\tilde{W}_{N+1})>2~\log(2)~d(d-1) \\
\frac{\log(\tilde W_{N+1})}{d \log(2)} - \frac{\log \log (\tilde W_{N+1})}{d \log(2)} -2(d-1) > \frac{1}{p\log(2)}\\
\log W_{N+1} \geq \log \left(\frac{d(d-1)\log2}{c''_{d,p}}\right).
\end{array}\right.
\end{equation}
These inequalities will be used latter in the proof and, from now on, we always consider $N$ such that $W_N\geq W'_{min}$.

Let us first show by contradiction that, for all $N$ satisfying $W_N\geq W'_{min}$, \eqref{tmpnvet} implies that
   \begin{equation}\label{eronbetb}
     \tilde N\geq\log \tilde W_N-\log\log \tilde W_N.
   \end{equation}
Indeed, if the latter does not hold
\[\tilde N < \log \tilde W_N-\log\log \tilde W_N,
\]
\[\exp(\tilde N) < \frac{ \tilde W_N} {\log \tilde W_N},
\]
  and therefore, multiplying the two inequalities, since \eqref{lesEqs} implies that $\tilde W_N >0$, $\log \tilde W_N >0$ and  $\log(\log (\tilde W_N)) > 0$,
\[\tilde N \exp(\tilde N)< \tilde W_N.
\]  
The latter being in contradiction with \eqref{tmpnvet}, we have proved that, for all $N$ satisfying $W_N\geq W'_{min}$, \eqref{eronbetb} holds. Using the definition of $\tilde N$, we deduce 
   \begin{eqnarray*}
     N & \geq& \left(\frac{\log\tilde W_N-\log\log \tilde W_N}{d(d-1)\log(2)}-2\right)(d-1) \\
     & = &  \frac{\log(\tilde W_N)}{d \log(2)} - \frac{\log \log \tilde W_N}{d \log(2)} + c,
   \end{eqnarray*}
   for the constant $c= -2(d-1) < 0$. Since $(W_N)_{N\in\NN}$ is non-decreasing, for all $N$ satisfying $W_N\geq W'_{min}$, $W_{N+1} \geq W'_{min}$ and the inequality also holds for $N+1$. That is
   \begin{equation}\label{jerbutb}
       N+1 \geq \frac{\log(\tilde W_{N+1})}{d \log(2)} - \frac{\log \log \tilde W_{N+1}}{d \log(2)} + c.
   \end{equation}
   Using \eqref{majoref-ft_en_N}, we obtain:
  \[\Vert f-\tilde f\Vert_{L^p(\lambda)}\leq c_{d,p} N^{\frac{1}{p}} 2^{-N} \leq 2 c_{d,p} (N+1)^{\frac{1}{p}}2^{-(N+1)}.
  \]
  Since, for $t>\frac{1}{p\log(2)}$, the function $t\longmapsto t^{\frac{1}{p}}2^{-t}$ is non-increasing, using \eqref{jerbutb} and \eqref{lesEqs} and the fact that $- \frac{\log \log \tilde W_{N+1}}{d \log(2)} + c <0$, we obtain
  \begin{eqnarray*}\Vert f-\tilde f\Vert_{L^p(\lambda)} & \leq &  2 c_{d,p} \left(\frac{\log(\tilde W_{N+1})}{d \log(2)}\right)^{\frac{1}{p}}2^{-\frac{\log(\tilde W_{N+1})}{d \log(2)}} 2^{\frac{\log \log \tilde W_{N+1}}{d \log(2)}}2^{-c}, \\
  & = & \left( \frac{2^{1-c} c_{d,p}}{(d\log(2))^{1/p}}\right) ~ (\log \tilde W_{N+1})^{\frac{1}{p}+\frac{1}{d}} ~ \tilde W_{N+1}^{-\frac{1}{d}}\\
  & = & \left( \frac{2^{1-c} c_{d,p}}{(d\log(2))^{1/p}}\right) ~  \tilde W_{N+1}^{-\frac{1}{d}} ~ \log \tilde W_{N+1}, \\
  \end{eqnarray*}
  since $p(d-1) = d$ implies $\frac{1}{p}+\frac{1}{d} = 1$. Finally, using the definition of $\tilde W_{N}$ and \eqref{lesEqs}, we obtain
  \[\Vert f-\tilde f\Vert_{L^p(\lambda)}  \leq d''_{d,p} W_{N+1}^{-\frac{1}{d}} ~ \log W_{N+1},
  \]
 for the  constant $d''_{d,p} = 2 \left( \frac{2^{1-c} c_{d,p}}{(d\log(2))^{1/p}}\right) \left(\frac{d(d-1)\log2}{c''_{d,p}}\right)^{-1/d}$ and all $N\in\NN^*$ such that $W_N \geq W'_{min}$. Notice $d''_{d,p}$ only depends on $d$ and $p$.
  \end{itemize}
  Taking $c'_{d,p} = \max(d_{d,p},d'_{d,p},d''_{d,p})$ provides the announced statement.
\end{proof}

  \paragraph{Proof of Proposition \ref{borne sup monotone}.}
  Take $W_{\min} = \max(W'_{\min}, W_1)$ and $c=c'_{d,p}$, where $W'_{\min}$ and $c'_{d,p}$ are from Lemma \ref{lastLemma} and $W_1$ is defined in \eqref{WN_eq}. Let $W\geq W_{\min}$, there exists $N\in\NN^*$ such that
    \[ W_N\leq W < W_{N+1}.
  \]
  Consider the architecture $\mathcal{A}$ with $W$ weights, as in Proposition \ref{prop:2layers}, which allows to represent piecewise-constant functions with less than $\frac{W}{2(d+1)^2}$ cubic pieces. It can represent piecewise-constant functions with $\frac{W_N}{2(d+1)^2}$ pieces.
   
  For any $f\in\mathcal{M}^d$, the function $\tilde f$ obtained for the parameter $N$ is a piecewise-constant function with at most $\frac{W_N}{2(d+1)^2}$ pieces, therefore we have $\tilde f\in H_{\mathcal A}$ and, according to Lemma \ref{lastLemma}, $\tilde f$ satisfies
  \[ \Vert f-\tilde f\Vert_{L^p(\lambda)}  \leq c'_{d,p}g(W_{N+1}).
  \]
  Moreover, since $g$ is non-increasing, we have using $c=c'_{d,p}$
  \[\Vert f-\tilde f\Vert_{L^p(\lambda)}  \leq c \  g(W).
  \]
  Therefore, for any $f\in\mathcal{M}^d$,
  \[\inf_{g\in H_{\mathcal{A}}}\Vert f-g\Vert_{L^p(\lambda)} \leq c \ g(W)
  \]
  and so does the supremum over $f$ in $\mathcal{M}^d$.
  
  This concludes the proof of Proposition \ref{borne sup monotone}.

\subsection{Proof of Proposition \ref{borne inf norme infinie}}\label{preuve borne inf norme infinie}

 \textbf{Step 1: we prove the result in dimension $d=2$.}

 We consider the closed disk of radius $1$, centered at $(1,1)$,
  \[
  \mathcal{C} = \left\{x \in \R^2: \sum_{i=1}^2 (x_i-1)^2 \leq 1\right\} \;.
  \]
The intersection between $(0,1)^2$ and the topological boundary $\partial \mathcal{C}$
 of $\mathcal{C}$ is the quarter of circle:
   \[
  \partial \mathcal{C}\cap (0,1)^2 = \left\{x \in (0,1)^2: \sum_{i=1}^2 (x_i-1)^2 = 1\right\} \;.
  \]
 
 We denote by $f : [0,1]^2 \rightarrow \{0,1\}$ the indicator function of the set $\mathcal{C}\cap [0,1]^2$. The set $\mathcal{C}\cap [0,1]^2$, the set $\partial \mathcal{C}\cap (0,1)^2$ and the function $f$ are represented on Figure \ref{fig:counterex2}. 
 
  \begin{figure}
  \begin{center}
   \begin{tikzpicture}[y=1cm,scale=5]
      \draw (0,0) rectangle (1,1);
      \begin{scope}
        \path[clip] (0,1) arc (180:270:1) -- (1,1) -- cycle;
        \foreach \x in {0,0.08,...,2} {
          \draw[thin,densely dotted] (\x,0) ++(-1,0) -- +(1,1);
        }
      \end{scope}
      \draw[thick] (0,1) arc (180:270:1);
      \node[fill=white,inner sep=5pt] at (0.15,0.15) {\scalebox20};
      \node[fill=white,inner sep=5pt] at (0.6,0.6) {\scalebox21};
      \node[left] at (0,0) {0};
      \node[left] at (0,1) {1};
      \node[right] at (1,0) {1};
      \node[right] at (1,1){(1,1)};
    \end{tikzpicture}
    \caption{\label{fig:counterex2}The set $\mathcal C$, the set $\partial \mathcal{C}\cap (0,1)^2$ and the indicator function $f$.}
  \end{center}
\end{figure}
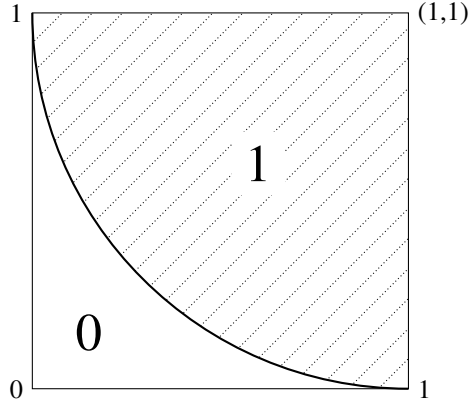

  Since no point in $\mathcal{C}^c \cap [0,1]^2$ has all its coordinates strictly larger than those of a point in $\mathcal{C}$, we have $f\in\mathcal{M}^2$ (monotonic functions of $2$ variables). We consider an arbitrary neural network architecture $\mathcal{A}$ and $g \in H_{\mathcal{A}}$. 
  
  Let $W\geq1$ be the number of weights in the architecture $\mathcal{A}$. As is well known for Heaviside neural networks, there exist $K\in\NN$ with $K\leq2^W$, reals $\alpha_j$ and polygons $A_j\subset [0,1]^2$, for $j\in\{1, \ldots,K\}$, such that for all $x\in[0,1]^2$ $$g(x)=\sum_{j=1}^{K}\alpha_j\mathds1_{A_j}(x).$$ 
  Moreover, $(A_j)_{1\leq j \leq K}$ form a partition of $[0,1]^2$. 
  
  The proof relies on the fact (proved afterwards) that, if $\Vert f-g\Vert_{\infty}<\frac12$ then $\partial\mathcal C\cap(0,1)^2$ is finite. The latter being false, we conclude that $\Vert f-g\Vert_{\infty} \geq \frac12$.
  
  Assume from now on that $\Vert f-g\Vert_{\infty}<\frac12$.
  This implies that $g>\frac12$ on $\mathcal C$, and $g<\frac12$ elsewhere.
  Let us first show that we then have 
  $$\partial\mathcal C\cap(0,1)^2 \ \subset \ \bigcup_{j=1}^K \partial A_j.$$ 
  
  Indeed, if the latter were not true, then there would exist $x\in\partial\mathcal C\cap(0,1)^2$ and $j\in\{1,\ldots,K\}$ such that $x\in\mathring A_j$. Since $\mathcal C$ is closed, $x\in \mathcal C$. Let $\epsilon>0$ be such that $B(x,\epsilon)\subset \mathring A_j$. We have $B(x,\epsilon)\not\subset\mathcal C$ (otherwise, $x$ belongs to the interior of $\mathcal C$ which contradicts $x\in\partial\mathcal C$). Thus there exists $z\in B(x,\epsilon)\setminus\mathcal C$. Since $g>\frac12$ on $\mathcal C$, and $g<\frac12$ elsewhere, we have $$g(z)<\frac12<g(x).$$
  This is not possible since $x,z\in\mathring A_j$ and $g$ is constant on $A_j$. This concludes the proof of the following fact: if $\Vert f-g\Vert_{\infty}<\frac12$ then $\partial\mathcal C\cap(0,1)^2 \subset\bigcup_{1\leq j\leq K}\partial A_j$.
  
  Since the $A_j$ are polygons (recall that we work in dimension $2$), their boundaries are finite unions of closed line segments. Then $\partial\mathcal C\cap(0,1)^2$ is included in a finite union of closed line segments which we denote $S_m$, for $m\in\{1,\ldots,M\}$. The reader may already see that this is in contradiction with the fact that $\partial\mathcal C\cap(0,1)^2$ is a quarter circle. To detail this argument and complete the announced proof, we show that $\partial\mathcal C\cap(0,1)^2 \subset \bigcup_{m=1}^M S_m$  implies that 
  $\partial\mathcal C\cap(0,1)^2$ is finite.
  
  To do so, since when $\partial\mathcal C\cap(0,1)^2 \subset \bigcup_{m=1}^M S_m$ we have
  \[ \bigcup_{m=1}^M \left( \partial\mathcal C\cap (0,1)^2 \cap S_m \right) = \partial\mathcal C\cap (0,1)^2,
  \]
  it suffices to prove that the intersection of  any closed line segment $S$ with $\partial\mathcal C\cap(0,1)^2$ contains at most $2$ points. 
  
  Denote by $S$ a closed line segment: $\mathcal C$ and $S$ are convex and hence connected, thus $\mathcal C\cap S$ is either empty, a singleton or a line segment, as a connected compact subset of $S$. If it is empty, then \textit{a fortiori}, $\partial\mathcal C\cap(0,1)^2\cap S=\emptyset$. If it is not, denote by $y$ and $z$ its extremities (assuming $z=y$ in the case of a singleton). By strict convexity of the function $x \mapsto \sum_{i=1}^2 (x_i-1)^2$, the open line segment $(y,z)$ is included in $\mathring{\mathcal C}$ ( $(y,z)=\emptyset$ in the case of a singleton), hence 
  $$\partial\mathcal C\cap(0,1)^2\cap S\ \subset\  [y,z] \setminus\mathring{\mathcal C}  \ \subset \ \{y,z\}.$$ 
  In any case, we have $|\partial\mathcal C\cap(0,1)^2\cap S|\leq2$.
  
  This concludes the proof of the fact: if $\Vert f-g\Vert_{\infty}<\frac12$ then $\partial\mathcal C\cap(0,1)^2 $ is finite and concludes the proof in the case $d=2$.

\textbf{Step 2: we prove the result in any dimension $d \geq 2$,} by a reduction to dimension $2$.

We define
  \[
  \mathcal{C} = \left\{x \in \R^d: \sum_{i=1}^d (x_i-1)^2 \leq 1\right\} \;,
  \]
  and the function $f:[0,1]^d \to \R$ by
  \[
  f(x_1,\ldots,x_d) = \mathds{1}_{(x_1,\ldots,x_d) \in \mathcal{C}} \;.
  \]
  Consider an arbitrary neural network architecture $\mathcal{A}$ and $g \in H_{\mathcal{A}}$. That is, $g$ can be represented by a Heaviside neural network with $d$ input neurons. Note that
  \begin{align*}
  & \sup_{x_1,x_2,x_3\ldots,x_d \in [0,1]} |f(x_1,x_2,x_3\ldots,x_d) - g(x_1,x_2,x_3\ldots,x_d)| \\
  & \qquad \qquad \geq \sup_{x_1,x_2 \in [0,1]} |f(x_1,x_2,1\ldots,1) - g(x_1,x_2,1\ldots,1)| \\
  & \qquad \qquad \geq \frac{1}{2} \;,
  \end{align*}
  where the last inequality is by the result of Step~1, since $(x_1,x_2) \in [0,1]^2 \mapsto f(x_1,x_2,1\ldots,1)$ is the indicator function of Step~1, and $(x_1,x_2) \in [0,1]^2 \mapsto g(x_1,x_2,1\ldots,1)$ can be represented by a Heaviside neural network with $2$ input neurons. This concludes the proof.

\textbf{Remark.} Note from the above proof that, though we only stated the impossibility result for piecewise-constant activation functions, an analogue statement in fact holds more generally for piecewise-affine activation functions.

\section{Barron space}
\label{appendix_perspectives}

In Section~\ref{sec:Barron} we mentioned that the Barron space introduced in \cite{Barron} is one among several examples for which approximation theory provides ready-to-use lower bounds on the packing number. This space has received renewed attention recently in the deep learning community, in particular because its \say{size} is sufficiently small to avoid approximation rates depending exponentially on the input dimension $d$. Next we detail how to apply Corollary~\ref{main_result} in this case.

\paragraph{Definition of the Barron space.}
We start by introducing the Barron space, as defined in \cite{https://doi.org/10.48550/arxiv.2112.12555}. Let $d \in \NN^*$. For any constant $C > 0$, the Barron space $B_d(C)$ is the set of all functions $f~:~[0,1]^d~\rightarrow~[0,1]$ for which there exist a measurable function $F : \mathbb{R}^d \rightarrow \mathbb{C}$ and some $c \in [-C, C]$ such that, for all $x \in [0,1]^d$,
\begin{align*}
    f(x) = c + \int_{\mathbb{R}^d} (e^{i x\cdot \xi}- 1) F(\xi) \dd \xi \qquad \text{and} \qquad \int_{\mathbb{R}^d} \|\xi\|_2 |F(\xi)| \dd \xi \leq C,
\end{align*}
where $x\cdot \xi$ denotes the standard scalar product in between $x$ and $\xi$.

\paragraph{Known lower bound on the packing number.}
Petersen and Voigtlaender \cite{https://doi.org/10.48550/arxiv.2112.12555} showed a tight lower bound on the log packing number in $L^p(\lambda,[0,1]^d)$ norm, which we recall below.

\begin{prop}[Proposition 4.6 in \cite{https://doi.org/10.48550/arxiv.2112.12555}]
\label{prop:Barron-logpacking}
    Let $1 \leq p \leq +\infty$. There exist constants $\varepsilon_0, c_0 > 0$ depending only on $d$ and $C$ such that for any $\varepsilon \leq \varepsilon_0$,
    \begin{align}
        \label{eq:barron_logpacking}
        \log M(\varepsilon, B_d(C), \|\cdot\|_{L^p}) \geq c_0 \varepsilon^{-1/(\frac{1}{2} + \frac{1}{d})}.
    \end{align}
\end{prop}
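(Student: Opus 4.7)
My plan is to construct an explicit $\varepsilon$-packing of $B_d(C)$ using sign-combinations of translated-and-rescaled smooth bumps, then apply the Varshamov--Gilbert lemma. Fix a smooth bump $\phi:\R^d\to\R$ supported in $(0,1)^d$ with finite Barron seminorm $\|\phi\|_B := \int_{\R^d} \|\xi\|_2 |\hat\phi(\xi)|\,\dd\xi$ and nonzero $L^p$ norm. For an integer $N$ to be tuned later, and each $k \in \{0,\ldots,N-1\}^d$, set $\phi_k(x) := A\,\phi(Nx-k)$, where $A>0$ is an amplitude also to be tuned; the $\phi_k$ have disjoint supports in cubes of side $1/N$. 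For each $\sigma \in \{-1,+1\}^{N^d}$, form $f_\sigma := \sum_k \sigma_k \phi_k$.

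By disjointness of supports, $\|f_\sigma - f_{\sigma'}\|_{L^p}^p = 2^p A^p \|\phi\|_{L^p}^p N^{-d}\, |\{k:\sigma_k\neq\sigma'_k\}|$. The Varshamov--Gilbert lemma yields $\Gamma\subset\{-1,+1\}^{N^d}$ with $|\Gamma|\geq e^{N^d/8}$ such that any two distinct $\sigma,\sigma'\in\Gamma$ differ on at least $N^d/4$ coordinates, so $\|f_\sigma - f_{\sigma'}\|_{L^p}\geq c_1 A$ on $\Gamma$. To ensure $f_\sigma\in B_d(C)$, a short Fourier computation gives $\hat f_\sigma(\xi) = AN^{-d}\,\hat\phi(\xi/N)\sum_k \sigma_k e^{-ik\cdot\xi/N}$, and after the change of variables $\eta = \xi/N$,
\begin{equation*}
\|f_\sigma\|_B \;=\; AN \int_{\R^d} \|\eta\|_2\,|\hat\phi(\eta)|\,|S_\sigma(\eta)|\,\dd\eta, \qquad S_\sigma(\eta) \;:=\; \sum_k \sigma_k e^{-ik\cdot\eta}.
\end{equation*}
Since $\mathbb{E}[|S_\sigma(\eta)|^2] = N^d$ by orthogonality of Rademacher exponentials, Khintchine's inequality yields $\mathbb{E}\|f_\sigma\|_B\leq AN^{(d+2)/2}\|\phi\|_B$ under uniform random $\sigma$. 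Choosing $A \asymp N^{-(d+2)/2}$ so that $AN^{(d+2)/2}\|\phi\|_B \leq C/2$, Markov's inequality shows at least half of the $\sigma$'s satisfy $\|f_\sigma\|_B\leq C$; intersecting with $\Gamma$ still yields a subset $\Gamma'$ of size $\geq \tfrac12 e^{N^d/8}$, every element of which gives rise to an $f_\sigma \in B_d(C)$.

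The family $\{f_\sigma : \sigma \in \Gamma'\}$ is thus a $c_1 A$-packing of $B_d(C)$. Setting $\varepsilon := c_1 A \asymp N^{-(d+2)/2}$ and inverting to $N \asymp \varepsilon^{-2/(d+2)}$, the log packing number satisfies $\log M(\varepsilon,B_d(C),\|\cdot\|_{L^p}) \gtrsim N^d \asymp \varepsilon^{-2d/(d+2)} = \varepsilon^{-1/(1/2+1/d)}$, matching the claimed bound. The main obstacle is the Barron-norm control: the naive bound $|S_\sigma(\eta)|\leq N^d$ gives only $\|f_\sigma\|_B\lesssim AN^{d+1}$ and hence a much weaker rate, so exploiting cancellation among random signs via Khintchine is essential. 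A secondary subtlety is that $B_d(C)$ requires not just a Barron-seminorm bound but also $f_\sigma([0,1]^d)\subset[0,1]$ together with the integral representation $f_\sigma(x)=c+\int(e^{ix\cdot\xi}-1)F(\xi)\,\dd\xi$; the range constraint can be enforced by shrinking $A$ by a constant factor, and a compactly supported smooth $f_\sigma$ admits such a representation via its inverse Fourier transform (with $c=f_\sigma(0)=0$ on our grid).
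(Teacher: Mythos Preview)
The paper does not give a proof of this proposition; it is quoted from Petersen and Voigtlaender and then fed into Corollary~\ref{main_result}. Your overall strategy---random sign combinations of rescaled bumps, with the amplitude $A$ tuned so that the second moment of the exponential sum $S_\sigma$ controls the Barron seminorm---is the right idea and is essentially the approach of the cited reference.

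There is, however, a gap at the intersection step. Markov's inequality, applied with $\sigma$ uniform on all of $\{-1,1\}^{N^d}$, shows that the ``good'' set $\{\sigma : \|f_\sigma\|_B \leq C\}$ has cardinality at least $2^{N^d-1}$. But the Varshamov--Gilbert set $\Gamma$ you fixed beforehand has cardinality only about $e^{N^d/8}$, which is exponentially smaller than $2^{N^d-1}$; nothing prevents $\Gamma$ from lying entirely in the bad set, so the conclusion $|\Gamma'|\geq \tfrac12 e^{N^d/8}$ is unjustified as written. The easy fix is to reverse the order of the two steps: first take the good set $G\subset\{-1,1\}^{N^d}$ of size at least $2^{N^d-1}$, and then run the greedy Varshamov--Gilbert extraction \emph{inside} $G$. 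Each selected $\sigma$ excludes at most $\sum_{j<N^d/4}\binom{N^d}{j}\leq 2^{\theta N^d}$ other sign vectors for some absolute $\theta<1$, so one still obtains an $(N^d/4)$-separated subset of $G$ of size at least $2^{(1-\theta)N^d-1}$, and the remainder of your computation goes through unchanged. (Two minor remarks: what you actually use is Cauchy--Schwarz on $\mathbb{E}|S_\sigma(\eta)|$, not the full Khintchine inequality; and for the range constraint $f_\sigma\in[0,1]$ you should add a constant shift rather than merely shrink $A$, which leaves all pairwise $L^p$ distances unchanged.)
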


\paragraph{Consequence on the approximation rate by piecewise-polynomial neural networks.}

Plugging the lower bound of Proposition~\ref{prop:Barron-logpacking} in Corollary~\ref{main_result}, we obtain the following lower bound on the approximation error of the Barron space by piecewise-polynomial neural networks.

\begin{prop}
Let $1 \leq p < +\infty$, $d \geq 1$. Let $\sigma : \mathbb{R} \rightarrow \mathbb{R}$ be a piecewise-polynomial function on $K \geq 2$ pieces, with maximal degree $\nu \in \mathbb{N}$. Consider the Barron space $B_d(C)$ defined above, with $C>0$. There exist positive constants $c_1, c_2, c_3, W_{\emph{min}}$ depending only on $d$, $p$, $C$, $K$ and $\nu$ such that, for any architecture $\mathcal{A}$ of depth $L \geq 1$ with $W\geq W_{\emph{min}}$ weights, and for the activation $\sigma$, the set $H_{\mathcal{A}}$ (cf. Section~\ref{s:intro}) satisfies
    \label{prop:lb_approximation_barron}
    \begin{align}
    \label{eq:lowerbound_barron}
        \sup_{f \in B_d(C)} \inf_{g \in H_{\mathcal{A}}} \|f - g\|_{L^p(\lambda)} \geq
        \left\{
        \begin{array}{l l}
            c_1  W^{-1-\frac{2}{d}} \log^{-1-\frac{2}{d}}(W) & \text{ if } \nu \geq 2 \,, \\
            c_2  (LW)^{-\frac{1}{2}-\frac{1}{d}} \log^{-\frac{3}{2}-\frac{3}{d}}(W)& \text{ if } \nu=1 \,, \\
            c_3  W^{-\frac{1}{2}-\frac{1}{d}} \log^{-\frac{3}{2}-\frac{3}{d}}(W)& \text{ if } \nu=0 \,.
        \end{array}
        \right.
    \end{align}
\end{prop}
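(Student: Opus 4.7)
The plan is to obtain Proposition~\ref{prop:lb_approximation_barron} as a direct application of the generic Corollary~\ref{main_result}, using the packing-number lower bound for the Barron space recalled in Proposition~\ref{prop:Barron-logpacking}. All the technical work has already been packaged into these two results, so the proof will essentially amount to checking hypotheses and computing exponents.

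First, I would verify that the hypotheses of Corollary~\ref{main_result} are satisfied with $F = B_d(C)$, $\mathcal{X} = [0,1]^d$, $\mu = \lambda$, and $[a,b] = [0,1]$. By the very definition of $B_d(C)$ recalled above, every $f \in B_d(C)$ is a measurable function from $[0,1]^d$ to $[0,1]$, so the range assumption holds and the clipping step inside the proof of Theorem~\ref{general_thm} is compatible. Next, Proposition~\ref{prop:Barron-logpacking} specialized to $p < +\infty$ provides constants $\varepsilon_0, c_0 > 0$ depending only on $d$ and $C$ such that
\[
    \log M\!\left(\varepsilon, B_d(C), \|\cdot\|_{L^p(\lambda)}\right) \geq c_0 \,\varepsilon^{-\alpha}, \qquad 0 < \varepsilon \leq \varepsilon_0,
\]
with $\alpha = \dfrac{1}{\,1/2 + 1/d\,} = \dfrac{2d}{d+2}$. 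This is exactly the polynomial packing-number lower bound that Corollary~\ref{main_result} requires.

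Then I would invoke Corollary~\ref{main_result} with these data. It delivers constants $W_{\min} \in \mathbb{N}^*$ and $c_1,c_2,c_3 > 0$ depending only on $d$, $p$, $c_0$, $\varepsilon_0$, $K$, $\nu$ (hence only on $d,p,C,K,\nu$, absorbing $c_0,\varepsilon_0$ into the $d,C$ dependence) such that, for every $W \geq W_{\min}$, every $L \geq 1$, and every architecture $\mathcal{A}$ of depth $L$ with $W$ weights and activation $\sigma$, the three-case lower bound~\eqref{eq:lowerbound-feed-forward} holds. It then suffices to substitute the value of $\alpha$: a short computation gives $\tfrac{2}{\alpha} = 1 + \tfrac{2}{d}$, $\tfrac{1}{\alpha} = \tfrac{1}{2} + \tfrac{1}{d}$, and $\tfrac{3}{\alpha} = \tfrac{3}{2} + \tfrac{3}{d}$, which plugged into \eqref{eq:lowerbound-feed-forward} yield exactly the three branches of \eqref{eq:lowerbound_barron}.

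I do not anticipate any real obstacle: the argument is a plug-and-play combination of two previously established results, and the only care needed is bookkeeping on the constants (to ensure they depend only on $d,p,C,K,\nu$) and checking that $B_d(C)$ is a $[0,1]$-valued measurable class. The \emph{conceptual} difficulty of the problem lies entirely upstream, in Mendelson's packing inequality (Proposition~\ref{result_mendelson}) used to prove Theorem~\ref{general_thm}, and in Petersen--Voigtlaender's packing lower bound for the Barron space (Proposition~\ref{prop:Barron-logpacking}); both are invoked here as black boxes.
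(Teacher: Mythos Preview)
Your proposal is correct and follows exactly the paper's approach: the paper simply states that the result is obtained by plugging the packing-number lower bound of Proposition~\ref{prop:Barron-logpacking} into Corollary~\ref{main_result}, and your write-up fills in precisely that argument with the hypothesis check and the exponent computation $\tfrac{2}{\alpha}=1+\tfrac{2}{d}$, $\tfrac{1}{\alpha}=\tfrac{1}{2}+\tfrac{1}{d}$, $\tfrac{3}{\alpha}=\tfrac{3}{2}+\tfrac{3}{d}$.
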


\end{document}